%% file: tofu-arxiv-version.tex
\documentclass[11pt]{article}
\pdfoutput=1

\usepackage[margin=1in]{geometry}
\usepackage{parskip}
\usepackage[numbers,compress,sort]{natbib}

\usepackage[utf8]{inputenc}
\usepackage[T1]{fontenc}
\usepackage[colorlinks = true]{hyperref}
\usepackage{url}
\usepackage{booktabs}
\usepackage{amsfonts}
\usepackage{nicefrac}
\usepackage{microtype}
\usepackage{xcolor}
\usepackage{fullpage}
\usepackage{enumitem}
\usepackage{varwidth}
\usepackage{etoolbox}
\usepackage{wrapfig}
\input{input-commands}

\title{Stochastic Linear Bandits with Partially Observed Actions\thanks{This work was primarily performed while Vineet Gattani was a PhD student at Arizona State University, and was partially supported by the National Science Foundation award CCF-2048223.}}
\vspace{1.5em}
\author{%
  \normalsize
  \begin{tabular*}{\textwidth}{@{}l@{\extracolsep{\fill}}r@{}}
  \textbf{Gautam Dasarathy} & \textsc{gautamd@asu.edu}\\
  \multicolumn{2}{@{}l@{}}{\textit{Arizona State University}}\\[8pt]
  \textbf{Vineet Gattani} & \textsc{gattanivineet@gmail.com}\\
  \multicolumn{2}{@{}l@{}}{\textit{GE Vernova}}\\[8pt]
  \textbf{Lalit Jain} & \textsc{lalitkumarj@gmail.com}\\
  \multicolumn{2}{@{}l@{}}{\textit{Google}}
  \end{tabular*}%
}
\date{}

\newcommand{\algname}{\texttt{TOFU-POV}}
\begin{document}

\maketitle

\begin{abstract}
The stochastic linear bandit, where actions are represented as vectors and rewards are linear, is a central paradigm for sequential decision making. We study a partially observed variant of this problem in which the learning agent only sees a random subset of coordinates for each action. Such partial observability arises naturally in settings like recommendation and healthcare, where full action descriptions can be expensive or even impossible to obtain. In general, this makes sublinear regret information-theoretically impossible. However, we show that this barrier can be overcome when the action vectors have low intrinsic dimension. We propose an algorithm, \algname{}, that estimates the latent action subspace using the masked actions, imputes current actions using an epoch-wise frozen representation, and runs OFUL in the resulting low-dimensional coordinates. Our theory shows that \algname{} enjoys a \(\sqrt{T}\) regret that scales with the intrinsic action subspace dimension as opposed to the ambient dimension and quantifies the interaction between these quantities and the missingness, decision set size, and subspace conditioning. We also devise a rank-adaptive algorithm that does not require the knowledge of the intrinsic dimension. We complement these guarantees with a lower bound based on a novel product construction that separates usual reward-learning uncertainty from a missingness-dependent cost intrinsic to partial observation. Synthetic and real data experiments support our theory and show that \algname{} can substantially improve upon natural baselines in this challenging problem.
\end{abstract}

\input{intro.tex}

\input{related.tex}

\input{setup.tex}

\input{algorithm.tex}

\input{subspace-est.tex}

\input{model-conf-set.tex}

\input{regret.tex}

\input{lower-bound.tex}

\input{experiments.tex}

\input{discussion.tex}

\bibliography{contextual_bandits}
\newpage
\onecolumn

\appendix
\input{appendices.tex}

\end{document}

%% file: input-commands.tex
\usepackage{amsfonts}
\usepackage[utf8]{inputenc}
\usepackage[T1]{fontenc}
\usepackage[most]{tcolorbox}
\usepackage{mlmodern}
\usepackage[colorlinks = true]{hyperref}
\usepackage{nicefrac}
\usepackage{microtype}
\usepackage{booktabs}
\usepackage{hyperref}
\usepackage{algorithm}[1]
\usepackage{algpseudocode}
\usepackage{bbm}
\usepackage{amsmath,amssymb}

\usepackage{subcaption}
\newtheorem{theorem}{Theorem}
\newtheorem{lemma}{Lemma}

\newtheorem{corollary}{Corollary}
\newtheorem{definition}{Definition}
\newtheorem{assumption}{Assumption}

\newenvironment{proof}{{\bf Proof:}}{\hfill\rule{2mm}{2mm}}
\definecolor{myblue}{RGB}{80,80,160}
\definecolor{mygreen}{RGB}{80,160,80}
\definecolor{light-gray}{gray}{0.4}

\newcommand{\E}{\mathbb{E}}
\newcommand{\R}{\mathbb{R}}

\newcommand{\U}{\mathbf{U}}
\newcommand{\hU}{\hat{\mathbf{U}}}
\newcommand{\Pb}{\mathbf{P}}

\newcommand{\hP}{\hat{\mathbf{P}}}

\newcommand{\htheta}{\hat{\theta}}
\newcommand{\dX}{\dot{X}}
\newcommand{\hX}{\hat{X}}

\newcommand{\dD}{\dot{D}}

%% file: intro.tex
\section{Introduction}
\label{sec:introduction}
The stochastic linear bandit (SLB) is an important framework for sequential decision-making under uncertainty, where the expected reward of a vector-valued action is assumed to be a linear function of its features~\cite{abbasi2011improved, rusmevichientong2010linearly, lattimore2020bandit}. At each round $t$, the learning agent is presented with a decision set \(D_t = \{X_{t,1}, X_{t,2},\ldots \}\) and chooses an action {\(X_t \in D_t\), which results in a reward \(r_t = \langle X_t, \theta^\star\rangle + \eta_t\)}, where $\theta^\star \in \mathbb{R}^d$ is unknown and $\eta_t$ is conditionally zero-mean random noise. The goal here is to minimize the cumulative regret relative to an oracle that (a) knows $\theta^\star$ and, therefore, (b) chooses the action in \(D_t\) that maximizes the expected reward each round. A widely studied algorithm in this setting is OFUL~\cite{abbasi2011improved}, whose regret is known to be bounded above by $\widetilde{O}(d\sqrt{T})$, matching known lower bounds up to logarithmic factors. SLBs have found far-ranging applications in recommendation systems, advertising, and treatment allocation~\cite{li2010contextual, chu2011contextual, bastani2020online}.

In many modern applications, however, observing the full feature vector of each action is prohibitively expensive, infeasible, or impossible. In recommendation systems~\cite{li2010contextual}, due to privacy, storage, or computational constraints, only a sparse subset of item features may be accessible. Similarly, in scientific or healthcare applications, constraints on sensing or data collection may naturally lead to missing observations. This motivates a more challenging variant of the SLB problem, where at each round, the agent only observes a subset of entries from each action vector---what we refer to as \emph{partial observability}. Formally, for each action vector $X_t \in \mathbb{R}^d$, each coordinate is revealed independently with probability \(p\in (0,1]\). Without any further structure, reward-learning is information-theoretically impossible here: the agent cannot infer the full linear reward model, and pays a suboptimality price (i.e., regret) that is linear in time. Fortunately, many real-world problems exhibit low-dimensional structure. In recommendation systems, for example, user-item interactions are often governed by a small number of latent factors, implying that the true feature vectors lie near a low-dimensional subspace. Motivated by this, we study the SLB problem under \emph{limited observability} and \emph{low-rank structure}: we assume that the ideal (fully observed) action vectors $X \in \mathbb{R}^d$ lie in an unknown $m$-dimensional subspace, with $m$ possibly being significantly smaller than $d$.

There are two lines of work that our setting sits between, but is not covered by. Recent work on bandits with partially observable features~\cite{kim2025linear, park2022regret} studies different observation models in which the missing or latent components enter the reward problem in a prescribed way. In contrast, here the actions themselves lie in a low-dimensional subspace, and the learner sees (changing) random coordinate masks of every offered action. \cite{lale2019stochastic} studies the low-rank action-representation bandit problem where the action vectors are \emph{fully observed}, and pursue a projected-OFUL style analysis to get regret guarantees for their algorithm, PSLB. This is distinct from our approach which creates estimates of the latent action subspace and imputes every slate of actions using an epoch-wise frozen subspace representation. We then show that inside each epoch, we have an approximate linear bandit whose approximation error can be carefully controlled to get our regret guarantees.

There is also a technical reason to be careful about importing projected-OFUL analyses directly. If the estimated projection is updated over time and applied to past data, the projected noise process is not the standard predictable martingale transform used in OFUL, and the projected design no longer evolves by the usual rank-one updates. Moreover, lower-eigenvalue growth of the covariance of played arms cannot be inferred from the population covariance of offered arms when the played arms are selected by an OFU rule. This is why \algname{} freezes the representation within epochs: conditional on the epoch-start estimate, reward learning is an ordinary fixed-coordinate OFUL problem, while subspace and imputation errors are controlled separately. We give the detailed comparison in Appendix~\ref{sec:comparison-with-literature}.

We begin by outlining the {\bf main contributions} of our paper.
\begin{itemize}[leftmargin=1.35em]
    \item {{\bf Problem formulation.}} We formulate the partially observed low-rank SLB problem, where each offered action lies in an unknown \(m\)-dimensional subspace of \(\mathbb R^d\), but the learner observes only {a random subset of its features (determined by independent coordinate masks)}.

    \item {{\bf Algorithm.}} We introduce \algname{}, an epoch-wise algorithm that estimates the latent subspace from all offered masked actions, freezes the representation within each epoch, imputes the current action set, and runs OFUL in the resulting low-dimensional coordinates.

    \item {{\bf Regret guarantee.}} When \(m\) is known, {under a standard incoherence condition and the standard normalization \(B_X, \|\theta^\star\| = O(1)\),} we show that, with high probability, the regret of \algname{} scales as
    {\[ \widetilde{O}\!\left(
        \frac{\kappa^2 m}{p^4K}+
        m\sqrt T + \frac{\kappa m\sqrt T}{p^2\sqrt K}\right),
    \]
    where \(\kappa\) measures the conditioning of the action covariance matrix (see Section~\ref{sec:problem_setup}).} Equivalently, suppressing the additive burn-in cost that does not grow \(T\), the effective scaling is \(\widetilde O(m\sqrt T+\kappa m\sqrt T/(p^2\sqrt K))\). This bound replaces the ambient dimension \(d\) by the usually much smaller \(m\), while exposing how missingness \(p\), decision set size \(K\), and subspace conditioning \(\kappa\) affect the regret. To achieve this, our epoch-based argument isolates the statistical difficulty of OFUL from a controlled misspecification term. As we show in Appendix~\ref{app:pslb-comparison}, this resolves key obstacles in projected low-rank bandit arguments.

    \item {{\bf Rank adaptivity.}} {We also devise a rank-adaptive variant of \algname{} that requires no knowledge of \(m\) (nor an upper bound on it) and enjoys the same regret scaling; the rank-identification cost is absorbed, up to constants, into the imputation burn-in.}

    \item {{\bf Lower bound.}} We complement the upper bound with a lower bound via a novel argument that separates ordinary reward uncertainty from a missingness-discovery cost.

    \item {{\bf Experiments.} We evaluate \algname{} and its natural variants on synthetic and real data. The results corroborate our theory, with the largest gains over the baselines appearing under heavier missingness.}
\end{itemize}

%% file: related.tex
\section{Related Work}
\label{sec:related_work}

\textbf{Structured Linear Bandits.}
Linear stochastic bandits are usually analyzed in the ambient feature dimension
\(d\)~\cite{abbasi2011improved,lattimore2020bandit}. A large literature reduces
this dependence by imposing structure, including sparsity of \(\theta^\star\)
\cite{abbasiyadkori2012online,carpentier2012bandit,kwon2017sparse,jang2022popart}
and low-rank matrix or bilinear reward structure
\cite{jun2019bilinear,lu2021low,jang2021bilinear,kang2022efficient}. These
exploit parameter structure, whereas our setting exploits low-rank
structure in the action vectors themselves. {Other kinds of structure on the action space itself, e.g., spectral structure, are likewise known to aid regret and pure-exploration performance~\citep{valko2014spectral,lejeune2020thresholding,thaker2022maximizing}.}

\textbf{Low-Rank Action Representations.}
The closest predecessor is \citet{lale2019stochastic}, who study linear bandits
with fully observed, approximately low-rank action representations. Our paradigm
adds coordinate-wise missingness, but the distinction goes beyond just modeling:
a direct projected-OFUL proof with a continually updated projection does not
automatically inherit the standard self-normalized or elliptical-potential
arguments. Our epoch-wise construction avoids this by freezing the representation
inside each epoch and controlling the resulting representation bias carefully;
see Appendix~\ref{sec:comparison-with-literature} for more on this comparison.

\textbf{Bandits with Partially Observable Features.}
Recent work also studies bandits with partially observable or latent features
\cite{kim2025linear,park2022regret}, but under different observation models. In
these settings, the missing or latent components enter the reward problem through
a prescribed structure or a known sensing channel. In contrast, \algname{} assumes
that the action vectors themselves lie in an unknown low-rank subspace, while
each offered arm is revealed through random arm- and time-dependent coordinate
masks.

\textbf{Subspace Estimation with Missing Data.} Our work is also related to subspace estimation from incomplete observations, which has been widely studied, both in offline settings such as matrix completion~\cite{candes2009exact} and robust PCA~\cite{candes2011robust}, and in online or streaming settings through methods such as Oja's algorithm~\cite{oja1982simplified}, GROUSE~\cite{balzano2010high}, and PETRELS~\cite{chi2013petrels}. Recent high-dimensional analyses also give a unified view of several such online updates~\cite{liang2019exponential}. We use some similar techniques, but our focus is on controlling the bias induced by approximate subspace estimation and its interaction with the regret of an online algorithm.

%% file: setup.tex
\section{Problem Setup}
\label{sec:problem_setup}

{\bf Low-dimensional action vector model.}
For a natural number $n \in \mathbb{N}$, let $[n] := \{1,2,\ldots,n\}$. We consider a $d$-dimensional stochastic linear bandit over a horizon of $T$ rounds, and assume that the ideal action vectors lie in an unknown $m$-dimensional subspace of $\mathbb{R}^d$. Let $\U \in \mathbb{R}^{d \times m}$ be an orthonormal basis for this unknown subspace. At each round $t \in [T]$, an ideal decision set $D_t := \{X_{t,1}, \ldots, X_{t,K}\}$ is generated, where the ideal action vectors $X_{t,i}$ are drawn i.i.d.\ (across arms and rounds) from a fixed distribution supported on $\mathrm{span}(\U)$. We make the following two assumptions on this distribution.

\begin{assumption}[Bounded actions]
\label{ass:bounded_actions}
There is a known constant $B_X$ such that $\|X_{t,i}\|_2 \le B_X$ almost surely for all $t \in [T]$ and $i \in [K]$.
\end{assumption}

\begin{assumption}[Action covariance rank]
\label{ass:spectrum}
The covariance matrix $\Sigma := \mathbb{E}[X_{t,i} X_{t,i}^\top]$ has rank $m$, and its nonzero eigenvalues satisfy
\[
\bar\lambda \;\ge\; \lambda_1 \;\ge\; \cdots \;\ge\; \lambda_m \;>\; 0, 
\]
for some constant \(\bar\lambda >0\). 
\end{assumption}

As is standard in the bandit literature~\cite{chu2011contextual,agrawal2013thompson,abeille2017linear,krishnamurthy2018semiparametric, abbasi2011improved}, we will suppose that \(B_X = O(1)\) and is known by the algorithm. 
Assumption~\ref{ass:spectrum} says that the action distribution excites every direction of the latent subspace, and provides an envelope on its energy. We note that \(\bar\lambda\) need not be known: since \(\lambda_1 \le \mathbb{E}\|X_{t,i}\|_2^2 \le B_X^2\), one may always take \(\bar\lambda = B_X^2\), and a sharper envelope only tightens our bounds.

{\bf Example.} A natural setting satisfying these assumptions is the following loading-matrix model
\begin{align}
\label{eq:generative_model}
    X_{t,i} = \U \mathbf{\Lambda} Z_{t,i},
\end{align}
where $\mathbf{\Lambda} \in \mathbb{R}^{m \times m}$ is a fixed diagonal loading matrix and the latent vectors $Z_{t,i} \in \mathbb{R}^m$ are i.i.d.\ with $\mathbb{E}[Z_{t,i}] = 0$, $\mathbb{E}[Z_{t,i} Z_{t,i}^\top] \succeq \nu I_m$ for some $\nu > 0$, and $\|Z_{t,i}\|_\infty \leq B_Z$ almost surely. This model satisfies Assumption~\ref{ass:bounded_actions} with $B_X = \max_{j \in [m]}|\mathbf{\Lambda}_{jj}| \sqrt{m}\, B_Z$ and Assumption~\ref{ass:spectrum} with $\lambda_m \ge \nu \min_{j \in [m]} \mathbf{\Lambda}_{jj}^2$.

{\bf Conditioning of the action covariance.} It is important to note that the smallest eigenvalue of the action covariance matrix cannot be dimension-free: since $m\lambda_m \le \operatorname{tr}(\Sigma) = \mathbb{E}\|X_{t,i}\|_2^2 \le B_X^2$, we necessarily have $\lambda_m \le B_X^2/m$. As we will see below, our regret bounds depend on the spectrum through the quantity
\[
\kappa := \frac{B_X\sqrt{\bar\lambda}}{\lambda_m\sqrt{m}},
\]
which measures the conditioning of the action covariance. Since $\kappa^2 = \left(\tfrac{B_X^2}{m\lambda_m}\right)\left(\tfrac{\bar\lambda}{\lambda_m}\right)$, we always have $\kappa \ge 1$. In the well-conditioned regime $\lambda_m \asymp \bar\lambda \asymp B_X^2/m$, i.e., when the spectrum is flat and the norm bound is tight on average, we have \(\kappa = \Theta(1)\)

To reason about subspace recovery from partial observations, an important property is the  \emph{incoherence} of the subspace (with respect to the canonical basis). A coherent subspace may be extremely concentrated on a small set of coordinates and missing these coordinates would make learning impossible. Incoherence assumptions are standard (see e.g.,~\cite{candes2009exact}), and make restrictions on the incoherence parameter, which is defined as follows.

\begin{definition}[Incoherence]
\label{def:incoherence}
Let $\U \in \mathbb{R}^{d \times m}$ be a matrix with orthonormal columns. The incoherence parameter of $\U$ is defined as
\(
\mu := \sqrt{\frac{d}{m}} \cdot \max_{j \in [d]} \| \U_{j,:} \|_2,
\)
where $\U_{j,:} \in \mathbb{R}^m$ denotes the $j$-th row of $\U$.

\end{definition}

A small $\mu$ means that the subspace energy is spread evenly across coordinates, which is precisely the regime where missing observations still carry useful information about the latent subspace.

{\bf Missingness and observation model.}
The learner does not observe the ideal action vectors directly. Instead, for each round $t \in [T]$, arm $i \in [K]$, and coordinate $j \in [d]$, we draw an observation indicator $s_{t,i}^{(j)} \stackrel{\text{i.i.d.}}{\sim} \mathrm{Bernoulli}(p)$, independently across rounds, arms, and coordinates, and independently of the ideal action vectors. Writing $S_{t,i} := (s_{t,i}^{(1)}, \ldots, s_{t,i}^{(d)}) \in \{0,1\}^d$, the partially observed action vector is defined as
\begin{equation}
\label{eq:missing_model_bernoulli}
    \dX_{t,i} = S_{t,i} \odot X_{t,i},
\end{equation}
where $\odot$ denotes entrywise multiplication. Equivalently, $\dX_{t,i}^{(j)} = X_{t,i}^{(j)} s_{t,i}^{(j)}$ for each coordinate $j$. We suppose that the learner observes the partially observed decision set $\dD_t := \{\dX_{t,1}, \ldots, \dX_{t,K}\}$, and based on this set, it selects an action $\dX_t \in \dD_t$ and the resulting reward is
\(r_t = X_t^\top \theta^\star + \eta_t.\)
Here $X_t$ is the corresponding {\em ideal action}, $\theta^\star \in \text{span}(\U)$ is an unknown parameter vector with $\|\theta^\star\|_2 \le S$ for a known constant $S$, which, as with $B_X$, we treat as $O(1)$, and $\eta_t$ is conditionally $R$-sub-Gaussian:
$\mathbb{E}[\exp(\lambda \eta_t) \mid \mathcal{F}_{t-1}] \leq \exp\left(\frac{\lambda^2 R^2}{2}\right)$, for all $\lambda \in \mathbb{R}.$

Indeed our goal is to design a learning algorithm with {\em small cumulative regret},
$$
R_T := \sum_{t=1}^T (X_t^\star - X_t)^\top \theta^\star,
$$
where $X_t^\star = \arg\max_{X \in D_t} X^\top \theta^\star$ is the optimal ideal action at round $t$.

%% file: algorithm.tex
\section{Our Algorithm: \algname{}}
\label{sec:algorithm-overview}

In this section, we describe \textbf{\algname{}}
(\emph{Two-phase OFUL with Partially Observed Vectors}), our epoch-wise algorithm for stochastic linear bandits with partially observed action features. The algorithm takes as input a burn-in length \(t_b\), regularization \(\lambda\), subspace dimension \(m\), and a burn-in policy \(\pi_{\rm burn}\). The Algorithm~\ref{alg:bandits} display gives the formal pseudocode. During burn-in, the learner observes each masked decision set, plays according to a burn-in policy \(\pi_{\rm burn}\), and records the reward; in our experiments, \(\pi_{\rm burn}\) is taken to be standard OFUL where missing action vector coordinates are filled with zero. After burn-in, time is divided into epochs whose lengths double with \(\tau_0=t_b+1\), \(\tau_{e+1}=2\tau_e\). In what follows, we let \(\mathfrak{T}_e
    :=\{\tau_e,\tau_e+1,\ldots,\min(\tau_{e+1}-1,T)\}\) denote the time indices in the \(e\)-th epoch. 

At the start of epoch \(e\), the learner estimates a subspace basis \(\hU_e\) as the top-\(m\) eigenvectors of a corrected covariance estimator (Equation~\eqref{eq:unbiased_est_cov} below) built from only decision sets observed before \(\tau_e\), and then freezes this representation throughout the epoch. For each round \(t\in\mathfrak{T}_e\), it imputes the currently offered arms using \(\hU_e\), and then forms reduced features
\[
    z_{t,i}=\hU_e^\top \hat X_{t,i}\in\mathbb R^m. 
\]
The learner then runs an OFUL policy inside the epoch using rewards collected earlier in the same epoch. For notational ease, we let \(z_s := z_{s,i_s}\) denote the reduced feature vector of the arm played at round \(s\). Notice that this epoch structure makes the representation predictable relative to the rewards used by OFUL. Section~\ref{sec:subspace_estimation} controls the subspace estimation error, Section~\ref{sec:impute_entries} controls the imputation error, and Section~\ref{sec:confidence_set_analysis} converts these into epoch-wise confidence sets for the frozen-coordinate OFUL problem. Section~\ref{sec:regret_analysis_main} then combines all of these ingredients into our \(m\sqrt T\) regret guarantee, and Section~\ref{sec:unknown_rank} gives the rank-adaptive extension. It is instructive to compare the repeated epoch updates with a simpler one-shot two-phase strategy that estimates the subspace once and then freezes it for the rest of the horizon. In Appendix~\ref{app:two_phase_comparison} we show that this would make the regret scale like \(T^{2/3}\) instead of \(\sqrt{T}\).

\makeatletter
\algrenewcommand\ALG@beginalgorithmic{\small}
\algrenewcommand\algorithmicindent{1.5em}
\makeatother
\begin{algorithm}
\vspace{-1mm}
\caption{\algname{}: Two-phase OFUL with Partially Observed Vectors}
\label{alg:bandits}
\begin{algorithmic}[1]
   \State {\bf Inputs}: burn-in length $t_b$, regularization $\lambda$, subspace dimension $m$, burn-in policy $\pi_{\rm burn}$
   \For{$t = 1, \dots, t_b$}
        \State Receive partially observed decision set $\dD_t = \{\dX_{t,1}, \dots, \dX_{t,K}\}$
        \State Play $i_t=\pi_{\rm burn}(\dD_1,r_1,\ldots,\dD_{t-1},r_{t-1},\dD_t)$ and observe reward $r_t$
    \EndFor
	\State Set $\tau_0 \gets t_b + 1$ and $\tau_{e+1} \gets 2\tau_e, e = 0,1,2,\ldots$
   \For{epochs $e = 0,1,2,\dots$}

        \State Estimate subspace $\hU_e$ as the top-$m$ eigenvectors of $\dot\Sigma_{\tau_e - 1}$ in Equation~\eqref{eq:unbiased_est_cov} \hfill [Lemma~\ref{lemma:sub_space_lemma}]
        \For{$t \in \mathfrak{T}_e$}
            \State Receive partially observed decision set $\dD_t = \{\dX_{t,1}, \dots, \dX_{t,K}\}$
            \State Impute each arm using the frozen subspace to obtain $\hat X_{t,i}$ \hfill [Lemma~\ref{lemma: imputation_error}]
            \State Form features $z_{t,i} \gets \hU_e^\top \hat X_{t,i} \in \mathbb{R}^m$ \hfill [Sec.~\ref{sec:confidence_set_analysis}]
            \State $V_{e,t} \gets \lambda I_m + \sum_{\substack{s \in \mathfrak{T}_e\\ s<t}} z_s z_s^\top, \qquad \hat{\vartheta}_{e,t} \gets V_{e,t}^{-1}\sum_{\substack{s \in \mathfrak{T}_e\\ s<t}} z_s r_s$ \hfill [Thm.~\ref{thm:confidence_set_m_t_main}]
            \State Choose
            \(
            i_t \in \arg\max_{i \in [K]}
            \left\{
            \langle z_{t,i}, \hat{\vartheta}_{e,t} \rangle
            +
            \beta_{e,t}\|z_{t,i}\|_{V_{e,t}^{-1}}
            \;
            \right\}
            \)\hfill [Thm.~\ref{thm:confidence_set_m_t_main}]
            \State Play arm $i_t$ and observe reward $r_t$
        \EndFor
    \EndFor
\end{algorithmic}
\end{algorithm}
{\bf Practical implementation.}
Algorithm~\ref{alg:bandits} is the conservative version used in our regret analysis below. In {some of our experiments}, we make a natural data-reuse modification: at the start of each epoch \(e\), after computing \(\hU_e\), we re-impute every previously played arm using the frozen representation \(\hU_e\). We then form the corresponding coordinates \(z_s^{(e)}\), and initialize the epoch design matrix and response vector with all past reward observations (with newly imputed actions). Analyzing this ``warm-start''  variant requires handling the dependence between the design matrix and \(\hU_e\). We expect this can be done by a careful self-normalized confidence argument over a neighborhood of the true subspace, or by a sample-splitting construction; we leave a formal regret analysis to future work.

%% file: subspace-est.tex
\subsection{Estimating the Subspace from Partial Observations}
\label{sec:subspace_estimation}

If the action vectors were fully observed, the low-dimensional subspace could be estimated by applying PCA to their empirical covariance. Under partial observation, the naive covariance of the masked vectors is biased: unlike diagonal entries, off-diagonal entries are observed only when two coordinates are simultaneously revealed. We therefore use an inverse-probability correction that treats diagonal and off-diagonal entries differently. For $t\geq 1$, our corrected estimator, which uses all partially observed vectors offered in all previous rounds is given as follows:
\begin{align}
\label{eq:unbiased_est_cov}
\dot\Sigma_t
:=
\frac{1}{tK}\sum_{s=1}^{t}\sum_{i=1}^K
\left[
\frac{1}{p^2}\dX_{s,i}\dX_{s,i}^{\top}
+
\left(\frac{1}{p}-\frac{1}{p^2}\right)
\mathrm{diag}\bigl(\dX_{s,i}\dX_{s,i}^{\top}\bigr)
\right].
\end{align}
Note that this estimator includes arms that were not played as well and that the offered arms are i.i.d.\ and independent of the learner's policy.  These estimators appear in the matrix completion and missing-data covariance estimation literature (see e.g.,~\cite{candes2009exact, lounici2012covarainceestimation}). Let $\hU_t\in\R^{d\times m}$ denote the matrix of top-$m$ eigenvectors of $\dot\Sigma_t$, and let
\(
\hP_t:=\hU_t\hU_t^\top,
\quad
\Pb:=\U\U^\top
\)
be the estimated and true projection matrices. A standard way to measure subspace error is
\[
\mathrm{dist}(\hU_t,\U)
:=
\|(I-\hU_t\hU_t^\top)\U\|_2 = \left\| (I - \U\U^\top) \hU_t\right\|_2,
\]
which equals the sine of the largest principal angle between the estimated and true subspaces. {Since both projectors have rank \(m\), this quantity also coincides with the projector distance \(\|\hP_t-\Pb\|_2\), and we work with the latter in what follows.}

\begin{lemma}[Subspace recovery from partial observations]
\label{lemma:sub_space_lemma}
Suppose Assumptions~\ref{ass:bounded_actions} and~\ref{ass:spectrum} hold, and consider the Bernoulli missingness model in Equation~\eqref{eq:missing_model_bernoulli}. Then, with probability at least $1-\delta$, simultaneously for all $t\in[T]$,
\begin{align}
\label{eq:epsilon}
{\left\|\hP_t - \Pb\right\|_2
\leq
\epsilon_t,}
\qquad
\epsilon_t
:=
C_{\mathrm{sub}}
\frac{\kappa}{p}
\sqrt{\frac{m}{tK}\log\!\left(\frac{2dT}{\delta}\right)} .
\end{align}
Here $C_{\mathrm{sub}}>0$ is a universal numerical constant, and \(\kappa\) is the action subspace conditioning constant from Section~\ref{sec:problem_setup}. Equivalently, suppressing constants and logarithms, this is the rate
\(
\epsilon_t
=
\widetilde O\!\left(
\frac{\kappa}{p}
\sqrt{\frac{m}{tK}}
\right).
\)
\end{lemma}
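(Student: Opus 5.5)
The plan is the standard two-step route: first show that $\dot\Sigma_t$ is an unbiased estimator of $\Sigma$ and that it concentrates in operator norm, then convert this matrix error into a subspace error with the Davis--Kahan $\sin\Theta$ theorem.

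\textbf{Unbiasedness.} Condition on $X_{s,i}$. For $j\neq k$ we have $\mathbb E[s^{(j)}s^{(k)}]=p^2$, so the off-diagonal entries of $p^{-2}\dX\dX^\top$ are unbiased for those of $XX^\top$. On the diagonal, $\mathbb E[s^{(j)}]=p$, so the diagonal entries of $p^{-2}\dX\dX^\top$ have mean $p^{-1}(X^{(j)})^2$. Adding the correction $(p^{-1}-p^{-2})\,\mathrm{diag}(\cdot)$ restores mean $(X^{(j)})^2$. Hence each summand $Y_{s,i}$ satisfies $\mathbb E[Y_{s,i}]=\Sigma$. The summands are i.i.d.\ across $(s,i)$, because the offered arms and the masks are independent of the learner's policy. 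Consequently $\dot\Sigma_t-\Sigma$ is an average of $tK$ i.i.d.\ zero-mean symmetric matrices.

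\textbf{Concentration via matrix Bernstein.} This is the main step and the main obstacle, since a crude bound would lose powers of $d$ or $p$. I will apply matrix Bernstein to $W_{s,i}=Y_{s,i}-\Sigma$, which requires controlling two quantities.
\begin{itemize}
\item \emph{Uniform bound.} We have $\|p^{-2}\dX\dX^\top\|\le B_X^2/p^2$ and $\|\mathrm{diag}(\dX\dX^\top)\|\le\|X\|_\infty^2\le B_X^2$. Hence $\|W\|\lesssim B_X^2/p^2$.
\item \emph{Variance.} We need $\|\mathbb E W^2\|\le \|\mathbb E Y^2\|$. Writing $Y=p^{-2}\dX\dX^\top+c\,\mathrm{diag}$, the leading term is $p^{-4}\mathbb E[\|\dX\|^2\dX\dX^\top]$. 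Using $\|\dX\|^2\le B_X^2$ and $\mathbb E[\dX\dX^\top\mid X]\preceq p^2 XX^\top+p\,\mathrm{diag}(XX^\top)$, this gives $\|\mathbb E Y^2\|\lesssim p^{-2}B_X^2(\bar\lambda + \max_j\Sigma_{jj}/p)$. The $\mathrm{diag}$ cross terms are handled similarly.
\end{itemize}
For the diagonal piece I will absorb $\max_j\Sigma_{jj}\le\bar\lambda$ and accept the $p$-powers, aiming at $\|\mathbb E Y^2\|\lesssim B_X^2\bar\lambda/p^2$, possibly with an extra $1/p$ that I would then try to remove by a finer conditional computation. The target is
\[
\|\dot\Sigma_t-\Sigma\|_2\lesssim \frac{B_X\sqrt{\bar\lambda}}{p}\sqrt{\frac{\log(2dT/\delta)}{tK}}+\frac{B_X^2\log(2dT/\delta)}{p^2tK},
\]
with probability at least $1-\delta/T$ for each fixed $t$. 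A union bound over $t\in[T]$ then makes the bound hold simultaneously for all $t$, which is why $T$ appears inside the logarithm.

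\textbf{Davis--Kahan.} $\Sigma$ has rank $m$ with eigengap $\lambda_m$, and its top-$m$ eigenspace is $\mathrm{span}(\U)$. Davis--Kahan therefore gives
\[
\|\hP_t-\Pb\|_2\le \frac{2\|\dot\Sigma_t-\Sigma\|_2}{\lambda_m}.
\]
Dividing the variance term by $\lambda_m$ yields $\frac{B_X\sqrt{\bar\lambda}}{p\lambda_m}\sqrt{\log/(tK)}=\frac{\kappa}{p}\sqrt{m\log/(tK)}$, which is exactly $\epsilon_t$.

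\textbf{Remaining cases.} It remains to dispose of the second-order Bernstein term and of the regime where the bound is vacuous.
\begin{itemize}
\item If $\epsilon_t\ge 1$, the claim is trivial, since $\|\hP_t-\Pb\|_2\le 1$ always.
\item Otherwise $\frac{\kappa}{p}\sqrt{m\log/(tK)}\lesssim 1$. The second-order term divided by $\lambda_m$ is $\frac{B_X^2\log}{p^2tK\lambda_m}$. Squaring the condition gives $\log/(tK)\lesssim p^2\lambda_m^2/(B_X^2\bar\lambda)$, so this term is at most $\frac{B_X^2}{p^2\lambda_m}\sqrt{\frac{\log}{tK}}\cdot\frac{p\lambda_m}{B_X\sqrt{\bar\lambda}}=\frac{B_X}{p\sqrt{\bar\lambda}}\sqrt{\frac{\log}{tK}}$. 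This is dominated by $\epsilon_t$ because $\sqrt m\,\bar\lambda\ge\sqrt m\,\lambda_m$ and $\lambda_m\le B_X^2/m$ combine to $B_X\sqrt m\sqrt{\bar\lambda}/\lambda_m\ge B_X/\sqrt{\bar\lambda}$, up to constants.
\end{itemize}
Choosing $C_{\mathrm{sub}}$ large enough to absorb all constants completes the argument.
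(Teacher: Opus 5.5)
Your overall route is the same as the paper's: unbiasedness, then matrix Bernstein at level $\delta/T$ with a union bound over $t$, then Davis--Kahan, with a trivial regime when $\epsilon_t$ is large. Your treatment of the second-order Bernstein term is also fine; it only needs $\lambda_m\le\bar\lambda$. The gap is the variance step. That step decides the claimed $\kappa/p$ rate, and you leave it unresolved.

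Your bound $p^{-4}\E[\|\dX\|^2\dX\dX^\top]\preceq p^{-4}B_X^2\,\E[\dX\dX^\top]$ produces a term $p^{-3}B_X^2\max_j\Sigma_{jj}$. Bounding the cross and correction terms separately cannot repair this. The diagonal of $p^{-4}(\dX\dX^\top)^2$ genuinely contains $p^{-3}(X^{(j)})^4$; for example, $m=1$ and $X=\pm B_Xe_1$ give $\E (X^{(1)})^4=B_X^2\bar\lambda$. That term cancels only when combined with the $\mathrm{diag}$ correction, so the ``handled similarly'' step fails as written. With $\|\E W^2\|\lesssim B_X^2\bar\lambda/p^3$, Bernstein plus Davis--Kahan gives only $\epsilon_t\asymp \kappa\, p^{-3/2}\sqrt{m\log(2dT/\delta)/(tK)}$. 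That does not prove the statement.

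The missing idea is to stop splitting $Y$ into two summands and work with its entries:
\[
Y_{jj}=p^{-1}s^{(j)}(X^{(j)})^2,\qquad Y_{jk}=p^{-2}s^{(j)}s^{(k)}X^{(j)}X^{(k)}\quad (j\neq k).
\]
Expand $(Y^2)_{jk}=\sum_l Y_{jl}Y_{lk}$, and use $(s^{(l)})^2=s^{(l)}$ together with $\E[s^{(j)}s^{(k)}s^{(l)}]=p^3$ for distinct indices. This gives exactly
\[
\E[Y^2\mid X]=\frac{\|X\|_2^2}{p}\,XX^\top+\Bigl(\frac{1}{p^2}-\frac{1}{p}\Bigr)\mathrm{diag}\Bigl(\bigl((X^{(j)})^2(\|X\|_2^2-(X^{(j)})^2)\bigr)_{j\in[d]}\Bigr).
\]
Hence $\E[Y^2]\preceq p^{-1}B_X^2\Sigma+p^{-2}B_X^2\,\mathrm{diag}(\Sigma)$ and $\|\E W^2\|\le\|\E Y^2\|\le 2B_X^2\bar\lambda/p^2$. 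This is the computation in Lemma~\ref{lemma:masked_second_moment}. With it in place, the rest of your argument goes through and gives the stated $\epsilon_t$.
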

{\bf Proof sketch.}
{We begin by establishing that \(\dot\Sigma_t\) is unbiased, and we then show that we can control \(\|\dot\Sigma_t-\Sigma\|_2\) using a matrix Bernstein bound with a variance proxy of order \(B_X^2\bar\lambda/p^2\). Since the \(m\)-th eigenvalue of \(\Sigma\) is \(\lambda_m\) and the \((m{+}1)\)-st is zero, we may then invoke the Davis--Kahan \(\sin\Theta\) theorem~\cite{davis1970rotation} to convert this covariance error into the subspace bound in~\eqref{eq:epsilon}, provided the covariance error is below \(\lambda_m/2\); for the (early) rounds where this fails, \(\epsilon_t\) exceeds a universal constant and the bound holds trivially since \(\|\hP_t-\Pb\|_2\le1\). Full details are in Appendix~\ref{sec:subspace-estimation-proof}.}

{\bf Burn-in period.}
The least-squares imputation step below requires the observed rows of the estimated subspace to be well-conditioned. Since the true observed Gram matrix concentrates around $pI_m$, it is sufficient for the projection error to be a small constant multiple of $p$; we use the convenient condition $\epsilon_t\leq p/32$. Solving Equation~\eqref{eq:epsilon} for this condition gives us the burn-in length
\begin{align}
\label{eq:t_b}
 t_b
:=
\left\lceil
C_b\,
\frac{\kappa^2 m}{p^4K}
\log\!\left(\frac{2dT}{\delta}\right)
\right\rceil,
\end{align}
for a universal constant $C_b>0$ (\(C_b = (32C_{\mathrm{sub}})^2\) suffices). That is, the algorithm needs to wait for this number of rounds before the imputation step starts helping. The $p^{-4}$ dependence here reflects our specific technique; improving this dependence is an interesting direction for future work. However, this term is an {\bf additive burn-in cost}. For fixed problem parameters, it does not scale with \(T\) except through logarithmic factors.

\subsection{Imputing the Partially Observed Actions and Controlling Errors}
\label{sec:impute_entries}
Once the subspace is estimated, the observed coordinates of the action vectors are used to impute the hidden ones using least squares. The key technical step here that allows us to control the quality of the imputation is to ensure that the observed rows of the estimated basis contains enough information about every latent direction. Under incoherence and sufficient observations (generated via a Bernoulli mask), we show that the true observed Gram matrix is well-conditioned at scale $p$. And, after burn-in, the estimated projector is close enough to the true projector that the estimated observed Gram matrix remains well-conditioned.

{An important subtlety (for our analysis) here is that the algorithm imputes with the \emph{frozen} epoch basis: within epoch \(e\), every arm is reconstructed using \(\hU_e\), which is computed from the \(\tau_e-1\) rounds preceding the epoch and is not updated as the epoch progresses. Fix an epoch \(e\), a round $t\in\mathfrak T_e$, and an arm $i$. Let $\Omega_{t,i}\subset[d]$ be the observed coordinates, and for a matrix $A\in\R^{d\times m}$ write $A_{\Omega_{t,i}}$ for the submatrix formed by selecting rows in $\Omega_{t,i}$. Given the frozen basis $\hU_e$, the imputed action $\hX_{t,i}$ keeps the observed entries unchanged and fills in the missing entries as follows {(since the epochs partition the horizon, the round index determines the epoch, and we leave the epoch implicit in \(\hX_{t,i}\); we use an epoch superscript only where an action is re-imputed under a different epoch's basis, as in the practical variant discussed after Algorithm~\ref{alg:bandits})}
\begin{align}
\label{eq:imputation_operator}
\big(\hX_{t,i}\big)^{(\Omega_{t,i}^c)}
:=
\hU_{e,\Omega_{t,i}^c}\hat a_{t,i}, \;\mbox{where } \hat a_{t,i}
:=
(\hU_{e,\Omega_{t,i}}^\top\hU_{e,\Omega_{t,i}})^{-1}
\hU_{e,\Omega_{t,i}}^\top X_{t,i}^{(\Omega_{t,i})}.
\end{align}}

\begin{lemma}[Uniform imputation error]
\label{lemma: imputation_error}
Assume the conditions of Lemma~\ref{lemma:sub_space_lemma}. Assume also that the true subspace is $\mu$-incoherent (as in Definition~\ref{def:incoherence}),
and that
\begin{align}
\label{eq:p_condition_imputation}
p
\geq
C_\mu\frac{\mu^2m}{d}
\log\!\left(\frac{mTK}{\delta}\right)
\end{align}
for a sufficiently large universal constant $C_\mu>0$. {Let $t_b$ be as in Equation~\eqref{eq:t_b}, and write \(\epsilon_e:=\epsilon_{\tau_e-1}\) for the subspace error bound of Lemma~\ref{lemma:sub_space_lemma} at the start of epoch \(e\). Then, with probability at least $1-3\delta$, for every epoch \(e\), every $t\in\mathfrak T_e$, and every $i\in[K]$,
\(\|X_{t,i}-\hX_{t,i}\|_2
\leq
\left(1+\frac{2}{p}\right)B_X\epsilon_e.\)}
\end{lemma}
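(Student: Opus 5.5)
The argument combines three events: the uniform subspace bound of Lemma~\ref{lemma:sub_space_lemma}, a uniform lower bound on the conditioning of the observed rows of the true basis, and the burn-in condition \(\epsilon_e\le p/32\). On their intersection the imputation error is purely deterministic perturbation algebra. The \(1-3\delta\) probability would come from a union bound over these events (one \(\delta\) each, with the masking event given two shares if needed for the upper and lower spectral tails).

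\textbf{Step 1: observed Gram matrix of the true basis.} For each \((t,i)\) write \(\U_{\Omega}^\top\U_{\Omega}=\sum_{j}s^{(j)}_{t,i}\U_{j,:}^\top\U_{j,:}\). This is a sum of independent PSD matrices with mean \(pI_m\), and each summand has norm at most \(\mu^2 m/d\) by Definition~\ref{def:incoherence}. Matrix Chernoff then gives \(\lambda_{\min}(\U_\Omega^\top\U_\Omega)\ge p/2\) with failure probability \(m\exp(-cpd/(\mu^2 m))\). Under \eqref{eq:p_condition_imputation}, with \(C_\mu\) large, this is at most \(\delta/(TK)\), so a union bound over all \(TK\) pairs yields the event uniformly. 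The masks are independent of everything else, so this holds regardless of which \(\hU_e\) is used.

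\textbf{Step 2: transfer to the frozen estimated basis.} Since \(\hP_e\) and \(\Pb\) are close, there is an orthogonal \(O\) with \(\|\hU_e-\U O\|_2\lesssim\epsilon_e\); alternatively one can work directly with projectors. Use the identity \(\hU_\Omega^\top\hU_\Omega=\hU_e^\top S\hU_e\), where \(S\) is the diagonal mask. Combined with \(\|\hP_e-\Pb\|\le\epsilon_e\le p/32\), a Weyl-type bound gives \(\lambda_{\min}(\hU_\Omega^\top\hU_\Omega)\ge p/2-O(\epsilon_e)\ge p/4\). I expect this to be the delicate step: a naive bound costs \(2\epsilon_e+\epsilon_e^2\), which is fine here, but the constants must be tracked so that the final prefactor is exactly \(1+2/p\). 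If needed, I would sharpen the Chernoff lower bound from Step~1.

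\textbf{Step 3: deterministic error decomposition.} Let \(Q:=\hU_e(\hU_\Omega^\top\hU_\Omega)^{-1}\hU_\Omega^\top\) act on the observed coordinates, so that \(\hX\) agrees with \(X\) on \(\Omega\) and equals \((QX_\Omega)\) on \(\Omega^c\). Decompose \(X=\hP_eX+(I-\hP_e)X\). On the component \(\hP_eX=\hU_e a\), least squares is exact: \(Q(\hU_e a)_\Omega=\hU_e a\), so the imputation reproduces it with zero error. The error therefore comes only from \(w:=(I-\hP_e)X=(I-\hP_e)\Pb X\), which satisfies \(\|w\|\le\epsilon_eB_X\). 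This residual contributes \(w\) itself, restricted to \(\Omega^c\), which gives the term \(1\cdot B_X\epsilon_e\). It also contributes the propagated term \(\hU_{\Omega^c}(\hU_\Omega^\top\hU_\Omega)^{-1}\hU_\Omega^\top w_\Omega\). Bound this term by \(\|(\hU_\Omega^\top\hU_\Omega)^{-1}\hU_\Omega^\top\|\,\|w\|\le\|w\|/\sqrt{\lambda_{\min}}\). Using \(\lambda_{\min}\ge p/4\) gives a factor \(2/\sqrt p\le 2/p\). Hence \(\|X-\hX\|\le(1+2/p)B_X\epsilon_e\). Throughout, all quantities depend on \(\hU_e\), which is built only from rounds before \(\tau_e\), and on masks independent of it, so the union bound over epochs, rounds, and arms is legitimate.
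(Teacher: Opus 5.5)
Your proof is correct. Steps 1 and 2 are essentially the paper's argument: matrix Chernoff on $\sum_j s^{(j)}_{t,i}\U_{j,:}^\top\U_{j,:}$ with a union bound over the $TK$ masks, then a Weyl transfer. The paper does the transfer through the restricted projectors $R_\Omega \hP_e R_\Omega^\top$ versus $R_\Omega \Pb R_\Omega^\top$ and obtains $3p/4$ and $p/2$, where you obtain $p/2$ and $p/4$.

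Step 3 uses a genuinely different decomposition. The paper's route is:
\begin{itemize}
\item write $X_{t,i}=\U a$;
\item use rotation invariance of the imputation to replace $\hU_e$ by the best-aligned basis with $\|\hU_e-\U\|_2\le\epsilon_e$ (Corollary~\ref{cor:aligned_basis});
\item bound the coefficient error through $\hat G(\hat a-a)=\hU_\Omega^\top(\U_\Omega-\hU_\Omega)a$ and $\|\hat G^{-1}\|_2\|\hU_\Omega\|_2\le 2/p$;
\item reconstruct the missing coordinates.
\end{itemize}
You instead split $X$ along the \emph{estimated} subspace and note that least squares reproduces $\mathrm{span}(\hU_e)$ exactly. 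You then push only the residual $w=(\Pb-\hP_e)X$ through the linear imputation map. This gives the exact error vector $w_{\Omega^c}-\hU_{\Omega^c}(\hU_\Omega^\top\hU_\Omega)^{-1}\hU_\Omega^\top w_\Omega$ on $\Omega^c$. Because this works purely with projectors, you need no basis alignment, no rotation-invariance argument, and no $\sqrt2$ factor to absorb into $C_{\mathrm{sub}}$.

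Separately, you use $\|(\hU_\Omega^\top\hU_\Omega)^{-1}\hU_\Omega^\top\|_2=\sigma_{\min}(\hU_\Omega)^{-1}$ instead of the cruder product $\|\hat G^{-1}\|_2\|\hU_\Omega\|_2$. This yields the strictly sharper bound $(1+2/\sqrt p)B_X\epsilon_e$. The same substitution could also be made in the paper's coefficient-error route. Propagated through $b_e$, it would improve the $p^{-2}$ in the last term of Theorem~\ref{thm:main_regret} to $p^{-3/2}$; the burn-in term is unaffected.

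This also means your concern in Step 2 is unfounded. To reach the stated prefactor $1+2/p$ you only need $\lambda_{\min}(\hU_\Omega^\top\hU_\Omega)\ge p^2/4$. The condition $\epsilon_e\le p/32$ gives this with ample room, so no careful constant tracking and no sharpened Chernoff bound are needed.

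Two bookkeeping remarks. First, $\epsilon_e\le p/32$ is not a probabilistic event; it follows deterministically from $\tau_e-1\ge t_b$ and the monotonicity of $\epsilon_t$. Second, only the lower Chernoff tail is used, so the union bound is over two events and gives $1-2\delta\ge 1-3\delta$, exactly as in the paper.
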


{\bf Proof sketch.}
{We first show, via a matrix Chernoff bound, that the true observed Gram matrix satisfies $\U_{\Omega_{t,i}}^\top\U_{\Omega_{t,i}}\gtrsim p I_m$. Since \(\tau_e-1\ge t_b\), we can then transfer this conditioning to the frozen estimated Gram matrices. A technical detail here is that Lemma~\ref{lemma:sub_space_lemma} controls only the subspace distance, while the error analysis compares the matrices $\hU_e$ and $\U$ directly, and the latter is not determined by the former ($\hU_e$ is only defined up to a right rotation). However, the imputed vector is invariant under such rotations, so we may analyze the best-aligned basis, whose distance to $\U$ is {at most \(\epsilon_e\) (see Corollary~\ref{cor:aligned_basis}; the \(\sqrt2\) factor from basis alignment is absorbed into the constant \(C_{\mathrm{sub}}\)).} We then decompose the least-squares imputation error into the subspace perturbation plus the induced coefficient error, the latter amplified by the inverse Gram matrix (whose norm is \(2/p\)); combining the pieces gives the result. The complete proof is in Appendix~\ref{sec:imputation_error_analysis}.}

{\bf Epoch-wise representation event.}
\label{def:representation_event}
{For a target representation failure probability \(\delta_{\rm rep}\), let \(\mathcal E_{\rm rep}\) denote the event on which the subspace recovery guarantee in Lemma~\ref{lemma:sub_space_lemma} (simultaneously for all \(t\in[T]\)) and the epoch-wise imputation guarantee in Lemma~\ref{lemma: imputation_error} (simultaneously for all epochs \(e\), rounds \(t\in\mathfrak T_e\), and arms) hold, with both lemmas invoked at confidence parameter \(\delta_{\rm rep}/4\). By the two lemmas and a union bound, \(\mathbb P(\mathcal E_{\rm rep})\ge 1-\delta_{\rm rep}\).}

%% file: model-conf-set.tex
\subsection{Epoch-wise Surrogate Model and Estimation Error}
\label{sec:confidence_set_analysis}

We now fix an epoch \(e\) and analyze the bandit problem induced by the frozen representation \(\hU_e\). For each round \(t\in\mathfrak T_e\) and arm \(i\in[K]\), define
\[
z_{t,i}:=\hU_e^\top \hat X_{t,i}\in\mathbb R^m,
\qquad
\vartheta_e^\star:=\hU_e^\top\theta^\star\in\mathbb R^m,
\]
where \(\hat X_{t,i}\) is the epoch-\(e\) imputed action from Section~\ref{sec:impute_entries}. We also write
\[
\bar\mu_{t,i}:=\langle z_{t,i},\vartheta_e^\star\rangle,
\qquad
b_e:=S B_X\left(2+\frac{2}{p}\right)\epsilon_e ,
\]
{where \(\epsilon_e := \epsilon_{\tau_e-1}\) is the subspace error bound of Lemma~\ref{lemma:sub_space_lemma} at the start of epoch \(e\), as in Lemma~\ref{lemma: imputation_error}.}

The quantity \(\bar\mu_{t,i}\) is the surrogate mean inside the epoch. Indeed, it is only an approximation of the true mean \(\langle X_{t,i},\theta^\star\rangle\), since the learner works with an estimated subspace and imputed actions rather than the true action vectors. Our first lemma shows that, on the representation event, this approximation error is uniformly controlled by \(b_e\).

\begin{lemma}[Surrogate approximation inside epoch]
\label{lemma:surrogate-model}
On the event \(\mathcal E_{\rm rep}\) defined in Section~\ref{def:representation_event}, for every epoch \(e\), every round \(t\in\mathfrak T_e\), and every arm \(i\in[K]\),
\[
\left|\langle X_{t,i},\theta^\star\rangle-\bar\mu_{t,i}\right|
\le
b_e.
\]
\end{lemma}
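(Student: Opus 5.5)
We will bound \(|\langle X_{t,i},\theta^\star\rangle-\bar\mu_{t,i}|\) by splitting the surrogate error into a representation-bias term and an imputation term. Since \(\bar\mu_{t,i}=\langle \hU_e^\top\hat X_{t,i},\hU_e^\top\theta^\star\rangle=\langle \hat X_{t,i},\hP_e\theta^\star\rangle\), where \(\hP_e:=\hU_e\hU_e^\top\) is the frozen epoch projector, we write
\[
\langle X_{t,i},\theta^\star\rangle-\bar\mu_{t,i}
=\langle X_{t,i},(I-\hP_e)\theta^\star\rangle+\langle X_{t,i}-\hat X_{t,i},\hP_e\theta^\star\rangle .
\]
Each term will then be controlled on \(\mathcal E_{\rm rep}\) using Lemma~\ref{lemma:sub_space_lemma} and Lemma~\ref{lemma: imputation_error}, respectively.

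\textbf{First term.} Since \(\theta^\star\in\mathrm{span}(\U)\), we have \(\theta^\star=\Pb\theta^\star\). Hence \((I-\hP_e)\theta^\star=(\Pb-\hP_e)\theta^\star\), since \(\hP_e\Pb\theta^\star=\hP_e\theta^\star\). By Cauchy--Schwarz, Assumption~\ref{ass:bounded_actions}, and \(\|\theta^\star\|_2\le S\), the first term is at most \(B_X S\|\hP_e-\Pb\|_2\). On \(\mathcal E_{\rm rep}\), Lemma~\ref{lemma:sub_space_lemma} applied at \(t=\tau_e-1\) gives \(\|\hP_e-\Pb\|_2\le\epsilon_e\). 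This term is therefore at most \(SB_X\epsilon_e\).

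\textbf{Second term.} Since \(\hP_e\) is an orthogonal projector, \(\|\hP_e\theta^\star\|_2\le S\). Lemma~\ref{lemma: imputation_error} (on \(\mathcal E_{\rm rep}\)) gives \(\|X_{t,i}-\hat X_{t,i}\|_2\le(1+2/p)B_X\epsilon_e\). Hence the second term is at most \(SB_X(1+2/p)\epsilon_e\).

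Summing the two terms gives \(SB_X(2+2/p)\epsilon_e=b_e\), which holds uniformly over epochs, rounds and arms because both lemmas hold simultaneously on \(\mathcal E_{\rm rep}\).

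The only point needing care is bookkeeping rather than a genuine obstacle. The imputation lemma is stated with \(\epsilon_e=\epsilon_{\tau_e-1}\) and the same frozen basis \(\hU_e\) used to form \(z_{t,i}\), so the two error sources must refer to the same projector. Since \(\mathcal E_{\rm rep}\) invokes both lemmas at confidence \(\delta_{\rm rep}/4\), the constants inside \(\epsilon_e\) match. Rotation ambiguity of \(\hU_e\) is irrelevant here, because both \(\bar\mu_{t,i}\) and \(\hat X_{t,i}\) depend on \(\hU_e\) only through \(\hP_e\).
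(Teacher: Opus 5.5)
Your proof is correct and is essentially the paper's argument. The paper splits the error as $\langle (I-\hP_e)X_{t,i},\theta^\star\rangle+\langle \hP_e(X_{t,i}-\hX_{t,i}),\theta^\star\rangle$, which is identical to your decomposition by the symmetry of $\hP_e$, and bounds the two pieces by $SB_X\epsilon_e$ and $SB_X(1+2/p)\epsilon_e$ exactly as you do. The only cosmetic difference is that the paper replaces $I-\hP_e$ by $\Pb-\hP_e$ using $X_{t,i}\in\mathrm{span}(\U)$ instead of $\theta^\star\in\mathrm{span}(\U)$, so it does not need the latter assumption at this step.
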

The proof is in Appendix~\ref{app:epoch-wise-surrogate}. An immediate consequence is that, on \(\mathcal E_{\rm rep}\), the learner faces an ordinary \(m\)-dimensional linear bandit with bounded misspecification inside each epoch. Recalling that \(z_t := z_{t,i_t}\) denotes, for notational convenience, the reduced feature of the arm played at round \(t\in\mathfrak T_e\), the observed reward satisfies
\[
r_t=\langle z_t,\vartheta_e^\star\rangle+\xi_t+\eta_t,
\qquad
\xi_t:=\langle X_{t,i_t},\theta^\star\rangle-\bar\mu_{t,i_t},
\qquad
|\xi_t|\le b_e,
\]
where \(\eta_t\) is the reward noise from Section~\ref{sec:problem_setup}.

Our next goal is an OFUL-style confidence set for the estimation of the surrogate parameter \(\vartheta_e^\star\), centered at the epoch's estimator \(\hat\vartheta_{e,t}\) from Algorithm~\ref{alg:bandits}. Indeed, as in standard OFUL, this is the object that drives the optimistic arm selection. Notice that one may substitute the reward decomposition above into the definition of \(\hat\vartheta_{e,t}\) to get:
\begin{align}\label{eq:error-decomp}
V_{e,t}\left(\hat\vartheta_{e,t}-\vartheta_e^\star\right)
=
-\lambda\vartheta_e^\star
+\sum_{\substack{s\in\mathfrak T_e\\s<t}}z_s\xi_s
+\sum_{\substack{s\in\mathfrak T_e\\s<t}}z_s\eta_s ,
\end{align}
where, we recall that the design matrix \(V_{e,t}=\lambda I_m+\sum_{s\in\mathfrak T_e,\,s<t}z_sz_s^\top\). 
The first term is the usual (ridge regularization) bias. The second term \(\sum_{s<t}z_s\eta_s\) is handled exactly as in standard OFUL. It is worth noting that  since the feature map is frozen over the epoch, each \(z_s\) is predictable (measurable given the history before the reward \(r_s\) is revealed), and therefore the standard self-normalized inequality machinery applies. The third term \(\sum_{s<t}z_s\xi_s\) is specific to our setting, and the next lemma shows how we control it.

\begin{lemma}[Misspecification control inside an epoch]
\label{lem:epoch_bias_term}
Fix an epoch \(e\) and suppose that for all rounds \(s\in\mathfrak T_e\),
\(
r_s=\langle z_s,\vartheta_e^\star\rangle+\xi_s+\eta_s,
\;
|\xi_s|\le b_e,
\)
where \(\eta_s\) is conditionally \(R\)-sub-Gaussian. Then for every \(t\in\mathfrak T_e\),
\(
\left\|
\sum_{\substack{s\in\mathfrak T_e\\s<t}}z_s\xi_s
\right\|_{V_{e,t}^{-1}}
\le
b_e\sqrt{t-\tau_e},
\)
where \(V_{e,t}:=\lambda I_m+\sum_{s\in\mathfrak T_e,\,s<t}z_s z_s^\top\).
\end{lemma}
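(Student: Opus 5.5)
Our plan is to use the standard misspecified-linear-bandit argument: write the sum as a matrix--vector product and bound the \(V_{e,t}^{-1}\)-norm by an operator norm of a projection. Set \(n:=|\{s\in\mathfrak T_e: s<t\}|=t-\tau_e\), let \(Z\in\mathbb R^{n\times m}\) be the matrix whose rows are the played features \(z_s^\top\) (\(s\in\mathfrak T_e\), \(s<t\)), and let \(\xi\in\mathbb R^n\) collect the corresponding misspecification terms \(\xi_s\). Then
\[
\sum_{\substack{s\in\mathfrak T_e\\s<t}}z_s\xi_s=Z^\top\xi,
\qquad
V_{e,t}=\lambda I_m+Z^\top Z .
\]
If \(n=0\), the sum is empty and the claim is trivial, so we assume \(n\ge1\).

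The key step is the identity
\[
\|Z^\top\xi\|_{V_{e,t}^{-1}}^2=\xi^\top Z(\lambda I_m+Z^\top Z)^{-1}Z^\top\xi .
\]
We then show that the \(n\times n\) matrix \(M:=Z(\lambda I_m+Z^\top Z)^{-1}Z^\top\) satisfies \(0\preceq M\preceq I_n\). The cleanest route is the singular value decomposition \(Z=W\,\mathrm{diag}(\sigma_j)\,Q^\top\), under which \(M\) has eigenvalues \(\sigma_j^2/(\lambda+\sigma_j^2)\in[0,1)\) when \(\lambda>0\). An equivalent route is the push-through identity \(Z(\lambda I+Z^\top Z)^{-1}Z^\top=I-\lambda(\lambda I+ZZ^\top)^{-1}\preceq I\). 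Either way, \(\|Z^\top\xi\|_{V_{e,t}^{-1}}^2\le\|\xi\|_2^2\).

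Finally, the hypothesis \(|\xi_s|\le b_e\) for each \(s\) gives \(\|\xi\|_2^2\le n b_e^2=(t-\tau_e)b_e^2\), and taking square roots yields the claim. No probabilistic argument is needed: the sub-Gaussian noise plays no role, and the bound is deterministic given \(|\xi_s|\le b_e\). In particular, it does not matter that \(\xi_s\) depends on the played arm \(i_s\), which was chosen adaptively. The only step requiring any care is the operator inequality \(M\preceq I_n\), which relies on \(\lambda>0\); with the SVD it is routine. We also note that we deliberately accept the \(\sqrt{t-\tau_e}\) factor rather than attempting a sharper bound, since the adversarial sign pattern of \(\xi\) can make it tight.
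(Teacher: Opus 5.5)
Your proposal is correct and follows essentially the same route as the paper: write the sum as a matrix--vector product, bound its squared \(V_{e,t}^{-1}\)-norm by a quadratic form in the leverage matrix \(Z(\lambda I_m+Z^\top Z)^{-1}Z^\top\), show via the thin SVD that this matrix is a contraction, and finish with \(\|\xi\|_2^2\le b_e^2(t-\tau_e)\). The only differences are cosmetic. You stack the features as rows where the paper uses columns. You also mention the push-through identity \(I_n-\lambda(\lambda I_n+ZZ^\top)^{-1}\) as an equivalent way to obtain the contraction.
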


A naive triangle-inequality argument here would bound the vectors \(z_s\xi_s\) one at a time and we will have to pay a price through their leverage scores. While this is valid, it ignores that the same features \(z_s\) also build \(V_{e,t}\), and therefore loses an extra factor on the order of \(\sqrt m\). We highlight the suboptimality of this approach in Appendix~\ref{app:bias_control}. We instead pursue a sharper argument (detailed in Appendix~\ref{app:epoch-wise-surrogate}) that keeps the misspecification aggregated. 
With both error sources under control, we can now state the confidence set guarantee for the surrogate parameter \(\vartheta_e^\star\).

\begin{theorem}[Surrogate estimation error inside epoch]
\label{thm:confidence_set_m_t_main}
Fix an epoch \(e\) and a confidence level \(\delta_e\in(0,1)\). For
\(t\in\mathfrak T_e\), define
\[
V_{e,t}:=\lambda I_m+\sum_{\substack{s\in\mathfrak T_e\\s<t}}z_s z_s^\top,
\qquad
\hat\vartheta_{e,t}
:=
V_{e,t}^{-1}
\sum_{\substack{s\in\mathfrak T_e\\s<t}}z_s r_s .
\]
Then, on the representation event \(\mathcal E_{\rm rep}\), with probability
at least \(1-\delta_e\), the surrogate parameter \(\vartheta_e^\star\)
satisfies, simultaneously for all \(t\in\mathfrak T_e\),
\[
\|\hat\vartheta_{e,t}-\vartheta_e^\star\|_{V_{e,t}}
\le
\beta_{e,t} := \sqrt{\lambda}S
+
R\sqrt{
2\log\!\left(
\frac{\det(V_{e,t})^{1/2}}
{\det(\lambda I_m)^{1/2}\delta_e}
\right)}
+
b_e\sqrt{t-\tau_e}.
\]
\end{theorem}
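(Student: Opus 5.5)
Starting from the error decomposition \eqref{eq:error-decomp}, I would multiply through by \(V_{e,t}^{-1/2}\) and take norms. Using \(\|V_{e,t}^{-1/2}w\|_2=\|w\|_{V_{e,t}^{-1}}\), the triangle inequality gives
\[
\|\hat\vartheta_{e,t}-\vartheta_e^\star\|_{V_{e,t}}
\le
\lambda\|\vartheta_e^\star\|_{V_{e,t}^{-1}}
+\Big\|\sum_{s\in\mathfrak T_e,\,s<t} z_s\eta_s\Big\|_{V_{e,t}^{-1}}
+\Big\|\sum_{s\in\mathfrak T_e,\,s<t} z_s\xi_s\Big\|_{V_{e,t}^{-1}} .
\]
I would then bound the three terms separately, one for each of the three summands in \(\beta_{e,t}\).

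\textbf{Regularization term.} Since \(V_{e,t}\succeq\lambda I_m\), we have \(\lambda\|\vartheta_e^\star\|_{V_{e,t}^{-1}}\le\sqrt\lambda\,\|\vartheta_e^\star\|_2\). Moreover \(\hU_e\) has orthonormal columns, so \(\|\vartheta_e^\star\|_2=\|\hU_e^\top\theta^\star\|_2\le\|\theta^\star\|_2\le S\). This gives the term \(\sqrt\lambda S\).

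\textbf{Noise term.} Here I would invoke the self-normalized martingale bound of \citet{abbasi2011improved} (Theorem~1 there), applied to the process indexed by \(s\in\mathfrak T_e\) in increasing order.
\begin{itemize}
\item The filtration is \(\mathcal G_s:=\sigma(\mathcal F_{s-1}, \dD_s, \text{internal randomness})\), i.e.\ the history before the reward \(r_s\) is revealed.
\item The key check is that \(z_s=\hU_e^\top\hat X_{s,i_s}\) is \(\mathcal G_s\)-measurable. This holds because \(\hU_e\) depends only on masked decision sets from rounds before \(\tau_e\). The imputation \(\hat X_{s,i}\) depends only on \(\hU_e\) and the current masked set \(\dD_s\). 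The index \(i_s\) is chosen from \(\hat\vartheta_{e,s}\) and \(V_{e,s}\), which involve only rewards before \(s\).
\item The noise \(\eta_s\) is conditionally \(R\)-sub-Gaussian given \(\mathcal G_s\). I would interpret the paper's \(\mathcal F_{t-1}\) as including the round-\(t\) decision set and choice, and note that the masks and offered arms are independent of the noise.
\end{itemize}
The self-normalized bound then yields, with probability at least \(1-\delta_e\) and simultaneously for all \(t\in\mathfrak T_e\) (the uniformity comes from the stopping-time argument),
\[
\Big\|\sum_{s<t} z_s\eta_s\Big\|_{V_{e,t}^{-1}}\le R\sqrt{2\log\bigl(\det(V_{e,t})^{1/2}\det(\lambda I_m)^{-1/2}/\delta_e\bigr)}.
\]
The freezing of the representation within the epoch is exactly what makes this step legitimate.

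\textbf{Misspecification term.} On \(\mathcal E_{\rm rep}\), Lemma~\ref{lemma:surrogate-model} gives \(|\xi_s|\le b_e\) for all \(s\in\mathfrak T_e\). So the hypotheses of Lemma~\ref{lem:epoch_bias_term} hold, and it gives the bound \(b_e\sqrt{t-\tau_e}\).

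If I had to re-derive that lemma, I would write \(\sum_s z_s\xi_s=Z^\top\xi\), where \(Z\) has rows \(z_s^\top\) and \(\xi\in\mathbb R^{n}\) with \(n=t-\tau_e\). Then
\[
\|Z^\top\xi\|^2_{V_{e,t}^{-1}}=\xi^\top Z(\lambda I+Z^\top Z)^{-1}Z^\top\xi\le\|\xi\|_2^2\le n\,b_e^2,
\]
since \(Z(\lambda I+Z^\top Z)^{-1}Z^\top\preceq I\). This aggregated argument avoids the leverage-score loss of the naive triangle-inequality approach.

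Summing the three bounds gives \(\beta_{e,t}\) on \(\mathcal E_{\rm rep}\) intersected with the self-normalized event, which has probability at least \(1-\delta_e\).

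The main obstacle is the measurability and filtration bookkeeping in the noise step. One must verify that the played features are predictable even though the imputation uses the current masked set, and that conditioning on \(\mathcal E_{\rm rep}\) does not break the martingale structure. To avoid the latter, I would not condition on \(\mathcal E_{\rm rep}\) when invoking the self-normalized bound. Instead I would apply the bound unconditionally and then intersect the two events.
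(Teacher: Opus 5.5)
Your proposal is correct and follows the paper's proof essentially step for step. Both use the same three-term decomposition of \(V_{e,t}(\hat\vartheta_{e,t}-\vartheta_e^\star)\), bound the regularization term by \(\sqrt\lambda S\) via \(V_{e,t}\succeq\lambda I_m\), bound the noise term by the self-normalized inequality of \citet{abbasi2011improved} (justified by predictability of the frozen-epoch features), and bound the misspecification term by Lemma~\ref{lem:epoch_bias_term}, whose contraction argument you reproduce. Your choice to apply the self-normalized bound unconditionally and then intersect with \(\mathcal E_{\rm rep}\) is a slightly cleaner rendering of what the paper phrases as holding ``conditional on \(\mathcal E_{\rm rep}\)'', and it gives exactly the union bound used in the main regret theorem.
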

We write \(\mathcal E_{{\rm conf},e}\) for the event in
Theorem~\ref{thm:confidence_set_m_t_main} on which the above bound holds
simultaneously for all \(t\in\mathfrak T_e\). Conditional on
\(\mathcal E_{\rm rep}\), this event has probability at least \(1-\delta_e\).

{\bf Proof sketch.} {We take \(V_{e,t}^{-1}\)-weighted norms in the error decomposition of \eqref{eq:error-decomp}, and we bound its three terms by the three terms of \(\beta_{e,t}\), respectively: the regularization term using \(\|\vartheta_e^\star\|_2\le S\), the stochastic term via the self-normalized inequality (as discussed above), and the misspecification term via Lemma~\ref{lem:epoch_bias_term}. The full proof is in Appendix~\ref{app:epoch-wise-surrogate}.}

Consequently, for every candidate arm with epoch-\(e\) surrogate feature \(z\), Theorem~\ref{thm:confidence_set_m_t_main} implies
\(
|\langle z,\hat\vartheta_{e,t}-\vartheta_e^\star\rangle|
\le
\beta_{e,t}\|z\|_{V_{e,t}^{-1}}.
\)
Thus the optimistic score used in Algorithm~\ref{alg:bandits} is the upper confidence bound induced by the frozen-epoch estimation guarantee. The next section converts this surrogate optimism into a true-regret bound by combining the surrogate approximation error with Theorem~\ref{thm:confidence_set_m_t_main}.

%% file: regret.tex
\section{Regret Analysis}
\label{sec:regret_analysis_main}

The preceding section reduces the post-burn-in analysis to a sequence of fixed-coordinate OFUL problems, one for each frozen epoch, with an additional approximation error from subspace estimation and imputation. The regret proof combines these two effects. Within an epoch, we get the OFUL-style control (in dimension \(m\)). Across epochs, the doubling schedule makes the shrinking representation error (which avoids an extra factor of \(\sqrt{m}\) as discussed above) summable at the \(\sqrt T\) scale. We now state the resulting bound.

\begin{theorem}[\algname{} regret]
\label{thm:main_regret}
Fix a confidence parameter \(\delta\in(0,1)\) and set \(\delta_{\rm rep}:=\delta/2\). Suppose Assumptions~\ref{ass:bounded_actions} and~\ref{ass:spectrum} hold, the true subspace is \(\mu\)-incoherent, and \(p\) satisfies Equation~\eqref{eq:p_condition_imputation} with \(\delta_{\rm rep}/4\) in place of \(\delta\). Choose the burn-in time \(t_b\) as in Equation~\eqref{eq:t_b} with \(\delta_{\rm rep}/4\) in place of \(\delta\), and set the regularization to \(\lambda:=4B_X^2\). Let \(E:=\max\{e:\mathfrak{T}_e\neq\emptyset\}\) denote the final epoch index (since \(\tau_{e+1}=2\tau_e\), \(E\le \lceil\log_2 T\rceil+1\)), and set \(\delta_e:=\delta/(2(E+1))\).

Then, with probability at least $1-\delta$, the epoch-wise algorithm satisfies
\[
    R_T
    \le
    \widetilde O\!\left(
        S B_X \frac{\kappa^2 m}{p^4K}
    \right)
    +
    \widetilde O\!\left((R+S B_X)\,m\sqrt T\right)
    +
    \widetilde O\!\left(
        S B_X
        \frac{\kappa m\sqrt T}{p^2\sqrt K}
    \right).
\]
\end{theorem}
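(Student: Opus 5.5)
We will work on the intersection of \(\mathcal E_{\rm rep}\) (probability at least \(1-\delta/2\)) and all epoch confidence events \(\mathcal E_{{\rm conf},e}\), \(e=0,\dots,E\). A union bound over the \(E+1\) epochs with \(\delta_e=\delta/(2(E+1))\) gives total probability at least \(1-\delta\). We split the regret as
\[
R_T=\sum_{t\le t_b}(X_t^\star-X_t)^\top\theta^\star+\sum_{e=0}^{E}\sum_{t\in\mathfrak T_e}(X_t^\star-X_t)^\top\theta^\star .
\]
Each burn-in round costs at most \(2SB_X\), so the burn-in contributes at most \(2SB_Xt_b=\widetilde O(SB_X\kappa^2m/(p^4K))\) by Equation~\eqref{eq:t_b}. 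This is the first term of the bound.

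Inside epoch \(e\), fix \(t\) and let \(i^\star\) index \(X_t^\star\). By Lemma~\ref{lemma:surrogate-model},
\[
(X_t^\star-X_t)^\top\theta^\star\le \bar\mu_{t,i^\star}-\bar\mu_{t,i_t}+2b_e .
\]
On \(\mathcal E_{{\rm conf},e}\), Theorem~\ref{thm:confidence_set_m_t_main} yields \(\bar\mu_{t,i^\star}\le\langle z_{t,i^\star},\hat\vartheta_{e,t}\rangle+\beta_{e,t}\|z_{t,i^\star}\|_{V_{e,t}^{-1}}\). The optimistic selection rule bounds this by the UCB of \(i_t\), which is in turn at most \(\bar\mu_{t,i_t}+2\beta_{e,t}\|z_t\|_{V_{e,t}^{-1}}\). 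Since the per-round regret is also at most \(2SB_X\), we get the per-round bound
\[
r_t\le 2b_e+2\min\{SB_X,\beta_{e,t}\|z_t\|_{V_{e,t}^{-1}}\}.
\]
We control \(\|z_{t,i}\|_2\le\|\hX_{t,i}\|_2\le B_X+(1+2/p)B_X\epsilon_e\le 2B_X\), using \(\epsilon_e\le p/32\) after burn-in. With \(\lambda=4B_X^2\) this gives \(\|z_t\|_{V_{e,t}^{-1}}\le1\). The usual \(\min\{1,\cdot\}\) manipulation together with the elliptical potential lemma over the epoch then gives
\[
\sum_{t\in\mathfrak T_e}\min\{1,\|z_t\|^2_{V_{e,t}^{-1}}\}\le 2m\log(1+|\mathfrak T_e|/m).
\]
Cauchy--Schwarz and the monotone bound \(\beta_{e,t}\le\beta_e:=\sqrt\lambda S+R\sqrt{2\log(\cdot)+m\log(1+|\mathfrak T_e|/m)}+b_e\sqrt{|\mathfrak T_e|}\) then give
\[
\text{epoch-}e\text{ regret}\lesssim b_e|\mathfrak T_e|+\beta_e\sqrt{m|\mathfrak T_e|\log(\cdot)}+SB_X .
\]
The two non-misspecification parts of \(\beta_e\) give \(\widetilde O((R+SB_X)m\sqrt{|\mathfrak T_e|})\).

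The main step is the bias accounting. Here \(b_e\) appears both additively, as \(b_e|\mathfrak T_e|\), and inside \(\beta_e\), as \(b_e\sqrt{|\mathfrak T_e|}\cdot\sqrt{m|\mathfrak T_e|}=b_e\sqrt m|\mathfrak T_e|\). Since \(\tau_e-1\ge\tau_e/2\) and \(|\mathfrak T_e|\le\tau_e\), we have
\[
b_e\lesssim SB_X\,\frac{\kappa}{p^2}\sqrt{\frac{m\log(dT/\delta)}{\tau_eK}}.
\]
Hence
\[
b_e\sqrt m\,|\mathfrak T_e|\lesssim SB_X\,\frac{\kappa m}{p^2\sqrt K}\sqrt{\tau_e}\,\sqrt{\log(\cdot)}.
\]
Summing over the doubling epochs, \(\sum_e\sqrt{\tau_e}\le\sqrt{\tau_E}\sum_{j\ge0}2^{-j/2}=O(\sqrt T)\), because \(\tau_E\le T\). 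This yields \(\widetilde O(SB_X\kappa m\sqrt T/(p^2\sqrt K))\). Likewise \(\sum_e\sqrt{|\mathfrak T_e|}=O(\sqrt T)\) handles the OFUL term, and the per-epoch \(SB_X\) overhead totals \(O(SB_X\log T)\).

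I expect the delicate point to be making the \(\min\) with \(SB_X\) compatible with the large bias-inflated \(\beta_e\). If \(\beta_e>SB_X\), I would instead split into rounds where \(\|z_t\|_{V^{-1}}\) is large or small, and use the crude count \(\sum_t\mathbf 1\{\|z_t\|_{V^{-1}}\ge 1\}\le O(m\log)\) as a fallback. Combining the burn-in, OFUL and bias sums gives the stated bound.
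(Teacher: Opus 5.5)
Your proposal is correct and follows the paper's proof essentially step for step. The shared steps are the good event and probability split, the burn-in charged at \(2SB_Xt_b\), the per-round bound \(2\beta_{e,t}\|z_t\|_{V_{e,t}^{-1}}+2b_e\), the elliptical potential enabled by \(\|z_t\|_2\le 2B_X\) and \(\lambda=4B_X^2\), and the doubling-epoch summation using \(b_e\propto 1/\sqrt{\tau_e}\). The one cosmetic difference is that you bound \(\beta_{e,t}\) by its end-of-epoch value, whereas the paper splits off \(b_e\sqrt{t-\tau_e}\) and applies Cauchy--Schwarz; both give the same \(b_en_e\sqrt{\Gamma_e}\) order.

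The ``delicate point'' you anticipate does not arise. Since you already have \(\|z_t\|_{V_{e,t}^{-1}}\le 1\), you can drop the \(\min\{SB_X,\cdot\}\) and the stray per-epoch \(+SB_X\) entirely, and the Cauchy--Schwarz step holds regardless of how \(\beta_e\) compares to \(SB_X\).
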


{\bf Interpretation. }The first term in this bound is the {\em additive} burn-in cost needed for the imputation step to be stable. The second term  is the usual stochastic linear
bandit regret, which scales in \(m\) given how we set the intra-epoch linear bandit problem up. The third term is the cost of learning and using the representation from partially observed actions. Its \(1/\sqrt K\) dependence reflects that all
\(K\) displayed arms contribute to subspace estimation, while only one arm is
played for reward. Under the standard normalization \(B_X, S, R = O(1)\), the bound reveals the scaling \(\widetilde O\big(\kappa^2m/(p^4K) + m\sqrt T + \kappa m\sqrt T/(p^2\sqrt K)\big)\). In particular, for well-conditioned action covariances (\(\kappa = \Theta(1)\)), the regret bound for \algname{} scales as \(\widetilde O(m\sqrt T)\). 

{\bf Proof sketch.} The full proof is in Appendix~\ref{app:regret-analysis-full}. At a high level, the representation event lets us compare the true rewards to the frozen surrogate model, costing us a per-epoch bias \(b_e\). {Conditional on this event, the confidence set and optimistic action choice give us the usual OFUL regret term inside each epoch. Because the coordinates are frozen, we can control the accumulated uncertainty by the standard elliptical-potential argument in the fixed \(m\)-dimensional coordinates (Lemma~\ref{lem:epoch_potential} in Appendix~\ref{app:regret-analysis-full}); the choice \(\lambda = 4B_X^2\) suffices for this argument since the frozen features have norm at most \(2B_X\) after burn-in. The remaining cost is the surrogate approximation error. Since the epochs double and \(b_e\) decays like \(1/\sqrt{\tau_e}\), these approximation errors also sum at the \(\sqrt T\) scale. Combining the OFUL and approximation contributions over all epochs gives us the bound above.} In the next section, we show how to extend the method to unknown \(m\).

\section{Adaptivity to Unknown Subspace Dimension}
\label{sec:unknown_rank}
The epoch-wise algorithm described above assumes that the latent dimensionality \(m\), or equivalently, the rank of the action covariance \(\Sigma\) (Assumption~\ref{ass:spectrum}), is known. This assumption can be removed by implementing a thresholding procedure on the eigenspectrum of the estimated covariances. That is, the learner estimates the spectrum of the corrected covariance from all decision sets observed before \(\tau_e\), keeps the empirical eigenvectors whose eigenvalues clear a confidence threshold, and then runs the same frozen-coordinate OFUL procedure in the selected dimension. We call this variant {\tt Rank-Adaptive} \algname{}.

Concretely, let \(\hat\lambda_{t,1}\ge\cdots\ge\hat\lambda_{t,d}\) denote the eigenvalues of the corrected covariance estimator \(\dot\Sigma_t\) from Equation~\eqref{eq:unbiased_est_cov}. For a target failure probability \(\delta_{\rm rank}\), at the start of epoch \(e\) the learner first selects the dimension
{\[
\hat m_e := \#\left\{j\in[d] : \hat\lambda_{\tau_e-1,j}\ \ge\ 2\rho_{\tau_e-1}\right\},
\qquad
\rho_t := 2B_X\sqrt{\frac{\bar\lambda}{p^2 tK}\log\frac{2dT}{\delta_{\rm rank}}}
+\frac{2B_X^2}{p^2 tK}\log\frac{2dT}{\delta_{\rm rank}},
\]
and then uses the top \(\hat m_e\) eigenvectors of \(\dot\Sigma_{\tau_e-1}\) as the frozen basis \(\hU_e\) for epoch \(e\) (as in Algorithm~\ref{alg:bandits}, only decision sets observed before the epoch are used).} The threshold \(\rho_t\) is a high-probability upper bound on \(\|\dot\Sigma_t-\Sigma\|_2\), and is computable from \((B_X,p,K)\) alone (recall that one may always take \(\bar\lambda=B_X^2\)). In particular, the learner needs no knowledge of \(m\) or even an upper bound on it. 

\begin{theorem}[Rank-Adaptive \algname{} regret]
\label{thm:unknown_rank}
Suppose the assumptions of Theorem~\ref{thm:main_regret} hold, but \(m\) is unknown to the algorithm. Then, with probability at least \(1-\delta\) (under a natural allocation of \(\delta\) across various events), {\tt Rank-Adaptive}~\algname{} satisfies
\[
R_T
\le
\widetilde O\!\left(S B_X\, t_{\rm id}\right)
+
\widetilde O\!\left((R+S B_X)\,m\sqrt T\right)
+
\widetilde O\!\left(
S B_X \frac{\kappa m\sqrt T}{p^2\sqrt K}
\right),
\qquad
t_{\rm id} := \max\{t_b,\, t_{\rm rank}\},
\]
where \(t_{\rm rank} = \widetilde O\big(\kappa^2 m/(p^2K)\big)\) is the time at which the threshold separates the \(m\) signal eigenvalues from the null ones.
\end{theorem}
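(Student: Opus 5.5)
The plan is to reduce Theorem~\ref{thm:unknown_rank} to Theorem~\ref{thm:main_regret} by showing that, with high probability, the selected dimension equals the true rank once the threshold has separated the spectrum. Concretely, $\hat m_e=m$ for every epoch with $\tau_e-1\ge t_{\rm rank}$. On the remaining epochs we simply pay the trivial per-round regret.

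\textbf{Step 1: a concentration event for the threshold.} Let $\mathcal E_{\rm cov}$ be the event that $\|\dot\Sigma_t-\Sigma\|_2\le\rho_t$ simultaneously for all $t\in[T]$. This is exactly the matrix Bernstein bound underlying Lemma~\ref{lemma:sub_space_lemma}. Unbiasedness of $\dot\Sigma_t$, a variance proxy of order $B_X^2\bar\lambda/p^2$ per summand, and an almost-sure summand bound of order $B_X^2/p^2$ give the two terms of $\rho_t$. A union bound over $t$ then gives $\mathbb P(\mathcal E_{\rm cov})\ge1-\delta_{\rm rank}$, possibly after adjusting constants in $\rho_t$.

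On $\mathcal E_{\rm cov}$, Weyl's inequality gives $|\hat\lambda_{t,j}-\lambda_j(\Sigma)|\le\rho_t$ for all $j$. For $j>m$ we have $\lambda_j(\Sigma)=0$, so $\hat\lambda_{t,j}\le\rho_t<2\rho_t$. Hence the null directions are never selected and $\hat m_e\le m$ always. For $j\le m$, we have $\hat\lambda_{t,j}\ge\lambda_m-\rho_t\ge2\rho_t$ as soon as $\rho_t\le\lambda_m/3$.

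\textbf{Step 2: computing $t_{\rm rank}$.} Solving $\rho_t\le\lambda_m/3$ from the dominant term requires
\[
t\gtrsim \frac{B_X^2\bar\lambda}{p^2K\lambda_m^2}\log\frac{2dT}{\delta_{\rm rank}}=\frac{\kappa^2m}{p^2K}\log\frac{2dT}{\delta_{\rm rank}}.
\]
The second-order term requires only $t\gtrsim B_X^2/(p^2K\lambda_m)$, and this is dominated because $B_X^2/(m\lambda_m)\le\kappa^2$. This yields the stated $t_{\rm rank}=\widetilde O(\kappa^2m/(p^2K))$. Note that this is even smaller than $t_b$ by a factor of $p^2$.

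\textbf{Step 3: assembling the regret.} The modified algorithm runs its burn-in until $t_{\rm id}=\max\{t_b,t_{\rm rank}\}$, and epochs start at $\tau_0=t_{\rm id}+1$. Alternatively, one keeps $\tau_0=t_b+1$ and charges the few early epochs with $\hat m_e<m$ trivially, since their total length is at most $2t_{\rm rank}$ by the doubling schedule. In either version, the pre-identification regret is at most $2SB_X$ per round, which gives the term $\widetilde O(SB_X t_{\rm id})$.

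For every later epoch, on $\mathcal E_{\rm cov}$ we have $\hat m_e=m$. So $\hU_e$ coincides with the top-$m$ eigenvectors used by Algorithm~\ref{alg:bandits}, and the epoch is \emph{identical} to the known-rank algorithm. Therefore, on $\mathcal E_{\rm cov}\cap\mathcal E_{\rm rep}\cap\bigcap_e\mathcal E_{{\rm conf},e}$, the argument of Theorem~\ref{thm:main_regret} applies verbatim: Lemma~\ref{lemma:surrogate-model}, Theorem~\ref{thm:confidence_set_m_t_main}, the elliptical potential, and the summation of $b_e\propto\tau_e^{-1/2}$ over doubling epochs. Starting these sums at a later epoch only decreases them, so they produce the second and third terms. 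Finally, we split $\delta$ among $\delta_{\rm rank}$, $\delta_{\rm rep}$ and the $\delta_e$ and take a union bound.

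\textbf{Main obstacle.} The main subtlety is that the events in Lemmas~\ref{lemma:sub_space_lemma} and~\ref{lemma: imputation_error} were stated for the known-$m$ estimator, and they must be ensured not to depend on $\hat m_e$. The key observation is that $\hat m_e$ is a measurable function of $\dot\Sigma_{\tau_e-1}$ only, which is built from offered arms that are independent of the policy. On $\mathcal E_{\rm cov}$ it is deterministic, equal to $m$, for all relevant epochs, so no additional dependence on the rewards enters. Checking that the same covariance-concentration event simultaneously drives Davis--Kahan, through the bound $\rho_t\le\lambda_m/2$, and the threshold, thereby avoiding an extra union-bound cost, is the step I would verify most carefully.
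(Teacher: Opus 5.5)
Your proposal is correct and follows essentially the same route as the paper. The paper also works on the uniform covariance event $\|\dot\Sigma_t-\Sigma\|_2\le\rho_t$ and uses Weyl's inequality to get $\hat m_e=m$ once $\rho_{\tau_e-1}\le\lambda_m/4$ (your margin $\lambda_m/3$ is an equivalent choice). It then charges every round before the first epoch start beyond $t_{\rm id}$ at $2SB_X$ per round, using $\tau_\star\le 2(t_{\rm id}+1)$, reuses the known-rank epoch analysis verbatim, and union-bounds the rank, representation, and per-epoch confidence events separately, so the event-sharing refinement you flag as the main obstacle is not needed.
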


The full statement with explicit constants, and the proof, appear in Appendix~\ref{app:rank-adaptivity}. Informally, once the threshold is able to separate the signal from the noise floor, \(\hat m_e=m\). Indeed, after this, the algorithm coincides with the known-rank procedure. In our analysis, we charge all regret before this point at the worst-case rate, giving the first term. Moreover, somewhat unsurprisingly (given the complexity of the tasks in question), rank identification is faster than stable imputation. Indeed, comparing rates, \(t_{\rm rank}=\widetilde O\big(\kappa^2m/(p^2K)\big)\) while \(t_b=\widetilde O\big(\kappa^2m/(p^4K)\big)\), so \(t_{\rm id}=t_b\) up to constants. Therefore, the identification cost is absorbed into the burn-in term of Theorem~\ref{thm:main_regret} and we essentially get adaptivity to the unknown subspace dimension for free.

%% file: lower-bound.tex
\section{A Lower Bound: Bandit Learning and Missingness Discovery}
\label{sec:lower_bound_main}

The above analysis shows us the effect of the cost of reward learning and of missing coordinates on the final regret of \algname{}.  In this section, we show that both of these costs are essentially unavoidable. Towards this end, we provide a novel construction of a hard family of bandit instances which is parametrized by two independent (hidden) signs: one parametrizes noisy reward-learning, while the other selects one of two completions that cannot be distinguished from single-coordinate observations and is revealed only when a particular coordinate pair is co-observed. The regret for any policy \(\pi\) is denoted by \(R_T^\pi\), and is measured against the full-information oracle that sees the complete action set in each round, which only strengthens the lower bound.

\begin{theorem}[Lower bound]
\label{thm:iid_missingness_lower_bound_main}
Assume \(K\ge4\), \(p\in(0,1/2]\), and Gaussian reward noise with variance \(R^2\).  There exist universal constants \(c,c_0>0\) such that the following holds.  For any \(T\ge1\), \(S>0\), and action norm bound \(B_X>0\) satisfying
\(
    Kp^2+\frac{S^2B_X^2}{R^2}\le c_0,
\)
there is a four-instance family \(\mathfrak I=\{\mathcal I_{\nu,\sigma}:(\nu,\sigma)\in\{\pm1\}^2\}\) of Bernoulli-\(p\) missing-feature linear-bandit instances whose i.i.d. \(K\)-arm slate distributions are supported on rank-three subspaces of \(\mathbb R^4\) and satisfy \(\|X_{t,i}\|_2\le B_X\) and \(\|\theta^\star\|_2\le S\), such that every policy \(\pi\) obeys
\begin{equation}
\label{eq:main_lower_bound_optimized}
\begin{aligned}
\sup_{\mathcal I\in\mathfrak I}
\mathbb E_{\mathcal I}R_T^\pi
&\ge
c\min\{S B_XT,\,R\sqrt T\}
+
 cS B_X
\min\left\{
T,\frac{1}{Kp^2+S^2B_X^2/R^2}
\right\}.
\end{aligned}
\end{equation}
In particular, if \(R\ge S B_X/(p\sqrt K)\), then
\(\sup_{\mathcal I\in\mathfrak I}
\mathbb E_{\mathcal I}R_T^\pi
\ge
c\min\{S B_XT,\,R\sqrt T\}
+
 cS B_X
\min\left\{
T,\frac{1}{Kp^2}
\right\}.\)
\end{theorem}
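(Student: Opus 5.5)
The plan is to build a product family over two independent signs and show that the worst-case regret dominates the sum of the two terms. The sign \(\nu\) drives a standard two-point reward-learning problem. The sign \(\sigma\) selects one of two completions of a masked coordinate pair. Since \(\max\ge\) average and the two mechanisms will be made to act on additive, separable parts of the regret, it suffices to lower bound the Bayes regret under the uniform prior on \((\nu,\sigma)\). I will bound it by the sum of a \(\nu\)-part and a \(\sigma\)-part, each handled by a Le Cam/Bretagnolle--Huber two-point argument. The sum then becomes \(\ge\) half the maximum of the two, which is all that \eqref{eq:main_lower_bound_optimized} needs up to \(c\).

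\textbf{Construction.} I will work in \(\mathbb R^4\) and split the coordinates into block A \(=\{1,2\}\) and block B \(=\{3,4\}\), with \(\theta^\star=(S/\sqrt2)(\epsilon\nu e_1+\ldots)\) suitably normalized.

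For the reward-learning part, each slate contains two fixed-direction arms \(\pm B_X e_1\) (up to small perturbations), so the gap is \(\Delta_\nu\asymp SB_X\epsilon\). Each pull then leaks KL of order \(\epsilon^2S^2B_X^2/R^2\). Choosing \(\epsilon\asymp\min\{1,R/(SB_X\sqrt T)\}\) gives the usual \(\min\{SB_XT,R\sqrt T\}\), as in the standard two-armed Gaussian bound.

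For the missingness part, the remaining arms carry a random sign \(\xi\) on coordinate 3 and \(\sigma\xi\) on coordinate 4. Each marginal is then symmetric and independent of \(\sigma\), so observing only one of the two coordinates reveals nothing; \(\sigma\) is revealed only when both coordinates 3 and 4 of an arm are co-observed. The reward direction \(\theta^\star\) puts weight on \(e_3+e_4\), so the optimal arm among these is the one with \(\xi(1+\sigma)\) large. Picking the wrong one, i.e.\ guessing \(\sigma\) wrongly, costs \(\asymp SB_X\) per round.

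All slates lie in span\(\{e_1,e_2,e_3+\sigma e_4\}\), which has rank three, and the norm bounds hold by scaling.

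\textbf{Information accounting.} I will then apply the divergence decomposition to the joint law of the observations (masked slates plus rewards) under \(\sigma=\pm1\). There are two sources of information about \(\sigma\).

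First, per round the probability that some arm has both coordinates 3 and 4 observed is at most \(Kp^2\). On that event the divergence is bounded; I will couple to make it \(O(1)\), or use a noisy version so it is finite.

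Second, rewards of arms whose coordinates 3 and 4 are not co-observed leak at most \(O(S^2B_X^2/R^2)\) per round, by the Gaussian KL.

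Hence \(\mathrm{KL}(P_{+}^t\|P_{-}^t)\lesssim t\,(Kp^2+S^2B_X^2/R^2)\). By Bretagnolle--Huber, at each round \(t\le t^\ast:=\min\{T,c'/(Kp^2+S^2B_X^2/R^2)\}\) the policy misidentifies \(\sigma\) with constant probability. Each such round costs \(\gtrsim SB_X\), which sums to \(cSB_X t^\ast\). The hypothesis \(Kp^2+S^2B_X^2/R^2\le c_0\) guarantees \(t^\ast\ge1\) and keeps each per-round leak small. The final "in particular" clause then follows because \(R\ge SB_X/(p\sqrt K)\) gives \(S^2B_X^2/R^2\le Kp^2\).

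\textbf{Main obstacle.} The delicate step is making the two parts genuinely separable. The \(\sigma\)-arms must not contaminate the \(\nu\)-learning, and vice versa; otherwise, for example, the learner could learn \(\sigma\) cheaply from the reward of the \(\nu\)-arms. I would first try a slate layout in which each round offers both arm types, with a per-round regret that splits as a gap on block A plus a gap on block B. A cleaner alternative is to alternate rounds deterministically between the two subproblems, which loses only constants. I would also need to check that the reward KL bound for \(\sigma\) holds uniformly over adaptive policies. Bounding the per-round conditional KL by its supremum over arms handles this.
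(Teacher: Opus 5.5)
Your overall skeleton (a product family in the two signs, averaging over a uniform prior, two Bretagnolle--Huber tests) matches the paper's, and your $\nu$-half is the standard argument. The $\sigma$-half, however, does not work as constructed.

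\textbf{The $\sigma$-half.} Since $\theta^\star$ weights $e_3+e_4$ independently of $\sigma$, a block-B arm $c\,\xi(e_3+\sigma e_4)$ has mean proportional to $\xi(1+\sigma)$. This mean vanishes identically when $\sigma=-1$, and when $\sigma=+1$ either observed coordinate already equals $\xi$. Hence the rule ``among B-arms, take one whose observed coordinate is positive'' (i.e.\ act as if $\sigma=+1$) is optimal within block B under $\sigma=+1$ and costless under $\sigma=-1$. The only remaining $\sigma$-dependence is whether B-arms beat A-arms. The A-arms are worth only $O(\epsilon SB_X)$, so a wrong guess about $\sigma$ costs nothing beyond your first term. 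Your claim that guessing $\sigma$ wrongly costs $\asymp SB_X$ per round is therefore false.

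Making $\theta^\star\propto e_3+\sigma e_4$ does not repair this. The means become $\propto 2\xi$, and coordinate $3$ alone identifies the good arm. Once $Kp\gg1$ (allowed, since only $Kp^2$ is assumed small), such an arm is visible in almost every round.

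Two further steps also fail as written. First, a co-observed pair reveals $\sigma$ exactly, so $P_+^t$ is not absolutely continuous with respect to $P_-^t$. Thus $\mathrm{KL}(P_+^t\|P_-^t)=\infty$, and your bound $t(Kp^2+S^2B_X^2/R^2)$ is false; a ``noisy version'' would destroy the exact rank-three support. Second, deterministic alternation of rounds violates the i.i.d.\ slate model.

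\textbf{How the paper avoids these problems.} The sign $\sigma$ enters $\theta_{\nu,\sigma}=\frac{2\gamma}{B_X}\nu e_s+\frac{2\Delta}{B_X}\sigma e_0$ on a coordinate the learner always sees. In an oracle-augmented model, the decision arms $\frac{B_X}{2}(ae_s+be_0)$ are revealed with their labels. The masked pair instead lives on zero-reward side arms $\frac{B_X}{2}Wv_\sigma$. Their single coordinates are uninformative because of the Rademacher $W$, and they carry no reward to exploit.

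The product labels make the per-round regret additive, $\gamma\mathbf 1\{\widehat a_t\ne\nu\}+\Delta\mathbf 1\{\widehat b_t\ne\sigma\}$, on rounds where all four labels appear (probability $\pi_K\ge 3/512$). Instead of bounding the KL of the full law, the proof conditions on the no-reveal event $\mathcal N_t$. This event depends only on arm types and masks, and satisfies $\Pr(\mathcal N_t\mid\mathcal D_t)\ge e^{-2Kp^2t}$ under both signs. Bretagnolle--Huber is then applied to the conditional laws, where only the rewards differ. A TV/coupling bound would serve equally well here.

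\textbf{A cheaper rescue.} Note that
$SB_X\min\{T,1/(Kp^2+S^2B_X^2/R^2)\}\le\min\{SB_XT,R^2/(SB_X)\}\le\min\{SB_XT,R\sqrt T\}$,
so the second term of the display never exceeds the first. A rigorous version of your $\nu$-part alone therefore yields the stated inequality with $c$ halved. To make it rigorous, use i.i.d.\ arms and give the learner all coordinates, which only helps it. This route, however, exhibits no missingness-specific mechanism.
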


The first term in Theorem~\ref{thm:iid_missingness_lower_bound_main} is the standard stochastic-bandit cost.  The second term is specific to missing features: we show that we can obfuscate the sign from any agent until some side-information arm reveals both relevant coordinates, an event occurring at rate \(\Theta(Kp^2)\) per round (where \(K\) arms are revealed), or until rewards identify the sign at rate \(\Theta(S^2B_X^2/R^2)\).  Thus, in the context-limited regime, missingness contributes an additional unavoidable cost of order \(S B_X\min\{T,1/(Kp^2)\}\).
We note that the hard instances lie within the class covered by our upper bound: the action vectors are bounded by \(B_X\) almost surely and the action  distribution is i.i.d.\ with a rank-three covariance, so Assumptions~\ref{ass:bounded_actions} and~\ref{ass:spectrum} hold. Comparing with Theorem~\ref{thm:main_regret}, both bounds exhibit a missingness cost that decreases with \(K\) and \(p\), but the scaling with respect to these parameters  does not match exactly. Identifying a lower bound construction that provides a sharp dependence on \(p\) and \(K\) as well remains an interesting open problem. The full gap-dependent statement and proof appear in Appendix~\ref{app:iid_missingness_lower_bound}.

%% file: experiments.tex
\section{Simulations}
\label{sec:simulations}

{\bf Experimental setup.}
We evaluate \algname{} with the practical choices described after Algorithm~\ref{alg:bandits}; all practical variants use zero-imputed OFUL during burn-in. In synthetic experiments, \textbf{TOFU} and \textbf{RA-TOFU} denote the known-rank and rank-adaptive versions. In real-feature experiments, \textbf{TOFU-FH} and \textbf{RA-TOFU-FH} denote full-history replay variants that re-impute and replay past selected rewards at each epoch start. Baselines are \textbf{ZF-OFUL}, ambient OFUL on zero-filled masked arms, and \textbf{ZF-PSLB}, the analogous zero-filled adaptation of PSLB~\citep{lale2019stochastic}.

The main {\bf synthetic experiments} use \(d=30\), true rank \(m^\star=3\), \(K=8\), horizon \(T=400\), burn-in \(t_b=30\), and Gaussian reward noise with standard deviation \(0.05\). We vary the observation probability over \(p\in\{0.8,0.6,0.4,0.3,0.2\}\). Figure~\ref{fig:synthetic_main_experiments} reports cumulative regret over time and final regret as a function of missingness. Means and standard errors over 20 random seeds are reported.

\begin{figure*}[t]
    \centering
    \begin{subfigure}[b]{0.9\textwidth}
        \includegraphics[width=\textwidth]{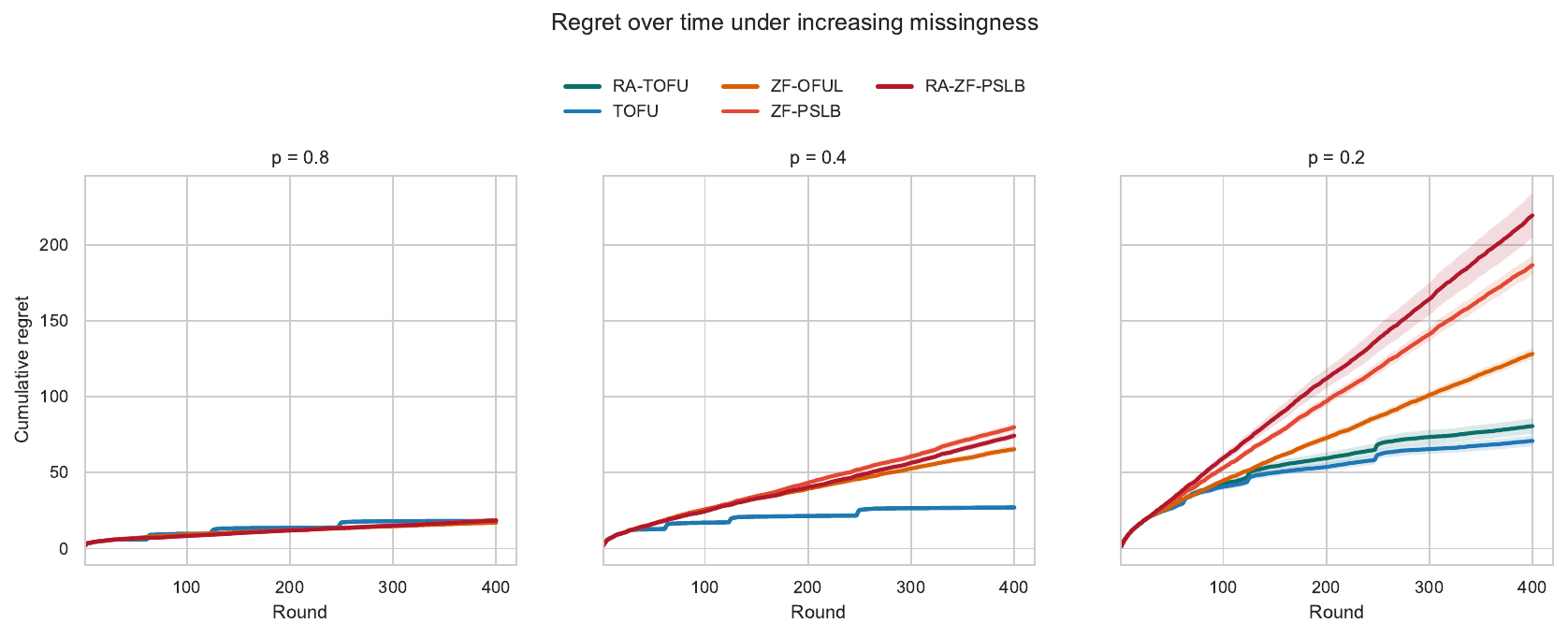}
        \caption{Regret over time.}
        \label{fig:synthetic_regret_over_time}
    \end{subfigure}
    \hfill
    \begin{subfigure}[b]{0.5\textwidth}
        \includegraphics[width=\textwidth]{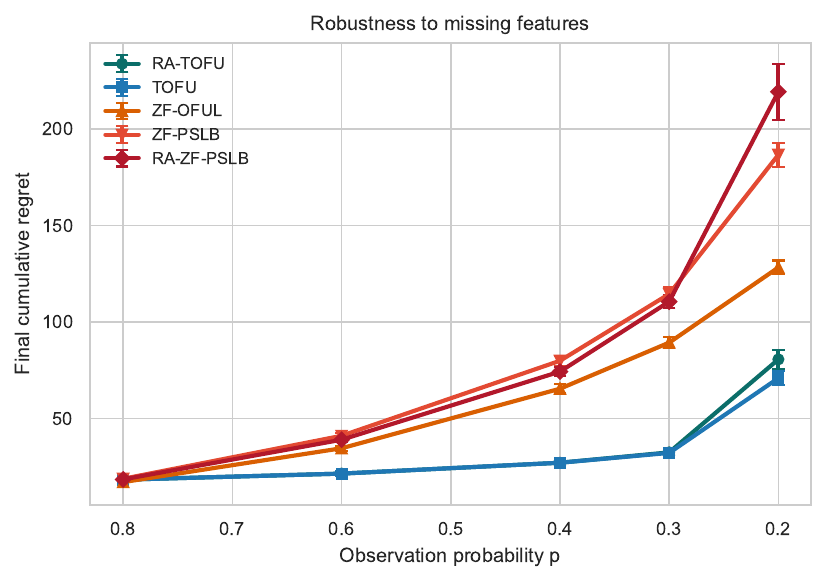}
        \caption{Final regret vs. \(p\).}
        \label{fig:synthetic_missingness_sweep}
    \end{subfigure}
    \caption{\footnotesize Synthetic missing-feature experiments. TOFU and RA-TOFU gain most when missingness is substantial: ZF-OFUL and ZF-PSLB degrade as \(p\) decreases, while the TOFU variants continue to exploit the corrected low-rank structure.}
    \label{fig:synthetic_main_experiments}
\end{figure*}

These results show that exploiting the low-rank structure becomes increasingly valuable when missingness is substantial. At high observation probability, ZF-OFUL is competitive because the ambient zero-filled representation retains enough information, and as \(p\) decreases, ambient learning becomes increasingly biased and sample-inefficient. TOFU and RA-TOFU remain close in moderate missingness and substantially outperform both ZF-OFUL and ZF-PSLB in sparse regimes. In the real-feature experiments below we use the full-history replay variants; Appendix~\ref{app:experimental_details} repeats this synthetic study with TOFU-FH and RA-TOFU-FH, which gives significant additional improvement.

{\bf MNIST product-context experiment. (Figure~\ref{fig:mnist_cnn_product_context})}
For MNIST~\citep{lecun1998gradient}, we train a small CNN with \(m=4\)-dimensional penultimate feature \(h(x)\) and class-head weights \(w_k\). The class-\(k\) arm is \(X_k(x)=h(x)\odot w_k\), preserving the classifier score through \(\langle X_k(x),\mathbf 1_m\rangle=w_k^\top h(x)\). We lift the ten product-context arms into \(\mathbb R^{100}\) by a fixed orthonormal map and mask coordinates. Rewards are classification rewards; \(T=5000\), \(t_b=500\), \(p\in\{0.7,0.5,0.3,0.2\}\), and ranks, thresholds, and confidence parameters are chosen on validation seeds disjoint from reporting seeds.

The MNIST product-context experiment gives a real-feature setting where the low-rank reward geometry is present by construction. TOFU-FH matches ZF-OFUL at mild missingness and increasingly outperforms it as features become sparse. ZF-PSLB is consistently worse, with the gap widening at lower observation probabilities.

{\bf Additional experiments. }
Appendix~\ref{app:experimental_details} contains several supporting experiments: real-feature synthetic tasks using optical digit covariates~\citep{alpaydin1998optical}, rank-recovery and fixed-rank misspecification diagnostics, warm-start comparisons, MNIST rank validation, and a 20 Newsgroups product-context experiment~\citep{lang1995newsweeder} with approximately low-rank features. The {\bf code and scripts} for reproducing the experimental results are available at \url{https://github.com/gautamdasarathy/tofu-pov-arxiv}.

%% file: discussion.tex
\newpage
\section{Discussion}
\label{sec:discussion}
\begin{wrapfigure}[20]{r}{0.45\textwidth}
    \vspace{-1pt}
    \centering
    \includegraphics[width=\linewidth]{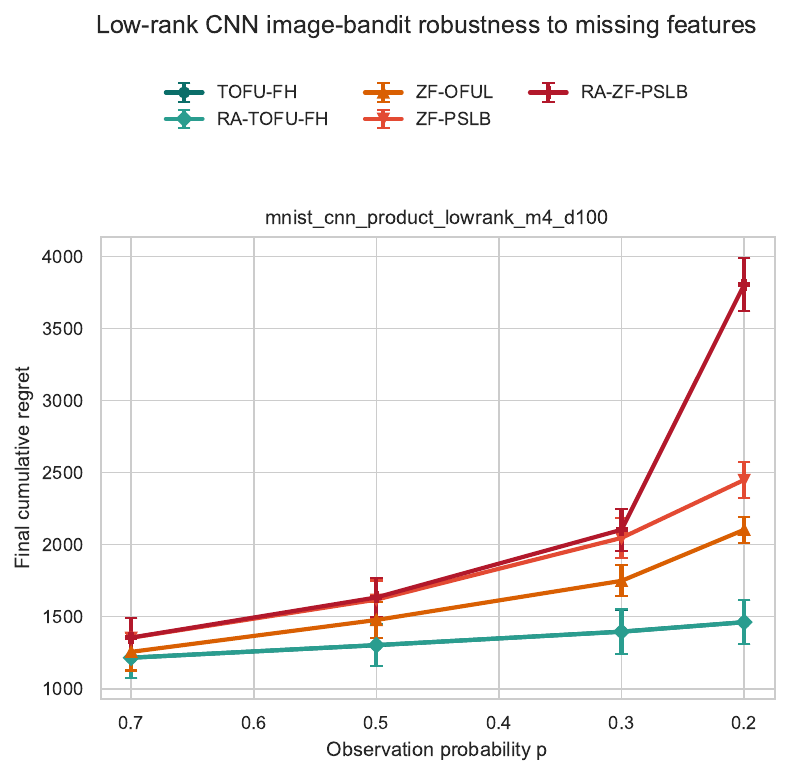}
    \caption{\footnotesize MNIST experiment. TOFU-FH wins or ties, and the gap over baselines grows as \(p\) decreases}.
    \label{fig:mnist_cnn_product_context}
    \vspace{-1pt}
\end{wrapfigure}
This paper studies contextual decision making when the learner must act from incomplete action descriptions. We focus on the case of i.i.d. action sets with low-rank structure and missing-at-random coordinates. Natural extensions include approximately low-rank action models, where the tail of the spectrum would create an additional approximation term, and missing-not-at-random observation patterns, where the missingness process itself may be informative or biased. Both settings would require separating representation error, imputation bias, and reward-learning uncertainty more carefully and are promising avenues for future work. The experiments partly probe beyond the theory by using public benchmark-derived covariates and approximately low-rank nuisance directions; these results suggest the method is not brittle to such deviations. Our epoch-wise freezing is deliberately conservative, and a truly sequential subspace-identification procedure would be more natural; this would require new concentration and potential arguments for learned, time-varying representations. {Finally, when the learner has some control over which coordinates are revealed, the problem acquires an active learning flavor and adaptively targeting informative coordinates (along the lines of techniques in~\citep{dasarathy2016active}) could sharpen both the subspace estimate and the burn-in cost. If the learner can instead choose the \emph{amount} of missingness across rounds and arms, by paying more for a higher observation probability \(p\), this becomes a multi-fidelity decision-making problem, and the techniques developed in the multi-fidelity bandit and optimization literature~\citep{kandasamy2016multifidelity,kandasamy2016gaussian,kandasamy2017multifidelity,kandasamy2019multifidelity} offer a natural starting point.}\\
On the lower-bound side, our construction shows that partial observation creates a genuine missingness-discovery cost, but it does not settle the sharp dependence on \(p\). In particular, it remains unclear whether the \(\sqrt T/p^2\)-type term in the upper bound is intrinsic or an artifact of worst-case subspace estimation and imputation control.

%% file: appendices.tex
\begin{center}{\bf \Large Stochastic Linear Bandits with Partially Observed Actions: Appendices}\end{center}

\section{Technical Comparison with PSLB and the Role of Epoch-wise Freezing}
\label{sec:comparison-with-literature}
\label{app:pslb-comparison}

The closest predecessor to our work is the projected stochastic linear bandit
framework of ~\cite{lale2019stochastic}. PSLB estimates a low-dimensional
subspace from the fully observed actions appearing in the decision sets and then
uses projected confidence sets inside an optimistic linear bandit rule. Our
setting differs at the modeling level because the ideal action vectors are
observed only through random coordinate masks, so the learner must recover the
latent subspace and impute the currently available actions. There is also a
proof-level distinction: a direct projected-OFUL analysis with a projection
matrix updated every round does not follow from the standard self-normalized and
elliptical-potential arguments. More specifically, the PSLB proof appears to rely on two steps that require additional justification: a self-normalized martingale argument for a retrospectively projected noise sum, and a projected-potential/minimum-eigenvalue argument for the covariance of OFU-selected actions under a time-varying projection.

{\bf Retrospective projections and self-normalization.}
The usual OFUL confidence analysis controls
\(
    S_t^{\rm OFUL}:=\sum_{s=1}^t x_s\eta_s,
\)
where $x_s$ is predictable before the reward noise $\eta_s$ is observed. In a
projected analysis with a time-varying projection, the analogous object is
\[
    S_t^{\rm proj}:=\sum_{s=1}^t \hP_t x_s\eta_s
    =\hP_t\sum_{s=1}^t x_s\eta_s .
\]
This is not the predictable martingale transform
$\sum_{s=1}^t\hP_sx_s\eta_s$: the projection at time $t$ is applied
retroactively to all previous noise terms. Indeed,
\[
    S_{t+1}^{\rm proj}-S_t^{\rm proj} = \hP_{t+1}x_{t+1}\eta_{t+1} + (\hP_{t+1}-\hP_t)\sum_{s=1}^t x_s\eta_s .
\]
 The second term reweights past noise and is not a standard martingale increment.
Thus the standard anytime OFUL
confidence proof, which relies on a supermartingale/Ville argument, cannot be
invoked simply by replacing each past feature with its projection under the
latest estimated subspace.

{\bf Projected potential and selected-action covariance.}
The usual elliptical-potential lemma is an algebraic, pathwise statement. For any realized sequence of features, if the design evolves by rank-one updates, we get the following:
\[
    V_{t+1}=V_t+z_tz_t^\top, \qquad \sum_{t=1}^T \min\{1,\|z_t\|_{V_t^{-1}}^2\} \le
    2\log\frac{\det(V_{T+1})}{\det(V_1)}.
\]
With a changing projection, the natural projected design
\[
    A_t=\hP_t\left(\lambda I+\sum_{s<t}x_sx_s^\top\right)\hP_t
\]
does not satisfy such a recursion. Instead,
\[
\begin{aligned}
    A_{t+1}-A_t
    &=
    \hP_{t+1}
    \left(\lambda I+\sum_{s<t}x_sx_s^\top\right)
    \hP_{t+1}
    -
    \hP_t
    \left(\lambda I+\sum_{s<t}x_sx_s^\top\right)
    \hP_t \\
    &\quad+
    \hP_{t+1}x_tx_t^\top\hP_{t+1},
\end{aligned}
\]
and the first two terms need not be positive semidefinite or low-rank. Hence the
determinant-telescoping proof does not directly apply to projected designs with
a projection updated every round.

One alternative, used in the PSLB analysis of ~\cite{lale2019stochastic}, is to prove that the covariance of the selected actions has a
linearly growing minimum eigenvalue. The delicate point is that this is the
covariance of actions chosen by the OFU rule, not the population covariance of a
random arm from the offered decision set. Population excitation of the offered
arms does not, by itself, imply selected-design excitation: if every offered set
is $\{e_1,e_2\}$, then a uniformly sampled offered arm has covariance
$I_2/2$, but the rule that always chooses $e_1$ has selected
covariance $Te_1e_1^\top$, whose minimum eigenvalue is zero.  A Matrix Chernoff argument for selected actions therefore
requires an explicit policy-specific excitation or stronger assumptions that guarantee this.

{\bf Why epoch-wise freezing avoids these issues.}
We tackle both of these with our epoch-wise algorithm construction (Algorithm~\ref{alg:bandits}). We estimate
$\hU_e$ using only pre-epoch decision sets, then keep it fixed during
$\mathfrak T_e$. Conditional on the epoch-start sigma-field, the induced features
$z_{t,i}=\hU_e^\top\widehat X_{t,i}$ live in a fixed
$m$-dimensional coordinate system. Thus the selected feature $z_t$ is
predictable, $\sum_{s\in\mathfrak T_e:s<t}z_s\eta_s$ is the usual
self-normalized martingale transform, and
$V_{e,t}:=\lambda I_m+\sum_{s\in\mathfrak T_e:s<t}z_sz_s^\top$ satisfies the
rank-one recursion. Standard OFUL confidence and potential bounds therefore
apply inside each epoch; this is the reduction used in
Section~\ref{sec:confidence_set_analysis}.

What remains is a controlled approximation error rather than a
time-varying-projection issue. In the frozen coordinates,
$r_t=\langle z_t,\theta_e^\star\rangle+b_t+\eta_t$ with $|b_t|\le b_e$, where
$b_e$ is determined by the epoch-start subspace and imputation accuracy
(Lemma~\ref{lemma:surrogate-model}). The main
regret proof in Section~\ref{sec:regret_analysis_main} then sums the usual OFUL
terms and this representation-misspecification contribution over epochs; Lemma~\ref{lem:epoch_bias_term}
gives the corresponding misspecification-control step, and Appendix~\ref{app:bias_control}
explains why this misspecification handling needs some care.

\section{Comparison with a One-shot Two-phase Baseline}
\label{app:two_phase_comparison}

This section compares the epoch-wise algorithm with a simpler one-shot
two-phase strategy. This natural baseline first uses \(\tau\) rounds to estimate the
subspace, then freezes this estimate for the remaining \(T-\tau\) rounds and
runs a standard \(m\)-dimensional OFUL algorithm in the frozen coordinates. One could then ask if a \(\sqrt{T}\) regret is achievable by optimizing $\tau$. In what follows, we will show that the best one could hope for with such a strategy is a $T^{2/3}$ regret.

Let \(\hU_\tau\) be the subspace estimate after \(\tau\ge t_b\) rounds, and
let \(\hP_\tau=\hU_\tau\hU_\tau^\top\). For \(t>\tau\), the two-phase
baseline imputes each arm using \(\hU_\tau\), forms
\(
z_{t,i}:=\hU_\tau^\top \hat X_{t,i}^{(\tau)}\in\mathbb R^m,
\)
and runs OFUL on these fixed \(m\)-dimensional features. On the representation
event \(\mathcal E_{\rm rep}\), defined in Section~\ref{sec:subspace_estimation}
by combining Lemma~\ref{lemma:sub_space_lemma} and
Lemma~\ref{lemma: imputation_error}, the one-shot analogue of
Lemma~\ref{lemma:surrogate-model} gives, for all \(t>\tau\) and \(i\in[K]\),
\[
\left|
\langle X_{t,i},\theta^\star\rangle
-
z_{t,i}^\top\vartheta_\tau^\star
\right|
\le
b_\tau,
\qquad
\vartheta_\tau^\star:=\hU_\tau^\top\theta^\star,
\]
where
\[
b_\tau
:=
SB_X\left(2+\frac{2}{p}\right)\epsilon_\tau
=
\widetilde O\left(
SB_X\,\kappa
\frac{\sqrt m}{p^2\sqrt{K\tau}}
\right).
\]
Thus, during the exploitation phase, the realized reward in each round satisfies
\[
r_t=z_t^\top\vartheta_\tau^\star+\xi_t+\eta_t,
\qquad
|\xi_t|\le b_\tau .
\]

Applying the same misspecified-OFUL argument as in the epoch-wise proof
(Theorem~\ref{thm:confidence_set_m_t_main}, Lemma~\ref{lem:epoch_bias_term},
and Lemma~\ref{lemma:epoch_regret}, with the single frozen epoch beginning at
\(\tau\)), we get
\[
R_T^{\mbox{\footnotesize \sc 1-shot}}(\tau)
\le
O(B_XS\,\tau)
+
\widetilde O\!\left((R+SB_X)\,m\sqrt T\right)
+
\widetilde O\!\left(
SB_X\,\kappa
\frac{mT}{p^2\sqrt{K\tau}}
\right).
\]

Ignoring problem-dependent constants, the two \(\tau\)-dependent terms have the
form
\(\tau+\frac{T}{\sqrt{\tau}}.\) Balancing them gives us a scaling of \(\tau\asymp T^{2/3},\) and hence the regret scales as \(\widetilde O(T^{2/3})\) up to the standard bandit terms and problem-dependent constants.

\section{Subspace Estimation Error: Proof of Lemma~\ref{lemma:sub_space_lemma}}
\label{sec:subspace-estimation-proof}

We first record two ingredients used in our subspace recovery argument: unbiasedness of the corrected covariance estimator and a high-probability spectral-norm concentration bound.

\begin{lemma}[Unbiased covariance estimator under Bernoulli masking]
\label{lemma:unbiased_covariance_missing}
Let \(\dX=S\odot X\), where coordinates of \(S\), $S_j\stackrel{\rm iid}{\sim}\mathrm{Bernoulli}(p)$ and are independent of $X$. Define
\[
Y(\dX)
:=
\frac{1}{p^2}\dX\dX^\top
+
\left(\frac{1}{p}-\frac{1}{p^2}\right)
\mathrm{diag}(\dX\dX^\top).
\]
Then
\(
\E[Y(\dX)\mid X]=XX^\top.
\)
Consequently, \(\dot\Sigma_t\) in Equation~\eqref{eq:unbiased_est_cov} is an unbiased estimator of \(\Sigma=\E[XX^\top]\).
\end{lemma}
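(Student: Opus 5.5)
The plan is to verify the identity entrywise, conditioning on \(X\) so that only the mask \(S\) is random. Since \(\dX=S\odot X\), the \((j,k)\) entry of \(\dX\dX^\top\) equals \(S_jS_kX^{(j)}X^{(k)}\). The matrix \(\mathrm{diag}(\dX\dX^\top)\) keeps only the diagonal entries \(S_j^2(X^{(j)})^2=S_j(X^{(j)})^2\), where we use \(S_j\in\{0,1\}\). We therefore treat off-diagonal and diagonal entries separately.

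For \(j\neq k\), the diagonal correction contributes nothing. Independence of \(S_j\) and \(S_k\) from each other and from \(X\) gives \(\E[S_jS_k\mid X]=p^2\). Hence
\[
\E[Y(\dX)_{jk}\mid X]=\frac{1}{p^2}\,p^2X^{(j)}X^{(k)}=X^{(j)}X^{(k)}.
\]

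For \(j=k\), both terms contribute, and \(\E[S_j\mid X]=p\). Hence
\[
\E[Y(\dX)_{jj}\mid X]=\Bigl(\frac{1}{p^2}+\frac1p-\frac1{p^2}\Bigr)p\,(X^{(j)})^2=(X^{(j)})^2.
\]
Together the two cases give \(\E[Y(\dX)\mid X]=XX^\top\).

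For the consequence, apply the tower property: \(\E[Y(\dX_{s,i})]=\E[X_{s,i}X_{s,i}^\top]=\Sigma\) for every \(s,i\). This uses that the masks are independent of the ideal actions, as in the observation model, and that the \(X_{s,i}\) are identically distributed, as in the setup. Averaging over \(s\in[t]\) and \(i\in[K]\) then gives \(\E[\dot\Sigma_t]=\Sigma\) by linearity. The learner's policy plays no role here, because \(\dot\Sigma_t\) averages over all offered arms, not just the played ones.

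There is no real obstacle. The only point needing care is the idempotence \(S_j^2=S_j\). It explains why diagonal entries are observed with probability \(p\) rather than \(p^2\), and hence why the correction coefficient is \(1/p-1/p^2\).
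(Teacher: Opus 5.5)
Your proposal is correct and follows essentially the same route as the paper: an entrywise conditional-expectation computation that uses $\E[S_jS_k]=p^2$ for $j\neq k$ and $S_j^2=S_j$ on the diagonal, so the diagonal coefficient collapses to $1/p$. Your tower-property step for the unbiasedness of $\dot\Sigma_t$ simply spells out what the paper leaves as ``the proof follows.''
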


\begin{proof}
For \(j\neq k\), observe that 
\[
\E[\dX_j\dX_k\mid X]
=
\E[S_jS_k]X_jX_k
=p^2X_jX_k.
\]
Therefore the off-diagonal entries of \(p^{-2}\dX\dX^\top\) are unbiased estimates of the off-diagonal entries of \(XX^\top\). 

Next, lets turn our attention to the \(j\)-th diagonal entry and observe that 
\[ \mathbb{E}\left[[Y(\dot X)]_{jj} \middle | X \right] = \frac{1}{p^2} \mathbb{E}\left[S_j^2\right]X_j^2 + \left(\frac{1}{p} - \frac{1}{p^2}\right)\mathbb{E}\left[S_j^2\right]X_j^2 = \frac{1}{p}\times p \times X_j^2 = X_j^2.  
\]
From this, the proof follows. 
\end{proof}

Next, we compute the conditional second moment of a single corrected sample exactly. This is the ingredient that allows us to determine the variance proxy in the matrix Bernstein bound.

\begin{lemma}[Conditional second moment of the corrected estimator]
\label{lemma:masked_second_moment}
Let $S_1,\dots,S_d$ be i.i.d.\ $\mathrm{Bernoulli}(p)$ random variables, independent of $X$, and let $Y(\dX)$ be as in Lemma~\ref{lemma:unbiased_covariance_missing}. Then
\[
\E\big[Y(\dX)^2\,\big|\,X\big]
=\frac{\|X\|_2^2}{p}\,XX^\top
+\Big(\frac{1}{p^2}-\frac1p\Big)\,
\mathrm{diag}\Big(\big(X_j^2(\|X\|_2^2-X_j^2)\big)_{j\in[d]}\Big).
\]
\end{lemma}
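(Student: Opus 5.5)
The plan is a direct entrywise computation. Conditional on \(X\), the matrix \(Y(\dX)\) has a very simple entrywise form. For \(j\neq k\), the off-diagonal entries of \(\mathrm{diag}(\dX\dX^\top)\) vanish, so \([Y]_{jk}=p^{-2}S_jS_kX_jX_k\). On the diagonal, \(S_j^2=S_j\), so
\[
[Y]_{jj}=\Big(\tfrac{1}{p^2}+\tfrac1p-\tfrac1{p^2}\Big)S_jX_j^2=\tfrac{1}{p}S_jX_j^2 .
\]
Equivalently, \(Y=\mathrm{diag}(S)\,M\,\mathrm{diag}(S)\) with \(M=p^{-2}XX^\top+(p^{-1}-p^{-2})\mathrm{diag}(XX^\top)\). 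I would then expand \((Y^2)_{jk}=\sum_{l}[Y]_{jl}[Y]_{lk}\) and take the expectation of each summand using only two facts: \(\E[S_{l_1}\cdots S_{l_r}]=p^{\#\{\text{distinct indices}\}}\), and \(S_l^2=S_l\). In each term the only thing to track is how many distinct mask indices appear.

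\emph{Off-diagonal entries (\(j\neq k\)).} I would split the sum over \(l\) into \(l=j\), \(l=k\), and \(l\notin\{j,k\}\).
\begin{itemize}
\item For \(l=j\), the term is \(p^{-3}S_jS_kX_j^3X_k\), whose expectation is \(p^{-1}X_j^3X_k\).
\item Symmetrically, \(l=k\) contributes \(p^{-1}X_jX_k^3\).
\item For \(l\notin\{j,k\}\), the term is \(p^{-4}S_jS_lS_kX_jX_l^2X_k\), with three distinct indices, so its expectation is \(p^{-1}X_jX_kX_l^2\).
\end{itemize}
Summing all three cases gives \(p^{-1}\|X\|_2^2X_jX_k\). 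This is exactly the \((j,k)\) entry of \(\frac{\|X\|_2^2}{p}XX^\top\), and the diagonal correction term contributes nothing off the diagonal.

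\emph{Diagonal entries.} Here I would split into \(l=j\) and \(l\neq j\).
\begin{itemize}
\item For \(l=j\), the term is \(p^{-2}S_jX_j^4\), with expectation \(p^{-1}X_j^4\).
\item For \(l\neq j\), the term is \(p^{-4}S_jS_lX_j^2X_l^2\), with expectation \(p^{-2}X_j^2X_l^2\).
\end{itemize}
Summing over \(l\neq j\) gives \(p^{-1}X_j^4+p^{-2}X_j^2(\|X\|_2^2-X_j^2)\). Rewriting \(p^{-2}=p^{-1}+(p^{-2}-p^{-1})\) turns this into
\[
\tfrac{1}{p}X_j^2\|X\|_2^2+\Big(\tfrac1{p^2}-\tfrac1p\Big)X_j^2\big(\|X\|_2^2-X_j^2\big),
\]
which matches the claimed diagonal entry.

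The only real obstacle is bookkeeping. The delicate point is collapsing the diagonal of \(Y\) correctly: the \(1/p^2\) and \(1/p\) pieces combine into a single factor \(1/p\), and the \(S_l^2=S_l\) reductions must be applied consistently so that the powers of \(p\) come out right. Once these conventions are fixed, assembling the two cases into matrix form finishes the proof.
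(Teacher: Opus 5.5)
Your proposal is correct and follows essentially the same route as the paper: it expands \((Y^2)_{jk}=\sum_l Y_{jl}Y_{lk}\) entrywise, splitting into \(l\in\{j,k\}\) versus \(l\notin\{j,k\}\) off the diagonal and \(l=j\) versus \(l\neq j\) on the diagonal, and uses \(S_l^2=S_l\) together with independence of the mask coordinates. Your rewriting \(p^{-2}=p^{-1}+(p^{-2}-p^{-1})\) spells out the matching to the stated diagonal entry, a step the paper only asserts.
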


\begin{proof}
For notational ease, we write $Y:=Y(\dX)$ in the sequel. First, observe that for $j\neq k$,  $Y_{jk}=p^{-2}S_jS_kX_jX_k$ and that 
$Y_{jj}=p^{-1}S_jX_j^2$. 

\underline{\bf Off-diagonal entries.} Fix $j\ne k$ and
expand $(Y^2)_{jk}=\sum_{l}Y_{jl}Y_{lk}$. Notice that the term corresponding to $l=j$ simplifies to 
$p^{-3}S_jS_kX_j^3X_k$, and hence (after applying \(\mathbb{E}\left[\cdot \mid X\right]\)) contributes $p^{-1}X_j^3X_k$ to the sum. The $l=k$ term  similarly contributes $p^{-1}X_jX_k^3$. Each $l\notin\{j,k\}$ on the other hand contributes $p^{-4}S_jS_kS_lX_jX_kX_l^2$, with
conditional mean $p^{-1}X_jX_kX_l^2$, where we use the fact $S_l^2=S_l$ and
$\E[S_jS_kS_l]=p^3$ for distinct indices. Summing over $l$, we have 
\[
\E\big[(Y^2)_{jk}\,\big|\,X\big]=\frac{X_jX_k\|X\|_2^2}{p},
\]
which, after some algebra, can be seen to match the $(j,k)$ entry of the stated matrix. 

\underline{\bf Diagonal Entries.} Notice that 
$(Y^2)_{jj}=Y_{jj}^2+\sum_{l\ne j}Y_{jl}^2$, which in turn equals 
\[p^{-2}S_jX_j^4+p^{-4}\sum_{l\ne j}S_jS_lX_j^2X_l^2,\] by definition. We therefore have 
\[
\E\big[(Y^2)_{jj}\,\big|\,X\big]
=\frac{X_j^4}{p}+\frac{X_j^2(\|X\|_2^2-X_j^2)}{p^2}.
\]
Indeed, this is the $(j,j)$ entry of the stated matrix.
\end{proof}

We can now use these to control the covariance estimation error using a matrix Bernstein argument. 

\begin{lemma}[Covariance estimation error]
\label{lemma:covariance_estimation_error}
Let Assumptions~\ref{ass:bounded_actions} and~\ref{ass:spectrum} hold,
and let the coordinate masks be i.i.d.\ $\mathrm{Bernoulli}(p)$,
independent across coordinates, arms, and rounds, and independent of
the actions. For any fixed $t\in[T]$, with probability at least
$1-\delta$,
\begin{equation}
\label{eq:corrected_cov_rate}
\|\dot\Sigma_t-\Sigma\|_2
\;\le\;
2B_X\sqrt{\frac{\bar\lambda}{p^2\,tK}\log\frac{2d}{\delta}}
\;+\;
\frac{2B_X^2}{p^2\,tK}\log\frac{2d}{\delta}.
\end{equation}
\end{lemma}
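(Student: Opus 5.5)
We write \(\dot\Sigma_t-\Sigma\) as a normalized sum of independent, centered, self-adjoint random matrices and apply the matrix Bernstein inequality. Set \(n:=tK\) and, for each round \(s\le t\) and arm \(i\in[K]\),
\[
W_{s,i}:=Y(\dX_{s,i})-\Sigma .
\]
Then \(\dot\Sigma_t-\Sigma=\frac1n\sum_{s,i}W_{s,i}\). The offered arms \(X_{s,i}\) are i.i.d. and the masks are i.i.d. and independent of them, so the pairs \((X_{s,i},S_{s,i})\) are i.i.d. The learner's policy never affects the offered sets, so no martingale structure is needed. By Lemma~\ref{lemma:unbiased_covariance_missing} and the tower rule, \(\E[W_{s,i}]=0\).

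\textbf{Uniform bound \(L\).} We need \(\|W_{s,i}\|_2\le L\) almost surely. Write
\[
Y(\dX)=\tfrac{1}{p^2}\bigl(\dX\dX^\top-\mathrm{diag}(\dX\dX^\top)\bigr)+\tfrac1p\,\mathrm{diag}(\dX\dX^\top).
\]
The first piece has norm at most \(\frac{1}{p^2}\|\dX\|_2^2\le B_X^2/p^2\), since subtracting the nonnegative diagonal from a rank-one PSD matrix moves eigenvalues by at most \(\|\dX\|_2^2\). The second piece has norm at most \(\max_j X_j^2/p\le B_X^2/p\). Hence \(\|Y\|_2\lesssim B_X^2/p^2\), and with \(\|\Sigma\|_2\le B_X^2\) we get \(L\lesssim B_X^2/p^2\). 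Constants will be tuned to match the stated \(2B_X^2\) term; it may help to bound the two pieces jointly.

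\textbf{Variance proxy.} Since \(\E[W^2]=\E[Y^2]-\Sigma^2\preceq\E[Y^2]\), Lemma~\ref{lemma:masked_second_moment} and the tower rule give
\[
\E[Y^2]=\frac1p\E\bigl[\|X\|_2^2XX^\top\bigr]+\Bigl(\frac1{p^2}-\frac1p\Bigr)\E\bigl[\mathrm{diag}\bigl(X_j^2(\|X\|_2^2-X_j^2)\bigr)\bigr].
\]
\begin{itemize}
\item The first term is \(\preceq \frac{B_X^2}{p}\Sigma\), with norm at most \(B_X^2\bar\lambda/p\).
\item The second term is diagonal with entries at most \(\frac{B_X^2}{p^2}\E[X_j^2]=\frac{B_X^2}{p^2}\Sigma_{jj}\le\frac{B_X^2}{p^2}\bar\lambda\).
\end{itemize}
Summing the \(n\) terms, \(\bigl\|\sum_{s,i}\E[W_{s,i}^2]\bigr\|_2\le v:=nB_X^2\bar\lambda\,(1/p+1/p^2)\lesssim 2nB_X^2\bar\lambda/p^2\).

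\textbf{Applying Bernstein.} Matrix Bernstein in dimension \(d\) gives, with probability at least \(1-\delta\),
\[
\Bigl\|\sum W_{s,i}\Bigr\|_2\le\sqrt{2v\log(2d/\delta)}+\tfrac{2L}{3}\log(2d/\delta).
\]
Dividing by \(n\) yields the first term \(2B_X\sqrt{\bar\lambda\log(2d/\delta)/(p^2n)}\), using \(1/p+1/p^2\le 2/p^2\), and the second term \(2B_X^2\log(2d/\delta)/(p^2 n)\).

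\textbf{Main obstacle.} The crux is the variance step: the diagonal term must be bounded by \(\bar\lambda B_X^2/p^2\) rather than by a crude \(B_X^4/p^2\), using \(\Sigma_{jj}\le\lambda_1\le\bar\lambda\). The exact second-moment formula of Lemma~\ref{lemma:masked_second_moment} supplies this. Everything else is bookkeeping of constants.
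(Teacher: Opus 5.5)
Your proposal is correct and follows the paper's proof essentially step for step. Both use the same centered i.i.d. summands $Y_\ell-\Sigma$ and bound the variance through the exact second-moment formula with $\Sigma_{jj}\le\lambda_1\le\bar\lambda$, giving $2B_X^2\bar\lambda/p^2$ per sample, then apply the same inverted matrix Bernstein bound. Your split of $Y$ differs only cosmetically from the paper's, and your pieces give $L\le B_X^2/p^2+B_X^2/p+B_X^2\le 3B_X^2/p^2$, which is exactly the paper's $L_Q$ and produces the stated $2B_X^2/(p^2tK)$ term.
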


\begin{proof}
In the following, we will let $n:=tK$ and index the i.i.d. samples by $\ell\in[n]$. Let's set 
$Q_\ell:=Y_\ell-\Sigma$, so $\dot\Sigma_t-\Sigma=n^{-1}\sum_\ell Q_\ell$
and $\E[Q_\ell]=0$ by Lemma~\ref{lemma:unbiased_covariance_missing}.

\underline{\bf Uniform bound.} Since the mask cannot increase Euclidean norms, we have that 
$\|\dX_\ell\|_2\le B_X$ and $\max_j\dX_{\ell,j}^2\le B_X^2$. This gives us the following uniform norm bound on the \(Q_\ell\)'s
\[
\|Y_\ell\|_2
\le\frac{\|\dX_\ell\|_2^2}{p^2}
+\frac{1-p}{p^2}\max_j\dX_{\ell,j}^2
\le\frac{2B_X^2}{p^2},
\qquad
\|Q_\ell\|_2\le\frac{2B_X^2}{p^2}+\lambda_1
\le\frac{3B_X^2}{p^2}=:L_Q,
\]
where we of course use $\lambda_1\le\E\|X\|_2^2\le B_X^2$.

\underline{\bf Variance.} Since $\E Y_\ell=\Sigma$, we have that 
$\E Q_\ell^2=\E Y_\ell^2-\Sigma^2\preceq\E Y_\ell^2$. We will use the fact that $X_j^2(\|X\|_2^2-X_j^2)\le B_X^2X_j^2$ and $0\le p^{-2}-p^{-1}\le p^{-2}$ in
Lemma~\ref{lemma:masked_second_moment} to get
\[
\E\big[Y_\ell^2\,\big|\,X_\ell\big]
\preceq\frac{B_X^2}{p}X_\ell X_\ell^\top
+\frac{B_X^2}{p^2}\,\mathrm{diag}(X_\ell X_\ell^\top).
\]
Now, taking expectations over $X_\ell$ and writing
$\Sigma^D:=\mathrm{diag}(\Sigma)$, whose operator norm is
$\max_j\Sigma_{jj}\le\lambda_1\le\bar\lambda$, we have the following bound on the variance term
\[
\big\|\E Q_\ell^2\big\|_2
\le\frac{B_X^2\lambda_1}{p}+\frac{B_X^2\bar\lambda}{p^2}
\le\frac{2B_X^2\bar\lambda}{p^2}=:v,
\qquad
\Big\|\sum_{\ell=1}^n\E Q_\ell^2\Big\|_2\le nv.
\]

\underline{\bf Concentration.} The matrices $Q_1,\ldots,Q_n$ are independent (the offered arms and their masks are i.i.d.\ and independent of the learner's policy), symmetric, and zero-mean. Moreover, by the two preceding steps, they satisfy the almost-sure bound $\|Q_\ell\|_2\le L_Q$ and the variance bound $\big\|\sum_{\ell}\E Q_\ell^2\big\|_2\le\sigma^2:=nv$. For such a family of matrices, recall that the Matrix Bernstein's
inequality~\cite{tropp2015introductionmatrixconcentrationinequalities} says, for every $s\ge0$,
\[
\Pr\Big(\Big\|\sum_{\ell=1}^nQ_\ell\Big\|_2\ge s\Big)
\le
2d\exp\!\left(\frac{-s^2/2}{\sigma^2+L_Qs/3}\right).
\]
Equivalently, we may use the following form. Fix $u>0$, and observe that the right-hand side above is at most $2d\,e^{-u}$ whenever $s^2/2\ge u\left(\sigma^2+L_Qs/3\right)$, that is, whenever $s$ exceeds the larger root
\[
s_+
:=
\frac{L_Qu}{3}+\sqrt{\frac{L_Q^2u^2}{9}+2\sigma^2u}
\]
of the corresponding quadratic equation. By $\sqrt{a+b}\le\sqrt a+\sqrt b$, this root satisfies $s_+\le\tfrac23L_Qu+\sqrt{2\sigma^2u}$, so the choice $s:=\sqrt{2\sigma^2u}+\tfrac23L_Qu$ suffices, and we get
\[
\Pr\Big(\Big\|\sum_{\ell=1}^nQ_\ell\Big\|_2
\ge\sqrt{2nvu}+\tfrac{2}{3}L_Qu\Big)\le 2d\,e^{-u}.
\]
Setting $u=\log(2d/\delta)$ and dividing by $n=tK$ gives us the desired 
\eqref{eq:corrected_cov_rate}, since
$\sqrt{2v}=2B_X\sqrt{\bar\lambda}/p$ and $\tfrac23L_Q=2B_X^2/p^2$.
\end{proof}

We are now ready to prove the subspace recovery guarantee.

{\bf Proof of Lemma~\ref{lemma:sub_space_lemma}.}
Recall from Section~\ref{sec:problem_setup} that \(\kappa\sqrt m=B_X\sqrt{\bar\lambda}/\lambda_m\), so that
\[
\epsilon_t
=
C_{\mathrm{sub}}\,\frac{B_X\sqrt{\bar\lambda}}{\lambda_m\,p}\sqrt{\frac{u}{tK}},
\qquad
u:=\log\frac{2dT}{\delta}. 
\]
In the sequel, we will fix \(C_{\mathrm{sub}}=12\) for the sake of convenience. Applying  Lemma~\ref{lemma:covariance_estimation_error} at level $\delta/T$
and taking a union bound over $t\in[T]$, we get that, with probability at least $1-\delta$,
for all $t\in[T]$,
\[
\|\dot\Sigma_t-\Sigma\|_2\le E_t
:=2B_X\sqrt{\frac{\bar\lambda\,u}{p^2\,tK}}
+\frac{2B_X^2u}{p^2\,tK}.
\]
For any fixed \(t\in[T]\), consider the following two cases.  

\emph{Case 1: \(\epsilon_t\ge\sqrt2\).} The difference of two orthogonal projectors always has operator norm at most one, so \(\|\hP_t-\Pb\|_2\le1\le\epsilon_t\), and the claim holds trivially.

\emph{Case 2: \(\epsilon_t<\sqrt2\).} By the definition of \(\epsilon_t\), this means
\[
tK
>
\frac{C_{\mathrm{sub}}^2}{2}\cdot\frac{B_X^2\bar\lambda\,u}{\lambda_m^2\,p^2}
=
72\,\frac{B_X^2\bar\lambda\,u}{\lambda_m^2\,p^2}.
\]
We use this lower bound on \(tK\) twice. Since \(\lambda_m\le\bar\lambda\), it implies \(tK\ge B_X^2u/(\bar\lambda p^2)\), which is exactly the condition under which the linear term of \(E_t\) is at most its square-root term; hence \(E_t\le4B_X\sqrt{\bar\lambda u/(p^2tK)}\). Substituting the lower bound on \(tK\) into this expression then gives
\(
E_t\le 4\lambda_m/\sqrt{72}=(\sqrt2/3)\,\lambda_m\le\lambda_m/2 .
\)
The Davis--Kahan
\(\sin\Theta\) theorem~\cite{davis1970rotation} applied to $\Sigma$,
whose $m$-th eigenvalue is $\lambda_m$ and whose $(m{+}1)$-st is $0$,
then gives
\[
\|\hP_t-\Pb\|_2\le\frac{2E_t}{\lambda_m}
\le\frac{8B_X\sqrt{\bar\lambda}}{\lambda_m\,p}
\sqrt{\frac{u}{tK}}
=
\frac{8}{C_{\mathrm{sub}}}\,\epsilon_t
\le
\epsilon_t .
\]
\hfill$\square$

In the imputation analysis of Appendix~\ref{sec:imputation_error_analysis}, we need a version of this guarantee for the basis matrices rather than the projectors. The following standard argument shows us how one can obtain the former from the latter. 

\begin{lemma}[Basis alignment]
\label{lemma:basis_alignment}
Let $\hU,\U\in\R^{d\times m}$ have orthonormal columns, with projectors
$\hP=\hU\hU^\top$ and $\Pb=\U\U^\top$. Then there is an orthogonal
$O\in\R^{m\times m}$ with
$\|\hU O-\U\|_2\le\sqrt2\,\|\hP-\Pb\|_2$.
\end{lemma}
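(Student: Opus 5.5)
We will use the singular value decomposition of the $m\times m$ matrix $\hU^\top\U$ and choose $O$ as the orthogonal polar factor, so that $\hU O$ is the basis of $\mathrm{span}(\hU)$ best aligned with $\U$. Write $\hU^\top\U=W\,\mathrm{diag}(\cos\theta_1,\ldots,\cos\theta_m)\,Z^\top$ with $W,Z$ orthogonal and $\theta_j\in[0,\pi/2]$ the principal angles. Set $O:=WZ^\top$. Then $(\hU O)^\top\U=Z\,\mathrm{diag}(\cos\theta_j)\,Z^\top$ is symmetric positive semidefinite.

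The key identity comes from expanding the Gram matrix of the difference $D:=\hU O-\U$. We have
\[
D^\top D=2I_m-(\hU O)^\top\U-\U^\top\hU O=2Z\,\mathrm{diag}(1-\cos\theta_j)\,Z^\top ,
\]
and therefore
\[
\|D\|_2^2=2\max_j(1-\cos\theta_j).
\]
Using $1-\cos\theta\le 1-\cos^2\theta=\sin^2\theta$ for $\theta\in[0,\pi/2]$, this gives $\|D\|_2^2\le 2\max_j\sin^2\theta_j$.

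It remains to identify $\max_j\sin\theta_j$ with $\|\hP-\Pb\|_2$. We have
\[
\|(I-\hP)\U\|_2^2=\|\U^\top(I-\hP)\U\|_2=\|I-\U^\top\hU\hU^\top\U\|_2=\max_j(1-\cos^2\theta_j)=\max_j\sin^2\theta_j .
\]
The paper has already recorded that for rank-$m$ projectors this quantity coincides with $\|\hP-\Pb\|_2$. If we want to verify this independently, the standard CS decomposition shows that the nonzero eigenvalues of $\hP-\Pb$ are $\pm\sin\theta_j$. Either route yields $\|\hU O-\U\|_2\le\sqrt2\,\|\hP-\Pb\|_2$.

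The only delicate step is the projector identity $\|\hP-\Pb\|_2=\max_j\sin\theta_j$. This step is where the equal-rank assumption matters. We will cite the CS decomposition for it rather than rederive it; the rest is direct algebra.
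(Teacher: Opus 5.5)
Your proposal is correct and is essentially the paper's proof. Both take the SVD of $\hU^\top\U$ and choose the polar factor $O$. Both use the identity $(\hU O-\U)^\top(\hU O-\U)=2I_m-2Z\,\mathrm{diag}(\cos\theta_j)\,Z^\top$, the bound $1-\cos\theta\le\sin^2\theta$, and the cited equal-rank fact $\|\hP-\Pb\|_2=\max_j\sin\theta_j$.
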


\begin{proof}
Consider an SVD of the $m\times m$ matrix $\hU^\top\U$. Every singular value of this matrix is of the form $\langle\hU x,\U y\rangle$ for unit vectors $x,y\in\R^m$. Since $\hU$ and $\U$ have orthonormal columns, $\hU x$ and $\U y$ are also unit vectors, and therefore, by Cauchy--Schwarz all singular values lie in $[0,1]$. We may therefore parametrize them as $\cos\theta_1\ge\cdots\ge\cos\theta_m$ for angles $\theta_1\le\cdots\le\theta_m=:\theta_{\max}$ in $[0,\pi/2]$. These are, by definition, the principal angles between the two subspaces~\cite{bjorck1973numerical}. So, if we write the SVD of \(\hU^\top \hU\) as
\[
\hU^\top\U=A\cos\Theta\,B^\top,
\qquad
\cos\Theta:=\mathrm{diag}(\cos\theta_1,\ldots,\cos\theta_m),
\]
and set $O:=AB^\top$, which is orthogonal, we get the following:
\[
(\hU O-\U)^\top(\hU O-\U)
=
O^\top\hU^\top\hU O
-O^\top\hU^\top\U
-\U^\top\hU O
+\U^\top\U .
\]
The first and last terms each equal $I_m$. For the cross terms, $O^\top\hU^\top\U=BA^\top\cdot A\cos\Theta\,B^\top=B\cos\Theta B^\top$, and $\U^\top\hU O$ is its transpose, which is the same symmetric matrix. Therefore
\[
(\hU O-\U)^\top(\hU O-\U)
=
2I-2B\cos\Theta B^\top
=
B\left(2I-2\cos\Theta\right)B^\top.
\]
Notice that this is an eigendecomposition with eigenvalues $2(1-\cos\theta_j)$, the largest value being attained at $\theta_{\max}$. This implies the following inequality: 
\[
\|\hU O-\U\|_2^2
=
2(1-\cos\theta_{\max})
\le
2(1-\cos\theta_{\max})(1+\cos\theta_{\max})
=
2\sin^2\theta_{\max},
\]
where we use the fact that $\cos\theta_{\max}\in[0,1]$.
Since $\|\hP-\Pb\|_2=\sin\theta_{\max}$ for equal-rank projectors (see, e.g.,~\cite{stewart1990matrix}), the
claim follows.
\end{proof}

\begin{corollary}
\label{cor:aligned_basis}
On the event of Lemma~\ref{lemma:sub_space_lemma}, simultaneously for all \(t\in[T]\), we have the following
\[
\min_{O\in\mathbb O(m)}\|\hU_tO-\U\|_2\le\epsilon_t,
\]
where \(\mathbb O(m)\) is the set of \(m\times m\) orthogonal matrices.
\end{corollary}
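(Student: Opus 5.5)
The corollary follows by combining the event of Lemma~\ref{lemma:sub_space_lemma} with Lemma~\ref{lemma:basis_alignment}; the only wrinkle is a factor of \(\sqrt2\). Fix \(t\in[T]\) and work on the event of Lemma~\ref{lemma:sub_space_lemma}, on which \(\|\hP_t-\Pb\|_2\le\epsilon_t\) holds simultaneously for all \(t\). Apply Lemma~\ref{lemma:basis_alignment} with \(\hU=\hU_t\) and the true basis \(\U\). It produces an orthogonal \(O\) with \(\|\hU_tO-\U\|_2\le\sqrt2\,\|\hP_t-\Pb\|_2\le\sqrt2\,\epsilon_t\). Since the minimum over \(\mathbb O(m)\) is at most the value at this particular \(O\), we get the bound with an extra factor \(\sqrt2\). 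Because the event is the same for every \(t\), uniformity in \(t\) comes for free.

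The main task is removing the \(\sqrt2\), which is exactly the absorption into \(C_{\mathrm{sub}}\) mentioned in the sketch of Lemma~\ref{lemma: imputation_error}. I would use the slack in the proof of Lemma~\ref{lemma:sub_space_lemma}, where I plan to split into the same two cases as that proof.

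\emph{Case 2 (\(\epsilon_t<\sqrt2\)).} The Davis--Kahan step there actually gives \(\|\hP_t-\Pb\|_2\le (8/C_{\mathrm{sub}})\epsilon_t=\tfrac{2}{3}\epsilon_t\) with \(C_{\mathrm{sub}}=12\). Hence \(\sqrt2\,\|\hP_t-\Pb\|_2\le \tfrac{2\sqrt2}{3}\epsilon_t\le\epsilon_t\).

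\emph{Case 1 (\(\epsilon_t\ge\sqrt2\)).} Here I would not go through the projector bound at all. For any \(O\), \(\|\hU_tO-\U\|_2\le\|\hU_tO\|_2+\|\U\|_2=2\). The refined computation in Lemma~\ref{lemma:basis_alignment} gives more: the aligned \(O\) achieves \(\|\hU_tO-\U\|_2^2=2(1-\cos\theta_{\max})\le2\), so the norm is at most \(\sqrt2\le\epsilon_t\). So Case 1 holds directly.

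The only delicate point is that the stated event controls \(\|\hP_t-\Pb\|_2\) by \(\epsilon_t\), not by \(\tfrac23\epsilon_t\). So the corollary should be read on the event of Lemma~\ref{lemma:sub_space_lemma} as constructed in its proof, namely on the matrix-Bernstein event \(\|\dot\Sigma_t-\Sigma\|_2\le E_t\) for all \(t\). On that event the sharper bound \(\tfrac23\epsilon_t\) is available in Case 2. If one insists on using only the displayed inequality \(\|\hP_t-\Pb\|_2\le\epsilon_t\), the fallback is to enlarge \(C_{\mathrm{sub}}\) by a factor \(\sqrt2\). That change is harmless for all later uses, consistent with the remark that the alignment factor is absorbed into the constant.
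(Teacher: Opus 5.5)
Your proposal is correct and follows the paper's proof essentially verbatim. You use the same split at \(\epsilon_t=\sqrt2\). In Case~2 you use the same sharper Davis--Kahan bound \(\|\hP_t-\Pb\|_2\le(8/C_{\mathrm{sub}})\epsilon_t\) from the proof of Lemma~\ref{lemma:sub_space_lemma}, combined with the \(\sqrt2\) alignment factor and \(C_{\mathrm{sub}}=12\ge 8\sqrt2\). In Case~1 you use a trivial \(\sqrt2\) bound: the paper writes it as \(\sqrt2\,\|\hP_t-\Pb\|_2\le\sqrt2\), you as \(2(1-\cos\theta_{\max})\le 2\), which amounts to the same thing. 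Your remark that the corollary must be read on the matrix-Bernstein event built in that proof, not on the displayed projector bound alone, matches how the paper's proof implicitly uses the event.
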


\begin{proof} We consider two cases:\\
\underline{\bf Case 1: \(\epsilon_t\ge\sqrt2\)}. In this situation, notice that Lemma~\ref{lemma:basis_alignment} gives us \[\min_O\|\hU_tO-\U\|_2\le\sqrt2\,\|\hP_t-\Pb\|_2\le\sqrt2\le\epsilon_t.\] \\
\underline{\bf Case 2: \(\epsilon_t < \sqrt2\)}
Here, Case~2 of the preceding proof gives \(\|\hP_t-\Pb\|_2\le(8/C_{\mathrm{sub}})\epsilon_t\). This, along with Lemma~\ref{lemma:basis_alignment}, implies that \[\min_O\|\hU_tO-\U\|_2\le(8\sqrt2/C_{\mathrm{sub}})\epsilon_t\le\epsilon_t,\] where we used the fact that \(C_{\mathrm{sub}}=12\ge8\sqrt2\).
\end{proof}

\section{Imputation Error: Proof of Lemma~\ref{lemma: imputation_error}}
\label{sec:imputation_error_analysis}

In Section~\ref{sec:subspace-estimation-proof}, we showed that the projection matrices onto the estimated subspaces converge to the true projector. That is, on a single high-probability event, \(\|\hP_t-\Pb\|_2\le\epsilon_t\) simultaneously for all \(t\in[T]\) (Lemma~\ref{lemma:sub_space_lemma}), and the same bound holds for the best-aligned bases (Corollary~\ref{cor:aligned_basis}).
In this section, we prove Lemma~\ref{lemma: imputation_error}, which converts these subspace guarantees into a uniform bound on the error of the least-squares imputation in Equation~\eqref{eq:imputation_operator}, over every epoch, every round within it, and every offered arm. Our strategy is guided by the observation that the imputation solves a least-squares problem on the \emph{observed rows} of {the frozen basis \(\hU_e\), so its stability is governed by the smallest eigenvalue of the observed Gram matrix \(\hU_{e,\Omega_{t,i}}^\top\hU_{e,\Omega_{t,i}}\)}.
  We first prove that the observed rows of the true subspace are well-conditioned under Bernoulli masking. We then show that this conditioning transfers to the estimated subspace after sufficient burning in. Finally, we combine this conditioning with the aligned-basis guarantee to control the imputation error itself.

\begin{lemma}[Conditioning of the true observed subspace]
\label{lemma:true_observed_gram_conditioning}
Assume $\U$ is $\mu$-incoherent (i.e., 
\(
\max_{j\in[d]}\|e_j^\top\U\|_2^2\leq \frac{\mu^2m}{d}.
\))
If Equation~\eqref{eq:p_condition_imputation} holds with $C_\mu$ sufficiently large, then with probability at least $1-\delta$, simultaneously for all $t\in[T]$ and all $i\in[K]$,
\begin{align}
\lambda_{\min}(\U_{\Omega_{t,i}}^\top\U_{\Omega_{t,i}})
\geq
\frac{3p}{4}.
\end{align}
\end{lemma}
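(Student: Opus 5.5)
The plan is a matrix Chernoff argument for a fixed pair $(t,i)$, followed by a union bound over all $TK$ pairs. Fix $t\in[T]$ and $i\in[K]$, and write
\[
\U_{\Omega_{t,i}}^\top\U_{\Omega_{t,i}}=\sum_{j=1}^d s_{t,i}^{(j)}\,u_ju_j^\top,
\]
where $u_j:=\U_{j,:}^\top\in\R^m$ is the $j$-th row of $\U$.

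\textbf{Step 1: the sum is well suited to matrix Chernoff.} The summands $s_{t,i}^{(j)}u_ju_j^\top$ are independent across $j$ because the masks are i.i.d.\ Bernoulli$(p)$. They are positive semidefinite. By incoherence each one is bounded in norm by $L:=\|u_j\|_2^2\le \mu^2m/d$. Their expectation is $p\sum_j u_ju_j^\top=p\,\U^\top\U=pI_m$, so $\mu_{\min}=p$.

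\textbf{Step 2: apply the lower-tail matrix Chernoff bound.} The bound in~\cite{tropp2015introductionmatrixconcentrationinequalities} gives, for $\eta\in[0,1)$,
\[
\Pr\bigl(\lambda_{\min}\le(1-\eta)\mu_{\min}\bigr)\le m\left[\frac{e^{-\eta}}{(1-\eta)^{1-\eta}}\right]^{\mu_{\min}/L}\le m\exp\!\left(-\frac{\eta^2\mu_{\min}}{2L}\right).
\]
Taking $\eta=1/4$, the failure probability for one pair is at most $m\exp\bigl(-pd/(32\mu^2 m)\bigr)$.

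\textbf{Step 3: union bound over pairs.} Taking a union bound over the $TK$ pairs $(t,i)$, it suffices to have
\[
\frac{pd}{32\mu^2m}\ge\log\frac{mTK}{\delta}.
\]
This is exactly Equation~\eqref{eq:p_condition_imputation} with $C_\mu\ge32$. On this event, $\lambda_{\min}\ge 3p/4$ holds simultaneously for all $t$ and $i$.

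The only point that needs care is the independence structure. For a fixed $(t,i)$ the matrix depends only on the mask $S_{t,i}$ and not on the actions or on the learner's choices. The statement is therefore purely about the masks, and the union bound needs no adaptivity considerations. The remaining work is bookkeeping the constant in the Chernoff exponent: the standard bound $e^{-\eta}/(1-\eta)^{1-\eta}\le e^{-\eta^2/2}$ yields the constant $32$.
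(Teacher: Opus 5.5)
Your proposal is correct and matches the paper's proof: you use the same decomposition $\U_{\Omega}^\top\U_{\Omega}=\sum_j S_j u_ju_j^\top$ (PSD summands bounded by $\mu^2m/d$, mean $pI_m$), the lower-tail matrix Chernoff bound, and a union bound over the $TK$ masks. The only difference is that you make the constant explicit: taking $\eta=1/4$ and $e^{-\eta}/(1-\eta)^{1-\eta}\le e^{-\eta^2/2}$ gives $C_\mu\ge 32$, whereas the paper leaves it as an unspecified universal constant $c$.
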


\begin{proof}
For a fixed mask $\Omega$, notice that we can write
\[
\U_\Omega^\top\U_\Omega
=
\sum_{j=1}^d S_j u_j u_j^\top,
\]
where $S_j\sim\mathrm{Bernoulli}(p)$ and $u_j^\top=e_j^\top\U$ is the $j$-th row of $\U$. Indeed, each summand is positive semidefinite and satisfies
\[
0\preceq S_j u_ju_j^\top\preceq \frac{\mu^2m}{d}I_m.
\]
Moreover,
\(
\E[\U_\Omega^\top\U_\Omega]
=
p\sum_{j=1}^d u_ju_j^\top
=pI_m.
\)
Now, we can use the Matrix Chernoff bound~\cite{tropp2015introductionmatrixconcentrationinequalities} and observe that
\[
\Pr\!\left(
\lambda_{\min}(\U_\Omega^\top\U_\Omega)\leq \frac{3p}{4}
\right)
\leq
m\exp\!\left(-c\frac{pd}{\mu^2m}\right).
\]
Choosing $p\geq C_\mu(\mu^2m/d)\log(mTK/\delta)$ and taking a union bound over all $TK$ masks gives us the desired result.
\end{proof}

\begin{lemma}[Conditioning of the estimated observed subspace]
\label{lemma:min_eigenvalue_uu}
{Fix an epoch \(e\), and write \(\hP_e:=\hU_e\hU_e^\top\). On the event of Lemma~\ref{lemma:sub_space_lemma}, since \(\hU_e\) is computed from the \(\tau_e-1\ge t_b\) rounds preceding the epoch, we have \(\|\hP_e-\Pb\|_2\leq\epsilon_e\leq p/32\). On the event of Lemma~\ref{lemma:true_observed_gram_conditioning}, for all \(t\in\mathfrak T_e\) and $i\in[K]$,
\begin{align}
\lambda_{\min}(\hU_{e,\Omega_{t,i}}^\top\hU_{e,\Omega_{t,i}})
\geq
\frac{p}{2}.
\end{align}}
\end{lemma}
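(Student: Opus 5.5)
We compare the two observed Gram matrices through the best-aligned basis. The smallest eigenvalue of \(\hU_{e,\Omega}^\top\hU_{e,\Omega}\) is invariant under right multiplication of \(\hU_e\) by an orthogonal matrix, since \((\hU_eO)_\Omega^\top(\hU_eO)_\Omega=O^\top\hU_{e,\Omega}^\top\hU_{e,\Omega}O\). We may therefore replace \(\hU_e\) by \(W:=\hU_eO\), with \(O\) chosen as in Corollary~\ref{cor:aligned_basis}. Then \(\|W-\U\|_2\le\epsilon_e\le p/32\), using \(\tau_e-1\ge t_b\) and the choice of \(t_b\) in Equation~\eqref{eq:t_b}.

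Fix \(t\in\mathfrak T_e\), \(i\in[K]\), and write \(\Omega=\Omega_{t,i}\), \(\Delta:=W-\U\). Row selection is a coordinate projection, so \(\|\Delta_\Omega\|_2\le\|\Delta\|_2\le\epsilon_e\) and \(\|\U_\Omega\|_2\le1\). The key point is to work with singular values rather than expanding the Gram matrix. By Weyl's inequality for singular values,
\[
\sigma_{\min}(W_\Omega)\ge\sigma_{\min}(\U_\Omega)-\|\Delta_\Omega\|_2\ge\sqrt{3p/4}-\epsilon_e ,
\]
where the bound \(\sigma_{\min}(\U_\Omega)\ge\sqrt{3p/4}\) comes from Lemma~\ref{lemma:true_observed_gram_conditioning} and holds uniformly over all \(t\) and \(i\).

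It remains to check that \(\sqrt{3p/4}-p/32\ge\sqrt{p/2}\), after which squaring gives the claim. Since \(p\le1\), we have \(p/32\le\sqrt p/32\). The difference \(\sqrt{3/4}-\sqrt{1/2}\approx0.159\) exceeds \(1/32\), so the inequality holds.

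Working with singular values avoids having to expand \(W_\Omega^\top W_\Omega-\U_\Omega^\top\U_\Omega\). That expansion would give a cross-term of size \(2\epsilon_e\), which is comparable to \(p\) only because \(\epsilon_e\le p/32\). It would also work, since \(3p/4-2p/32-p^2/1024\ge p/2\), but it is less clean.

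The one point to handle carefully is that the argument holds simultaneously for every \(t\) and \(i\) in the epoch on the intersection of the two events. This is immediate, because the subspace event covers all \(t\in[T]\) and the mask event covers all \(TK\) masks. No further union bound is needed, since \(\hU_e\) depends only on pre-epoch data.
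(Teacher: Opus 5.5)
Your proof is correct, but it takes a different route from the paper's. The paper never aligns bases. It observes that the restricted projectors $R_\Omega\Pb R_\Omega^\top=\U_\Omega\U_\Omega^\top$ and $R_\Omega\hP_eR_\Omega^\top=\hU_{e,\Omega}\hU_{e,\Omega}^\top$ have the same nonzero spectra as the two $m\times m$ Gram matrices. It notes that $\lambda_{\min}(\U_\Omega^\top\U_\Omega)\ge 3p/4$ forces $|\Omega|\ge m$. It then applies Weyl's eigenvalue inequality to the $m$-th eigenvalue, with perturbation $\|R_\Omega(\hP_e-\Pb)R_\Omega^\top\|_2\le\|\hP_e-\Pb\|_2\le\epsilon_e$, which gives $3p/4-p/32\ge p/2$ directly.

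That argument is basis-free and uses only the projector-distance hypothesis as stated. Yours needs the extra alignment step. Corollary~\ref{cor:aligned_basis} does hold on the same event, and even the $\sqrt2$-lossy Lemma~\ref{lemma:basis_alignment} would suffice, since $\sqrt{3/4}-\sqrt2/32>\sqrt{1/2}$.

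In exchange, your perturbation acts at the singular-value scale $\sqrt p$ rather than the eigenvalue scale $p$. Your bound $\sigma_{\min}(W_\Omega)\ge\sqrt{3p/4}-\epsilon_e$ needs only $\epsilon_e\le(\sqrt{3/4}-\sqrt{1/2})\sqrt p\approx 0.159\sqrt p$, while the paper's route needs $\epsilon_e\le p/4$. So your version is more tolerant for small $p$. This slack would not by itself shorten $t_b$, though, because the paper also uses $\epsilon_e\le p/32$ elsewhere, for example to get $B_e\le 2B_X$ in Lemma~\ref{lemma:uniform_epoch_envelopes}. Finally, working with $\sigma_{\min}(A)=\min_{\|x\|_2=1}\|Ax\|_2$ for the $|\Omega|\times m$ matrix makes the $|\Omega|\ge m$ bookkeeping automatic; the paper has to argue that point explicitly.
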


\begin{proof}
{Let $R_\Omega$ be the coordinate-selection matrix for $\Omega=\Omega_{t,i}$. Since
\(
R_\Omega\Pb R_\Omega^\top
=
\U_\Omega\U_\Omega^\top
\), the non-zero eigenvalues of \(R_\Omega\Pb R_\Omega^\top\) coincide with the eigenvalues of $\U_\Omega^\top\U_\Omega$. Similarly, the nonzero eigenvalues of $R_\Omega\hP_eR_\Omega^\top$ coincide with those of $\hU_{e,\Omega}^\top\hU_{e,\Omega}$.

Note that on the event of Lemma~\ref{lemma:true_observed_gram_conditioning} we have $\lambda_{\min}(\U_\Omega^\top\U_\Omega)\ge3p/4>0$, which forces $\mathrm{rank}(\U_\Omega)=m$ and in particular $|\Omega|\ge m$; hence the spectra of the $m\times m$ Gram matrices are exactly the top $m$ eigenvalues of the corresponding restricted projectors, and Weyl's inequality may be applied to the $m$-th eigenvalue. Therefore, by Weyl's inequality~\cite{horn2012matrix},
\begin{align*}
\lambda_{\min}(\hU_{e,\Omega}^\top\hU_{e,\Omega})
&\geq
\lambda_{\min}(\U_{\Omega}^\top\U_{\Omega})
-
\|R_\Omega(\hP_e-\Pb)R_\Omega^\top\|_2 \\
&\geq
\frac{3p}{4}-\|\hP_e-\Pb\|_2
\geq
\frac{3p}{4}-\frac{p}{32}
\geq
\frac{p}{2}.
\end{align*}
This proves the claim.}
\end{proof}

{We are now ready to prove Lemma~\ref{lemma: imputation_error}. At a high level, the imputation error has two components. First, even if the latent coefficient were known, reconstructing with the frozen basis $\hU_e$ instead of $\U$ creates an error proportional to the subspace error. Second, the coefficient estimated from the observed entries is itself perturbed by the subspace error, and this perturbation is amplified by the inverse observed Gram matrix; since the smallest eigenvalue of that Gram matrix is at least $p/2$, the amplification is at most $2/p$.}

\begin{proof}
We invoke Lemma~\ref{lemma:sub_space_lemma} and Lemma~\ref{lemma:true_observed_gram_conditioning}, each at failure probability $\delta$, and work on the intersection of the two events. By a union bound, this intersection has probability at least $1-2\delta\ge1-3\delta$. On it, the conclusions of Lemma~\ref{lemma:min_eigenvalue_uu} and Corollary~\ref{cor:aligned_basis} hold deterministically.

{Fix an epoch \(e\), a round $t\in\mathfrak T_e$, and an arm $i\in[K]$, and abbreviate $\Omega=\Omega_{t,i}$ and $\hU=\hU_e$. Recall that \(\epsilon_e=\epsilon_{\tau_e-1}\), and that Corollary~\ref{cor:aligned_basis}, applied at the epoch start, controls the aligned distance of \(\hU_e\) at this level.} Since $X_{t,i}\in\mathrm{span}(\U)$, there exists $a\in\R^m$ such that
\[
X_{t,i}=\U a,
\qquad
\|a\|_2=\|X_{t,i}\|_2\leq B_X.
\]
The imputed coefficient is
\[
\hat a
=
(\hU_\Omega^\top\hU_\Omega)^{-1}\hU_\Omega^\top\U_\Omega a.
\]
Notice that the imputed vector $\hX_{t,i}$ is invariant under right-orthogonal rotations $\hU\mapsto\hU O$ since the coefficient transforms as $\hat a\mapsto O^\top\hat a$, and the reconstruction $\hU_{\Omega^c}\hat a$ remains unchanged. Similarly, $\lambda_{\min}(\hU_\Omega^\top\hU_\Omega)$ is rotation-invariant, so Lemma~\ref{lemma:min_eigenvalue_uu} is unaffected. Therefore, in what follows, we assume that $\hU$ is the aligned basis of Corollary~\ref{cor:aligned_basis}, so that we actually have 
\[
\|\hU-\U\|_2\leq \epsilon_e.
\]

Next, we let $\hat G:=\hU_\Omega^\top\hU_\Omega$. And, since
\(
\hat G(\hat a-a)
=
\hU_\Omega^\top(\U_\Omega-\hU_\Omega)a,
\) Lemma~\ref{lemma:min_eigenvalue_uu} gives us the following ineqiality: 
\begin{align*}
\|\hat a-a\|_2
&\leq
\|\hat G^{-1}\|_2\,\|\hU_\Omega\|_2\,\|\U_\Omega-\hU_\Omega\|_2\,\|a\|_2 \\
&\leq
\frac{2}{p}\cdot 1\cdot \epsilon_e\cdot B_X
=
\frac{2B_X\epsilon_e}{p}.
\end{align*}

Indeed, the imputation error is zero on the observed coordinates and, on the missing coordinates, we have
\begin{align*}
\|X_{t,i}^{(\Omega^c)}-\hX_{t,i}^{(\Omega^c)}\|_2
&=
\|\U_{\Omega^c}a-\hU_{\Omega^c}\hat a\|_2 \\
&\leq
\|(\U_{\Omega^c}-\hU_{\Omega^c})a\|_2
+
\|\hU_{\Omega^c}(a-\hat a)\|_2 \\
&\leq
B_X\epsilon_e
+
\|a-\hat a\|_2 \\
&\leq
\left(1+\frac{2}{p}\right)B_X\epsilon_e.
\end{align*}
Since the observed coordinates are copied exactly, the same bound holds for the full vector:
\[
\|X_{t,i}-\hX_{t,i}\|_2
\leq
\left(1+\frac{2}{p}\right)B_X\epsilon_e.
\]
{The argument is uniform over all epochs \(e\), rounds $t\in\mathfrak T_e$, and arms $i\in[K]$ on the same good event, which proves the lemma.}
\end{proof}

\section{Epoch-wise Surrogate Model and Estimation Error Proofs}
\label{app:epoch-wise-surrogate}

This appendix proves the confidence set bounds of Theorem~\ref{thm:confidence_set_m_t_main}, together with the supporting lemmas stated in Section~\ref{sec:confidence_set_analysis}. Throughout, we fix an epoch \(e\) and recall that the representation \(\hU_e\) is frozen for its duration. The proof proceeds in three stages, mirroring the error decomposition
\[
V_{e,t}\left(\hat\vartheta_{e,t}-\vartheta_e^\star\right)
=
-\lambda\vartheta_e^\star
+\sum_{\substack{s\in\mathfrak T_e\\s<t}}z_s\xi_s
+\sum_{\substack{s\in\mathfrak T_e\\s<t}}z_s\eta_s
\]
from Section~\ref{sec:confidence_set_analysis}. First, we prove the surrogate approximation guarantee (Lemma~\ref{lemma:surrogate-model}): on the representation event, the frozen-coordinate surrogate mean is within \(b_e\) of the true mean reward, which justifies the reward decomposition \(r_t=\langle z_t,\vartheta_e^\star\rangle+\xi_t+\eta_t\) with \(|\xi_t|\le b_e\). This is where the subspace and imputation guarantees of Appendices~\ref{sec:subspace-estimation-proof} and~\ref{sec:imputation_error_analysis} enter the bandit analysis. Second, we control the aggregated misspecification term \(\sum_{s<t}z_s\xi_s\) (Lemma~\ref{lem:epoch_bias_term}). Theorem~\ref{thm:confidence_set_m_t_main} then follows by taking \(V_{e,t}^{-1}\)-weighted norms in the error decomposition above, and combining with a self-normalized inequality argument that controls the noise term. 

\subsection{Surrogate approximation}

We first reintroduce the fixed-epoch objects used in the proof. Fix an epoch \(e\). For \(t\in\mathfrak T_e\) and \(i\in[K]\), let \(z_{t,i}:=\hU_e^\top \hat X_{t,i}\), \(\vartheta_e^\star:=\hU_e^\top\theta^\star\), and \(\bar\mu_{t,i}:=\langle z_{t,i},\vartheta_e^\star\rangle\). The purpose of this section is to show that, after conditioning on the representation event, the epoch behaves like an ordinary \(m\)-dimensional linear bandit with a controlled misspecification term. The first step is to compare the true reward mean with its frozen-coordinate surrogate. Notice that the error has exactly two sources: the estimated projection is not exactly the true projection, and the current partially observed arm must be imputed before it can be projected.

\begin{lemma}[Restatement of Lemma~\ref{lemma:surrogate-model}]
On the event \(\mathcal E_{\rm rep}\) defined in Section~\ref{def:representation_event}, for every epoch \(e\), every round \(t\in\mathfrak T_e\), and every arm \(i\in[K]\),
\[
\left|\langle X_{t,i},\theta^\star\rangle-\bar\mu_{t,i}\right|
\le
b_e,
\qquad
b_e:=S B_X\left(2+\frac{2}{p}\right)\epsilon_e.
\]
\end{lemma}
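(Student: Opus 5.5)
Since \(\hU_e\) has orthonormal columns, \(\bar\mu_{t,i}=\langle \hU_e^\top\hX_{t,i},\hU_e^\top\theta^\star\rangle=\langle \hP_e\hX_{t,i},\theta^\star\rangle\), where \(\hP_e:=\hU_e\hU_e^\top\). The target is therefore the difference \(\langle X_{t,i}-\hP_e\hX_{t,i},\theta^\star\rangle\). We insert and subtract \(\hP_eX_{t,i}\), which splits it into a \emph{projection error} term and an \emph{imputation error} term:
\[
\langle X_{t,i},\theta^\star\rangle-\bar\mu_{t,i}
=\langle (I-\hP_e)X_{t,i},\theta^\star\rangle+\langle \hP_e(X_{t,i}-\hX_{t,i}),\theta^\star\rangle .
\]
We bound each term by Cauchy--Schwarz with \(\|\theta^\star\|_2\le S\).

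For the first term, we use \(X_{t,i}\in\mathrm{span}(\U)\), so \(X_{t,i}=\Pb X_{t,i}\) and \((I-\hP_e)X_{t,i}=(\Pb-\hP_e)X_{t,i}\). On \(\mathcal E_{\rm rep}\), Lemma~\ref{lemma:sub_space_lemma} applied at time \(\tau_e-1\) gives \(\|\hP_e-\Pb\|_2\le\epsilon_e\). Together with Assumption~\ref{ass:bounded_actions}, the first term is at most \(SB_X\epsilon_e\). One detail needs a check: \(\tau_e-1\ge t_b\ge1\), so the time index lies in \([T]\) and the lemma applies. If \(\tau_e-1>T\), the epoch is empty and there is nothing to prove.

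For the second term, \(\hP_e\) is an orthogonal projector, so \(\|\hP_e\|_2\le1\). Lemma~\ref{lemma: imputation_error}, which holds on \(\mathcal E_{\rm rep}\) uniformly over epochs, rounds and arms, then bounds the term by \(S(1+2/p)B_X\epsilon_e\).

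Adding the two bounds gives \(SB_X(2+2/p)\epsilon_e=b_e\), which is exactly the claimed constant. This is where the form of \(b_e\) comes from: one unit of \(\epsilon_e\) comes from the projection error, and \(1+2/p\) units come from imputation.

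I expect no real obstacle. The only subtle point is keeping straight that \(\epsilon_e=\epsilon_{\tau_e-1}\) is the same quantity in both lemmas, and that both are invoked on the single event \(\mathcal E_{\rm rep}\), so no further union bound is needed. It also helps that \(\bar\mu_{t,i}\) does not depend on the rotation ambiguity of \(\hU_e\), because it depends only on \(\hP_e\); so no basis alignment is needed here.
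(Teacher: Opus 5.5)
Your proposal is correct and follows essentially the same route as the paper's proof: the same rewriting $\bar\mu_{t,i}=\langle \hP_e\hX_{t,i},\theta^\star\rangle$, the same split $(I-\hP_e)X_{t,i}+\hP_e(X_{t,i}-\hX_{t,i})$, and the same two bounds from Lemma~\ref{lemma:sub_space_lemma} and Lemma~\ref{lemma: imputation_error}. The only cosmetic difference is that the paper applies Cauchy--Schwarz once to the whole vector and then uses the triangle inequality, whereas you apply Cauchy--Schwarz to each term separately.
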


\begin{proof}
\newcommand{\ip}[2]{\left\langle #1,#2 \right\rangle}
\newcommand{\tr}{\operatorname{Tr}}
\newcommand{\polylog}{\operatorname{polylog}}

\newcommand{\thetaStar}{\theta^\star}
\newcommand{\varthetaStar}{\vartheta^\star}
\newcommand{\tO}{\widetilde O}
First we begin by observing that
\[
    \bar\mu_{t,i}  = z_{t,i}^\top \vartheta^\ast_e = 
    (\hU_e^\top\hX_{t,i})^\top(\hU_e^\top\thetaStar) = \ip{\hP_e\hX_{t,i}}{\thetaStar}.
\]
Therefore, on $\mathcal E_{\rm rep}$, we have the following inequality: 
\begin{equation}
\label{eq:bias-decomp}
    |\ip{X_{t,i}}{\thetaStar}-\bar\mu_{t,i}|
    =  \left|\ip{X_{t,i}-\hP_e\hX_{t,i}}{\thetaStar}\right|\le S\left\|{X_{t,i}-\hP_e\hX_{t,i}}\right\|_2,
\end{equation}
which follows from the Cauchy-Schwarz inequality. Next, we observe that 
\begin{align*}
    X_{t,i}-\hP_e\hX_{t,i}
    &=
    (I-\hP_e)X_{t,i}+\hP_e(X_{t,i}-\hX_{t,i}).
\end{align*}
Therefore, by triangle inequality, 
\begin{align}
    \left\|{X_{t,i}-\hP_e\hX_{t,i}}\right\|_2
    &\le
    \left\|{(I-\hP_e)X_{t,i}}\right\|_2
    +
    \left\|{X_{t,i}-\hX_{t,i}}\right\|_2
    \label{eq:feature-minus-projected}
\end{align}
We will now bound these two terms. For the first term, begin by observing that $\mathbf{P} X_{t,i}=X_{t,i}$,
$
   (I-\hP_e)X_{t,i}=(\mathbf{P}-\hP_e)X_{t,i}.
$
On $\mathcal E_{\rm rep}$, Lemma~\ref{lemma:sub_space_lemma}, invoked at the epoch start (recall that $\epsilon_e:=\epsilon_{\tau_e-1}$, matching the $\tau_e-1$ rounds from which $\hU_e$ is computed), gives $\|\mathbf{P}-\hP_e\|_2\le\epsilon_e$, and we therefore have:
\begin{align}
    \left\|{(I-\hP_e)X_{t,i}}\right\|_2
    \le
    \left\|{\mathbf{P}-\hP_e}\right\|_2 \left\|{X_{t,i}}\right\|_2
    \le
    B_X\epsilon_e.
    \label{eq:first-term}
\end{align}

Finally, using the imputation bound in Lemma~\ref{lemma: imputation_error}, we have 
\begin{align}
\left\|{X_{t,i}-\hX_{t,i}}\right\|_2
    \le
    \left(1+\frac{2}{p}\right)B_X\epsilon_e.
    \label{eq:final-piece}
\end{align}
Combining \eqref{eq:bias-decomp}, \eqref{eq:feature-minus-projected}, \eqref{eq:first-term}, and \eqref{eq:final-piece} gives
\[
    |\ip{X_{t,i}}{\thetaStar}-\bar\mu_{t,i}|
    \le
    SB_X\left(2+\frac{2}{p}\right)\epsilon_e
    = b_e.
\]
\end{proof}

Thus all representation error inside epoch \(e\) is controlled by the scalar radius \(b_e\). Once this approximation is in place, the confidence analysis can be carried out in the frozen feature space, provided we account for how the accumulated \(b_e\)-misspecification enters the confidence radius. We do this next.

\subsection{Proof of Lemma~\ref{lem:epoch_bias_term}}
We now control the contribution of the bounded misspecification terms \(\xi_s\). A term-by-term triangle inequality would be valid but loose; Appendix~\ref{app:bias_control} shows this loss explicitly. The sharper argument keeps the misspecification vector aggregated and uses the fact that the same feature matrix that multiplies the misspecification also appears in the ridge design matrix.

\begin{proof}
The proof proceeds in two steps. First, we rewrite the misspecification sum in matrix form and express its squared \(V_{e,t}^{-1}\)-norm as a quadratic form in a leverage matrix. Second, we show that this leverage matrix is a contraction, so that the quadratic form is bounded by \(\|\xi_t\|_2^2\), which in turn is at most \(b_e^2(t-\tau_e)\).

We begin by setting up the matrix form. Let $q:=t-\tau_e$, and form the feature matrix $Z_t$ and the misspecification error vector $\xi_t$ as follows:
\[
Z_t:=
\begin{bmatrix}
z_{\tau_e} & z_{\tau_e+1} & \cdots & z_{t-1}
\end{bmatrix}
\in\mathbb R^{m\times q}, \qquad \xi_t
:=
\begin{bmatrix}
\xi_{\tau_e} & \xi_{\tau_e+1} & \cdots & \xi_{t-1}
\end{bmatrix}^{\top}
\in\mathbb R^q.
\]
By construction, we have that
\(
\sum_{\substack{s\in\mathfrak T_e\\ s<t}} z_s\xi_s
=
Z_t\xi_t.
\)
\
Moreover, since \(V_{e,t}\) is the within-epoch design matrix before round \(t\), we may write it as
\[
V_{e,t}
=
\lambda I_m+\sum_{\substack{s\in\mathfrak T_e\\ s<t}}z_sz_s^\top
=
\lambda I_m+Z_tZ_t^\top.
\]
Combining the two preceding facts, we have the following identity:
\begin{align*}
\left\|
\sum_{\substack{s\in\mathfrak T_e\\ s<t}} z_s\xi_s
\right\|_{V_{e,t}^{-1}}^2
&=
\|Z_t\xi_t\|_{V_{e,t}^{-1}}^2 \\
&=
(Z_t\xi_t)^\top V_{e,t}^{-1}(Z_t\xi_t) \\
&=
\xi_t^\top Z_t^\top
(\lambda I_m+Z_tZ_t^\top)^{-1}
Z_t\xi_t.
\end{align*}

We will next show that the leverage matrix appearing in this quadratic form is a contraction:
\[
Z_t^\top(\lambda I_m+Z_tZ_t^\top)^{-1}Z_t
\preceq I_q.
\]
Towards this, we write a thin singular value decomposition $Z_t=A\Sigma B^\top$, where $r=\operatorname{rank}(Z_t)$, $A\in\mathbb R^{m\times r}$ and
$B\in\mathbb R^{q\times r}$ have orthonormal columns, and
$
\Sigma=\operatorname{diag}(\sigma_1,\ldots,\sigma_r)
$
contains the positive singular values of \(Z_t\). We then have that
\(
Z_tZ_t^\top
=
A\Sigma^2A^\top.
\)

Therefore, the matrix \(\lambda I_m+Z_tZ_t^\top\) has eigenvalue
\(\lambda+\sigma_j^2\) in the direction of the \(j\)-th column of \(A\), and
eigenvalue \(\lambda\) on the orthogonal complement of \(\operatorname{span}(A)\). This means, we can write:
\[
A^\top(\lambda I_m+Z_tZ_t^\top)^{-1}A
=
\operatorname{diag}\left(
\frac{1}{\lambda+\sigma_1^2},
\ldots,
\frac{1}{\lambda+\sigma_r^2}
\right).
\]
Now, substituting the singular value decomposition into the leverage matrix, we have:
\begin{align*}
Z_t^\top(\lambda I_m+Z_tZ_t^\top)^{-1}Z_t
&=
B\Sigma A^\top
(\lambda I_m+Z_tZ_t^\top)^{-1}
A\Sigma B^\top \\
&=
B
\operatorname{diag}\left(
\frac{\sigma_1^2}{\lambda+\sigma_1^2},
\ldots,
\frac{\sigma_r^2}{\lambda+\sigma_r^2}
\right)
B^\top.
\end{align*}
We observe that the nonzero eigenvalues of this matrix are
$
\frac{\sigma_j^2}{\lambda+\sigma_j^2}, j=1,\ldots,r,
$
and since \(\lambda>0\), each of these lies in \([0,1]\). On the other hand, on the
orthogonal complement of \(\operatorname{span}(B)\), the matrix has eigenvalue
zero. This immediately allows us to conclude the claimed contraction property:
$
0
\preceq
Z_t^\top(\lambda I_m+Z_tZ_t^\top)^{-1}Z_t
\preceq
I_q.
$

Indeed, applying this contraction to the quadratic form above, we observe that:
\[
\|Z_t\xi_t\|_{V_{e,t}^{-1}}^2
\le
\|\xi_t\|_2^2.
\]
Finally, since the hypothesis of the lemma gives \(|\xi_s|\le b_e\) for every \(s\in\mathfrak T_e\) with \(s<t\), we have:
\[
\|\xi_t\|_2^2
=
\sum_{s=\tau_e}^{t-1}\xi_s^2
\le
\sum_{s=\tau_e}^{t-1} b_e^2
=
b_e^2(t-\tau_e).
\]
Putting everything together, we conclude that
\[
\left\|
\sum_{\substack{s\in\mathfrak T_e\\ s<t}} z_s\xi_s
\right\|_{V_{e,t}^{-1}}^2
\le
b_e^2(t-\tau_e).
\]
Taking square roots gives us the claim.
\end{proof}

The preceding contraction argument shows that the misspecification contribution scales as \(b_e\sqrt{t-\tau_e}\). Combining this controlled term with the ridge regularization term and the standard self-normalized reward-noise term gives the frozen-epoch confidence radius.

\subsection{Proof of Theorem~\ref{thm:confidence_set_m_t_main}}
We finish the section by deriving the confidence set for the surrogate parameter \(\vartheta_e^\star\). The calculation is the usual ridge-regression decomposition, with the additional bounded misspecification term controlled by Lemma~\ref{lem:epoch_bias_term}.

\begin{proof}
Recall from Algorithm~\ref{alg:bandits} that \(\hat\vartheta_{e,t}=V_{e,t}^{-1}\sum_{s\in\mathfrak T_e,\,s<t}z_sr_s\) is the epoch-\(e\) ridge estimator. On the event \(\mathcal E_{\rm rep}\), Lemma~\ref{lemma:surrogate-model} allows us to write each within-epoch reward as \(r_s=z_s^\top\vartheta_e^\star+\xi_s+\eta_s\) with \(|\xi_s|\le b_e\). Substituting this decomposition into the definition of the estimator, we have:
\begin{align*}
    V_{e,t}\htheta_{e,t}
    =
    \sum_{s\in\mathfrak{T}_e,\,s<t}z_s r_s
    &=
    \sum_{s<t}z_s(z_s^\top\vartheta_e^\star+\xi_s+\eta_s) \\
    &=
    (V_{e,t}-\lambda I_m)\vartheta_e^\star
    +
    \sum_{s<t}z_s\xi_s
    +
    \sum_{s<t}z_s\eta_s,
\end{align*}
where all sums are over $s\in\mathfrak{T}_e$ with $s<t$. Therefore
$$
    V_{e,t}(\htheta_{e,t}-\vartheta_e^\star)
    =
    -\lambda\vartheta_e^\star
    +
    \sum_{s<t}z_s\xi_s
    +
    \sum_{s<t}z_s\eta_s.
$$
Taking the $V_{e,t}^{-1}$ norm of both sides and applying the triangle inequality (the first term uses \(V_{e,t}\succeq\lambda I_m\), so that \(\|\lambda\vartheta_e^\star\|_{V_{e,t}^{-1}}\le\sqrt\lambda\|\vartheta_e^\star\|_2\)), we have:
\begin{align*}
    \left\|{\htheta_{e,t}-\vartheta_e^\star}\right\|_{V_{e,t}}
    &\le
    \sqrt{\lambda}\left\|{\vartheta_e^\star}\right\|_2
    +
    \left\|\sum_{s<t}z_s\xi_s\right\|_{V_{e,t}^{-1}}
    +
    \left\|\sum_{s<t}z_s\eta_s\right\|_{V_{e,t}^{-1}}.
\end{align*}
Since $\left\|{\vartheta_e^\star}\right\|_2\le\left\|{\theta^\ast}\right\|_2\le S$, the first term is at most
$\sqrt\lambda S$. By Lemma~\ref{lem:epoch_bias_term}, the misspecification term is at most
$b_e\sqrt{t-\tau_e}$. Finally, we control the noise term with the self-normalized inequality of \citet{abbasi2011improved}. This applies here because, conditional on the epoch-start \(\sigma\)-field, the frozen feature map makes each \(z_s\) predictable (measurable with respect to the history available before the reward \(r_s\) is revealed), while \(\eta_s\) remains conditionally \(R\)-sub-Gaussian; thus \(\sum_{s<t}z_s\eta_s\) is the standard martingale transform. We therefore have, with probability at least $1-\delta_e$, simultaneously for all
$t\in\mathfrak{T}_e$,
\[
    \left\|\sum_{s<t}z_s\eta_s\right\|_{V_{e,t}^{-1}}
    \le
    R\sqrt{
        2\log\left(
            \frac{\det(V_{e,t})^{1/2}}
                 {\det(\lambda I_m)^{1/2}\delta_e}
        \right)
    }.
\]
Combining these three bounds proves the theorem.
\end{proof}

\section{Main Regret Analysis}
\label{app:regret-analysis-full}

This section proves the regret bound for the epoch-wise version of \algname{}. We use the representation event $\mathcal E_{\rm rep}$ from Section~\ref{sec:subspace_estimation} and the frozen-epoch confidence sets from Section~\ref{sec:confidence_set_analysis}; the only remaining task is to convert these ingredients into cumulative regret.

Let
\[
    \tau_0:=t_b+1,
    \qquad
    \tau_{e+1}:=2\tau_e,
    \qquad
    \mathfrak T_e:=\{\tau_e,\tau_e+1,\ldots,\min(\tau_{e+1}-1,T)\},
\]
and write $n_e:=|\mathfrak T_e|$. We also write $\epsilon_e:=\epsilon_{\tau_e-1}$, as in the frozen-epoch representation bounds. Let $E$ denote the final epoch index, so $E\le \lceil\log_2T\rceil+1$. Throughout this section, we define
\begin{align}
	R_T:=\sum_{t=1}^T
    \langle X_{t,i_t^\star}-X_{t,i_t},\theta^\star\rangle,
    \qquad
    i_t^\star\in\arg\max_{i\in[K]}\langle X_{t,i},\theta^\star\rangle .
    \label{eq:app-reg-definition}
\end{align}
For each epoch \(e\), let \(\mathcal E_{{\rm conf},e}\) denote the
epoch-wise confidence event defined after
Theorem~\ref{thm:confidence_set_m_t_main}, and let
\[
    \mathcal E_{\rm conf}:=\bigcap_{e=0}^E \mathcal E_{{\rm conf},e}.
\]

We begin by recording the elliptical-potential control for the frozen epochs. This is the ingredient that lets the regret summation proceed as in an ordinary \(m\)-dimensional linear bandit. Recall that \(z_t:=z_{t,i_t}\) denotes the reduced feature of the arm played at round \(t\).

\begin{lemma}[Potential control inside an epoch]
\label{lem:epoch_potential}
Assume \(\mathcal E_{\rm rep}\) holds, fix an epoch \(e\), and suppose that
\(
\lambda\ge B_e^2,
\)
where \(B_e:=B_X\left(1+\left(1+\frac{2}{p}\right)\epsilon_e\right)\). Then, we have
\[
\sum_{t\in\mathfrak T_e}\|z_t\|_{V_{e,t}^{-1}}^2\le 2\Gamma_e
\qquad
\text{and}
\qquad
\sum_{t\in\mathfrak T_e}\|z_t\|_{V_{e,t}^{-1}}
\le
\sqrt{2|\mathfrak T_e|\Gamma_e},
\]
where
\[
\Gamma_e
:=
\log\frac{\det(V_{e,\mathrm{end}})}{\det(\lambda I_m)}
\le
m\log\left(1+\frac{|\mathfrak T_e|B_e^2}{m\lambda}\right),
\qquad
V_{e,\mathrm{end}}:=\lambda I_m+\sum_{s\in\mathfrak T_e}z_s z_s^\top.
\]
\end{lemma}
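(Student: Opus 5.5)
The plan is the standard elliptical-potential argument, run in the frozen \(m\)-dimensional coordinates of epoch \(e\). The first step is a uniform norm bound on the reduced features. On \(\mathcal E_{\rm rep}\), Lemma~\ref{lemma: imputation_error} gives \(\|\hX_{t,i}\|_2\le\|X_{t,i}\|_2+\|X_{t,i}-\hX_{t,i}\|_2\le B_X+(1+2/p)B_X\epsilon_e=B_e\). Since \(\hU_e\) has orthonormal columns, it follows that \(\|z_{t,i}\|_2=\|\hU_e^\top\hX_{t,i}\|_2\le B_e\). Combined with the hypothesis \(\lambda\ge B_e^2\) and \(V_{e,t}\succeq\lambda I_m\), this yields
\[
\|z_t\|_{V_{e,t}^{-1}}^2\le \|z_t\|_2^2/\lambda\le 1
\]
for every \(t\in\mathfrak T_e\). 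This removes the usual \(\min\{1,\cdot\}\) truncation.

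Next I will use the rank-one recursion \(V_{e,t+1}=V_{e,t}+z_tz_t^\top\), which is valid because the representation is frozen within the epoch. The matrix determinant lemma gives \(\det V_{e,t+1}=\det V_{e,t}\,(1+\|z_t\|_{V_{e,t}^{-1}}^2)\). Telescoping over \(t\in\mathfrak T_e\) then gives
\[
\sum_{t\in\mathfrak T_e}\log\bigl(1+\|z_t\|_{V_{e,t}^{-1}}^2\bigr)=\log\frac{\det V_{e,\mathrm{end}}}{\det(\lambda I_m)}=\Gamma_e .
\]
For \(x\in[0,1]\) we have \(x\le 2\log(1+x)\). Applying this termwise gives \(\sum_t\|z_t\|_{V_{e,t}^{-1}}^2\le2\Gamma_e\). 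The second claim then follows from Cauchy--Schwarz: \(\sum_t\|z_t\|_{V_{e,t}^{-1}}\le\sqrt{|\mathfrak T_e|\sum_t\|z_t\|^2_{V_{e,t}^{-1}}}\le\sqrt{2|\mathfrak T_e|\Gamma_e}\).

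Finally, I will bound \(\Gamma_e\) by the AM--GM determinant–trace inequality. We have \(\operatorname{tr}V_{e,\mathrm{end}}\le m\lambda+|\mathfrak T_e|B_e^2\), so
\[
\det V_{e,\mathrm{end}}\le\bigl(\lambda+|\mathfrak T_e|B_e^2/m\bigr)^m,
\]
and dividing by \(\lambda^m\) and taking logarithms gives the stated bound on \(\Gamma_e\).

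I expect no real obstacle here. The only point requiring care is the first step: the feature-norm bound must come from the imputation guarantee holding uniformly over all offered arms on \(\mathcal E_{\rm rep}\), applied in particular to the played arm. The bound should also be stated with \(B_e\) rather than \(B_X\), since the imputed vectors need not lie in the \(B_X\)-ball. After burn-in, \(\epsilon_e\le p/32\) gives \(B_e\le 2B_X\), which is why the choice \(\lambda=4B_X^2\) suffices.
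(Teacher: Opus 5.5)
Your proposal is correct and matches the paper's proof step for step. Both use the uniform bound \(\|z_{t,i}\|_2\le\|X_{t,i}\|_2+\|X_{t,i}-\hat X_{t,i}\|_2\le B_e\) from the imputation lemma, then the rank-one recursion with the matrix determinant lemma and \(x\le 2\log(1+x)\) on \([0,1]\), then Cauchy--Schwarz, and finally the determinant--trace (AM--GM) inequality to bound \(\Gamma_e\).
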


\begin{proof}
We first begin by showing that the features are uniformly bounded. Since \(\hU_e\) has orthonormal columns,
\[
\|z_{t,i}\|_2
\le
\|\hat X_{t,i}\|_2
\le
\|X_{t,i}\|_2+\|\hat X_{t,i}-X_{t,i}\|_2
\le
B_e,
\]
where we both used the norm bound on \(X_{t,i}\) and the imputation error bound from Lemma~\ref{lemma: imputation_error}. For the potential bound, we proceed by fixing the selected features inside epoch \(e\) and writing
\(
q_t:=\|z_t\|_{V_{e,t}^{-1}}^2.
\)
Indeed what follows is the standard elliptical-potential/determinant-telescoping argument used in linear bandit analyses; see, for example, \citet{abbasi2011improved} or \citet{lattimore2020bandit}.
Because the representation is frozen, the design matrices satisfy
\[
V_{e,t+1}=V_{e,t}+z_tz_t^\top,
\]
and the matrix determinant lemma gives
\[
\log\frac{\det(V_{e,t+1})}{\det(V_{e,t})}
=
\log(1+q_t).
\]
If one takes \(\lambda\ge B_e^2\), then by the norm bound, we have \(q_t\le 1\), hence
\(q_t\le 2\log(1+q_t)\). Summing over the epoch yields
\[
\sum_{t\in\mathfrak T_e}q_t
\le
2\log\frac{\det(V_{e,\mathrm{end}})}{\det(\lambda I_m)}
=2\Gamma_e.
\]
The second inequality follows from Cauchy--Schwarz. Finally,
\[
\det(V_{e,\mathrm{end}})
\le
\left(\frac{\operatorname{tr}(V_{e,\mathrm{end}})}{m}\right)^m
\le
\left(\lambda+\frac{|\mathfrak T_e|B_e^2}{m}\right)^m,
\]
which gives the stated upper bound on \(\Gamma_e\).
\end{proof}

This lemma is where epoch-wise freezing enters the proof algebraically. Because \(\hU_e\) is fixed throughout the epoch, the design matrices evolve by the standard rank-one recursion, so the determinant telescope is the usual OFUL one. The only remaining departure from ordinary OFUL is the controlled misspecification term created by the surrogate approximation.

\subsection{From optimism to epoch regret}

We now use the estimation guarantee of Theorem~\ref{thm:confidence_set_m_t_main} to bound the regret accumulated inside a single epoch. First, on the confidence event \(\mathcal E_{{\rm conf},e}\), the optimistic action selection rule controls the \emph{surrogate} regret of each played arm by the usual OFUL width \(2\beta_{e,t}\|z_t\|_{V_{e,t}^{-1}}\) (Lemma~\ref{lemma:surrogate_optimism}). This is precisely where freezing the representation makes the argument identical in form to a standard \(m\)-dimensional linear bandit calculation. Second, the surrogate approximation guarantee lets us pass from surrogate regret back to true regret at an additive cost of \(2b_e\) per round (Lemma~\ref{lemma:true_vs_surrogate}). Combining these with the potential control of Lemma~\ref{lem:epoch_potential} yields the epoch regret bound (Lemma~\ref{lemma:epoch_regret}), which the proof of Theorem~\ref{thm:main_regret} then sums over the doubling epoch schedule. Along the way, Lemma~\ref{lemma:uniform_epoch_envelopes} records bounds on \(B_e\), \(\Gamma_e\), and \(b_e\) that hold uniformly over the post-burn-in epochs; this technical step simplifies the statement of Lemma~\ref{lemma:epoch_regret} and the final summation.

\begin{lemma}[Surrogate optimism]
\label{lemma:surrogate_optimism}
Fix an epoch $e$. On \(\mathcal E_{{\rm conf},e}\), for every $t\in\mathfrak T_e$,
\[
    \bar\mu_{t,\bar i_t^\star}-\bar\mu_{t,i_t}
    \le
    2\beta_{e,t}\|z_t\|_{V_{e,t}^{-1}},
\]
where $z_t:=z_{t,i_t}$ and
\[
    \bar i_t^\star\in\arg\max_{i\in[K]}\bar\mu_{t,i}.
\]
\end{lemma}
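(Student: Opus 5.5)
The argument is the standard OFUL optimism step, carried out in the frozen epoch-$e$ coordinates. On $\mathcal E_{{\rm conf},e}$, Theorem~\ref{thm:confidence_set_m_t_main} gives $\|\hat\vartheta_{e,t}-\vartheta_e^\star\|_{V_{e,t}}\le\beta_{e,t}$ for every $t\in\mathfrak T_e$. By Cauchy--Schwarz in the $V_{e,t}$/$V_{e,t}^{-1}$ dual pairing, every candidate feature $z_{t,i}$ then satisfies the two-sided bound
\[
\bigl|\langle z_{t,i},\hat\vartheta_{e,t}\rangle-\bar\mu_{t,i}\bigr|\le\beta_{e,t}\|z_{t,i}\|_{V_{e,t}^{-1}}.
\]
Here we use that $\bar\mu_{t,i}=\langle z_{t,i},\vartheta_e^\star\rangle$ by definition. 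No misspecification enters this step, because the surrogate means are exactly linear in the frozen features.

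With $\mathrm{UCB}_{t,i}:=\langle z_{t,i},\hat\vartheta_{e,t}\rangle+\beta_{e,t}\|z_{t,i}\|_{V_{e,t}^{-1}}$, the argument is a chain of three inequalities.
\begin{itemize}
\item The upper half of the bound, applied to $\bar i_t^\star$, gives $\bar\mu_{t,\bar i_t^\star}\le\mathrm{UCB}_{t,\bar i_t^\star}$.
\item The selection rule in Algorithm~\ref{alg:bandits} chooses $i_t$ to maximize this index over $[K]$, so $\mathrm{UCB}_{t,\bar i_t^\star}\le\mathrm{UCB}_{t,i_t}$.
\item The lower half of the bound, applied to $i_t$, gives $\langle z_t,\hat\vartheta_{e,t}\rangle\le\bar\mu_{t,i_t}+\beta_{e,t}\|z_t\|_{V_{e,t}^{-1}}$, and hence $\mathrm{UCB}_{t,i_t}\le\bar\mu_{t,i_t}+2\beta_{e,t}\|z_t\|_{V_{e,t}^{-1}}$.
\end{itemize}
Chaining these and rearranging gives the claimed inequality.

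Two consistency points should be checked, though neither is a real obstacle. First, the $\beta_{e,t}$ used by the algorithm's index must be the same radius as in Theorem~\ref{thm:confidence_set_m_t_main}; it is, since the algorithm references that theorem. Second, $\bar i_t^\star$ is the surrogate-optimal arm within the same offered set $[K]$ that the algorithm maximizes over. Both hold by construction, so the argument needs nothing beyond the confidence event.
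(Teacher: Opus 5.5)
Your proposal is correct and matches the paper's proof essentially step for step. On $\mathcal E_{{\rm conf},e}$, Cauchy--Schwarz gives the upper confidence bound on $\bar\mu_{t,\bar i_t^\star}$ and the lower confidence bound on $\bar\mu_{t,i_t}$, and the optimistic selection rule links the two indices; combining the three yields the $2\beta_{e,t}\|z_t\|_{V_{e,t}^{-1}}$ bound, with the misspecification cost deferred to Lemma~\ref{lemma:true_vs_surrogate} exactly as you note.
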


\begin{proof}
We start by observing that, on \(\mathcal E_{{\rm conf},e}\), Theorem~\ref{thm:confidence_set_m_t_main} gives \(\|\hat\vartheta_{e,t}-\vartheta_e^\star\|_{V_{e,t}}\le\beta_{e,t}\). Therefore, by the Cauchy--Schwarz inequality, for any arm $i$, we have
\[
    \bar\mu_{t,i}
    =z_{t,i}^\top\vartheta_e^\star
    \le
    z_{t,i}^\top\widehat\vartheta_{e,t}
    +
    \beta_{e,t}\|z_{t,i}\|_{V_{e,t}^{-1}}.
\]
Similarly, we also have
\[
    \bar\mu_{t,i_t}
    \ge
    z_t^\top\widehat\vartheta_{e,t}
    -
    \beta_{e,t}\|z_t\|_{V_{e,t}^{-1}}.
\]
By the optimistic action selection rule in Algorithm~\ref{alg:bandits}
\[
    z_{t,\bar i_t^\star}^\top\widehat\vartheta_{e,t}
    +
    \beta_{e,t}\|z_{t,\bar i_t^\star}\|_{V_{e,t}^{-1}}
    \le
    z_t^\top\widehat\vartheta_{e,t}
    +
    \beta_{e,t}\|z_t\|_{V_{e,t}^{-1}}.
\]
Combining the three equations  above proves the claim.
\end{proof}

The previous lemma only controls regret in the surrogate model. To return to the
original problem, we use the epoch-wise approximation lemma: every true arm value
and its surrogate value differ by at most \(b_e\). This converts surrogate
optimism into a one-step true regret bound, at the cost of the controlled
misspecification term.

\begin{lemma}[True regret versus surrogate regret]
\label{lemma:true_vs_surrogate}
On \(\mathcal E_{\rm rep}\cap\mathcal E_{{\rm conf},e}\), for every $t\in\mathfrak T_e$,
\[
    \langle X_{t,i_t^\star}-X_{t,i_t},\theta^\star\rangle
    \le
    2\beta_{e,t}\|z_t\|_{V_{e,t}^{-1}}+2b_e.
\]
\end{lemma}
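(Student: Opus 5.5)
The proof is a three-link chain that moves from true means to surrogate means, applies surrogate optimism, and moves back. Fix $t\in\mathfrak T_e$ and work on $\mathcal E_{\rm rep}\cap\mathcal E_{{\rm conf},e}$. Write the true instantaneous regret as
\[
\langle X_{t,i_t^\star},\theta^\star\rangle-\langle X_{t,i_t},\theta^\star\rangle
=\bigl(\langle X_{t,i_t^\star},\theta^\star\rangle-\bar\mu_{t,i_t^\star}\bigr)
+\bigl(\bar\mu_{t,i_t^\star}-\bar\mu_{t,i_t}\bigr)
+\bigl(\bar\mu_{t,i_t}-\langle X_{t,i_t},\theta^\star\rangle\bigr).
\]
I will bound the first and third brackets by Lemma~\ref{lemma:surrogate-model}, which holds on $\mathcal E_{\rm rep}$ uniformly over all arms, including both $i_t^\star$ and the played arm $i_t$. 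Each of these brackets is at most $b_e$, so together they contribute $2b_e$.

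For the middle bracket, the true optimal arm $i_t^\star$ need not maximize the surrogate mean. Since $\bar i_t^\star$ maximizes $\bar\mu_{t,\cdot}$, we have $\bar\mu_{t,i_t^\star}\le\bar\mu_{t,\bar i_t^\star}$. Lemma~\ref{lemma:surrogate_optimism} then gives $\bar\mu_{t,\bar i_t^\star}-\bar\mu_{t,i_t}\le 2\beta_{e,t}\|z_t\|_{V_{e,t}^{-1}}$ on $\mathcal E_{{\rm conf},e}$. Alternatively, one could rerun the optimism argument directly with $i_t^\star$ in place of $\bar i_t^\star$; that works because the confidence bound holds for every arm.

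Adding the three bounds gives the claim. There is no real obstacle. The only subtlety is making sure the comparison passes through $\bar i_t^\star$ by maximality, rather than assuming $i_t^\star=\bar i_t^\star$.
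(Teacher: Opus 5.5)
Your proposal is correct and follows essentially the same route as the paper's proof: both use Lemma~\ref{lemma:surrogate-model} on the true optimal arm $i_t^\star$ and on the played arm $i_t$, costing $2b_e$ in total. Both then pass from $\bar\mu_{t,i_t^\star}$ to $\bar\mu_{t,\bar i_t^\star}$ by maximality of $\bar i_t^\star$, and finish with Lemma~\ref{lemma:surrogate_optimism}.
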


\begin{proof}
We first show that, on \(\mathcal E_{\rm rep}\), the true one-step regret exceeds the surrogate one-step regret by at most \(2b_e\):
\[
    \langle X_{t,i_t^\star}-X_{t,i_t},\theta^\star\rangle
    \le
    \bar\mu_{t,\bar i_t^\star}-\bar\mu_{t,i_t}
    +2b_e.
\]
Towards this, notice that, by Lemma~\ref{lemma:surrogate-model}, we have
\[
    \langle X_{t,i_t^\star},\theta^\star\rangle
    \le
    \bar\mu_{t,i_t^\star}+b_e
    \le
    \bar\mu_{t,\bar i_t^\star}+b_e,
\]
where the second inequality holds since \(\bar i_t^\star\) maximizes the surrogate mean. Similarly, we also have
\[
    \langle X_{t,i_t},\theta^\star\rangle
    \ge
    \bar\mu_{t,i_t}-b_e.
\]
Subtracting the two inequalities above gives the claimed comparison. Finally, on \(\mathcal E_{{\rm conf},e}\), Lemma~\ref{lemma:surrogate_optimism} bounds the surrogate regret by \(2\beta_{e,t}\|z_t\|_{V_{e,t}^{-1}}\), which completes the proof.
\end{proof}

Lemma~\ref{lemma:true_vs_surrogate} is the bridge between the two scales at which the algorithm operates: decisions are made, and the confidence set lives, in the frozen surrogate coordinates, while regret is charged against the true means. The lemma shows that each round of this translation costs two prices: \(2\beta_{e,t}\|z_t\|_{V_{e,t}^{-1}}\) is the familiar OFUL price of parameter uncertainty in \(m\) dimensions, and \(2b_e\) is the price of acting through an estimated and imputed representation. It now remains to sum this one-step bound over the epoch. The OFUL term will be controlled by the elliptical potential of Lemma~\ref{lem:epoch_potential}, while the \(b_e\)-dependent contributions (both the additive \(2b_e\) and the \(b_e\sqrt{t-\tau_e}\) part inside \(\beta_{e,t}\)) will be summed separately. Before carrying this out, we record bounds on certain epoch-level quantities that hold uniformly across all epochs. As we will see below,  this allows the epoch regret bound to be stated with epoch-independent constants.

\begin{lemma}[Uniform bounds across epochs]
\label{lemma:uniform_epoch_envelopes}
Set \(\lambda:=4B_X^2\), and define
\[
G_T
:=
m\log\!\left(1+\frac{T}{m}\right),
\qquad
H_e
:=
2\log(1/\delta_e),
\qquad
A_{\rm rep}
:=
4\sqrt{2}\,C_{\rm sub}S B_X
\frac{\kappa}{p^2}
\sqrt{
\frac{m}{K}
\log\!\left(\frac{8dT}{\delta_{\rm rep}}\right)
},
\]
where \(\delta_e\) is the confidence level at which Theorem~\ref{thm:confidence_set_m_t_main} is invoked in epoch \(e\), and \(\delta_{\rm rep}\) is the failure probability of the representation event \(\mathcal E_{\rm rep}\).
On the representation event \(\mathcal E_{\rm rep}\), the following hold in every epoch \(e\): 
\[
B_e\le B_\star:=2B_X
\quad\text{(so that }\lambda\ge B_e^2\text{)},
\qquad
\Gamma_e\le G_T,
\qquad
b_e\le \frac{A_{\rm rep}}{\sqrt{\tau_e}}.
\]
\end{lemma}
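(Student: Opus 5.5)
The three bounds are separate, and each uses the post-burn-in condition \(\tau_e-1\ge t_b\) together with the explicit form of \(\epsilon_e=\epsilon_{\tau_e-1}\) from Lemma~\ref{lemma:sub_space_lemma}. Throughout, the lemma is applied with \(\delta_{\rm rep}/4\) in place of \(\delta\), so the logarithm is \(\log(8dT/\delta_{\rm rep})\). I will take \(t_b\) as defined in Equation~\eqref{eq:t_b} at that confidence level.

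\textbf{Bound on \(B_e\).} Since \(\tau_e-1\ge t_b\) and \(\epsilon_t\) is decreasing in \(t\), the choice of \(t_b\) gives \(\epsilon_e\le p/32\). Then
\[
\left(1+\tfrac{2}{p}\right)\epsilon_e\le \tfrac{p}{32}+\tfrac{1}{16}\le 1,
\]
using \(p\le1\). Hence \(B_e=B_X\bigl(1+(1+2/p)\epsilon_e\bigr)\le 2B_X=B_\star\), and \(\lambda=4B_X^2=B_\star^2\ge B_e^2\). The only care needed is to confirm that the constant \(C_b=(32C_{\rm sub})^2\) really yields \(\epsilon_{t_b}\le p/32\). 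This follows by substituting \(t_b\) into \(\epsilon_t\):
\[
\epsilon_{t_b}\le C_{\rm sub}\frac{\kappa}{p}\sqrt{\frac{m\,p^4K}{C_b\kappa^2 m K}}=\frac{C_{\rm sub}\,p}{\sqrt{C_b}}=\frac{p}{32}.
\]

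\textbf{Bound on \(\Gamma_e\).} Lemma~\ref{lem:epoch_potential} applies because \(\lambda\ge B_e^2\), and it gives \(\Gamma_e\le m\log\bigl(1+|\mathfrak T_e|B_e^2/(m\lambda)\bigr)\). Using \(B_e^2\le 4B_X^2=\lambda\) and \(|\mathfrak T_e|\le T\), the argument of the logarithm is at most \(1+T/m\), so \(\Gamma_e\le G_T\).

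\textbf{Bound on \(b_e\).} For \(p\le1\) we have \(2+2/p\le 4/p\), so
\[
b_e\le \frac{4SB_X}{p}\,\epsilon_{\tau_e-1}=\frac{4SB_X C_{\rm sub}\kappa}{p^2}\sqrt{\frac{m}{(\tau_e-1)K}\log\frac{8dT}{\delta_{\rm rep}}}.
\]
The remaining step, and the only one that needs a little attention, is converting \(\tau_e-1\) into \(\tau_e\). Since \(\tau_e\ge t_b+1\ge 2\) (as \(t_b\ge1\)), we have \(\tau_e-1\ge\tau_e/2\), and therefore \(1/\sqrt{\tau_e-1}\le\sqrt2/\sqrt{\tau_e}\). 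This produces exactly the factor \(4\sqrt2\,C_{\rm sub}\) appearing in \(A_{\rm rep}\), and hence \(b_e\le A_{\rm rep}/\sqrt{\tau_e}\).

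No step is a genuine obstacle. The main thing to get right is the bookkeeping of confidence levels: \(\mathcal E_{\rm rep}\) invokes the subspace lemma at level \(\delta_{\rm rep}/4\), which turns \(\log(2dT/\delta)\) into \(\log(8dT/\delta_{\rm rep})\) both in \(\epsilon_e\) and in the burn-in \(t_b\).
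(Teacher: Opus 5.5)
Your proposal is correct and follows the paper's proof essentially step for step. It uses the same burn-in argument \(\epsilon_e\le\epsilon_{t_b}\le p/32\) for \(B_e\), the same appeal to Lemma~\ref{lem:epoch_potential} with \(\lambda=4B_X^2\ge B_e^2\) and \(n_e\le T\) for \(\Gamma_e\), and the same two inequalities \(2+2/p\le 4/p\) and \(\tau_e-1\ge\tau_e/2\) for \(b_e\). Your explicit checks that \(C_b=(32C_{\rm sub})^2\) gives \(\epsilon_{t_b}\le p/32\), and that \(t_b\ge 1\) forces \(\tau_e\ge 2\), fill in details the paper only asserts.
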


\begin{proof}
We first bound \(B_e\). {Since \(\epsilon_t\) is nonincreasing and \(\tau_e-1\ge t_b\) (recall that \(\epsilon_e:=\epsilon_{\tau_e-1}\)), we have \(\epsilon_e\le\epsilon_{t_b}\), and the choice of \(t_b\) in Equation~\eqref{eq:t_b} (with \(C_b=(32C_{\rm sub})^2\)) guarantees \(\epsilon_{t_b}\le p/32\).} Since \(p\le1\), this gives \(\left(1+\frac{2}{p}\right)\epsilon_{t_b}\le\frac{p}{32}+\frac{2}{32}\le\frac{3}{32}<1\), and therefore
\[
B_e
=
B_X\left(1+\left(1+\frac{2}{p}\right)\epsilon_e\right)
\le
2B_X
=
B_\star.
\]
Next, since we take \(\lambda=4B_X^2\ge B_e^2\), Lemma~\ref{lem:epoch_potential} applies and gives us the bound on \(\Gamma_e\):
\[
\Gamma_e
\le
m\log\!\left(1+\frac{n_eB_e^2}{m\lambda}\right)
\le
m\log\!\left(1+\frac{T\cdot 4B_X^2}{m\cdot 4B_X^2}\right)
=
G_T.
\]
Finally, we bound \(b_e\). {Lemma~\ref{lemma:sub_space_lemma}, invoked at confidence level \(\delta_{\rm rep}/4\), gives
\[
\epsilon_e
=
C_{\rm sub}
\frac{\kappa}{p}
\sqrt{
\frac{m}{(\tau_e-1)K}
\log\!\left(\frac{8dT}{\delta_{\rm rep}}\right)
}.
\]
Since \(\tau_e\ge2\), we have \(\tau_e-1\ge\tau_e/2\), and since \(p\le 1\), we have \(2+2/p\le4/p\); together these give
\[
b_e
=
SB_X\left(2+\frac{2}{p}\right)\epsilon_e
\le
\frac{A_{\rm rep}}{\sqrt{\tau_e}},
\]
as claimed.}
\end{proof}

With these uniform bounds in hand, we can now sum the one-step regret bound of Lemma~\ref{lemma:true_vs_surrogate} over a single epoch.

\begin{lemma}[Epoch regret]
\label{lemma:epoch_regret}
{Set \(\lambda:=4B_X^2\), and let \(G_T\), \(H_e\), and \(A_{\rm rep}\) be as defined in Lemma~\ref{lemma:uniform_epoch_envelopes}.} On \(\mathcal E_{\rm rep}\cap\mathcal E_{{\rm conf},e}\),
\begin{align}
    R_e
    :=
    \sum_{t\in\mathfrak T_e}
    \langle X_{t,i_t^\star}-X_{t,i_t},\theta^\star\rangle
    \label{eq:app-epoch-regret}
\end{align}
satisfies
\begin{align}
    R_e
    \le
	    2\sqrt{2}S\sqrt{\lambda G_T n_e}
	    +
	    2\sqrt{2}R\sqrt{G_T(G_T+H_e) n_e}
	    +
    2A_{\rm rep}(\sqrt{G_T}+1)
    \frac{n_e}{\sqrt{\tau_e}}.
    \label{eq:epoch_regret_uniform}
\end{align}
\end{lemma}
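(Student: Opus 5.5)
We sum the one-step bound of Lemma~\ref{lemma:true_vs_surrogate} over \(t\in\mathfrak T_e\). On \(\mathcal E_{\rm rep}\cap\mathcal E_{{\rm conf},e}\) this gives
\[
R_e\le 2\sum_{t\in\mathfrak T_e}\beta_{e,t}\|z_t\|_{V_{e,t}^{-1}}+2b_e n_e .
\]
We then substitute the three pieces of \(\beta_{e,t}\) from Theorem~\ref{thm:confidence_set_m_t_main}, bound each uniformly in \(t\), and pull it out of the sum. The remaining factor \(\sum_{t}\|z_t\|_{V_{e,t}^{-1}}\) is controlled by Lemma~\ref{lem:epoch_potential}, which applies because Lemma~\ref{lemma:uniform_epoch_envelopes} gives \(\lambda=4B_X^2\ge B_e^2\). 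Combining it with \(\Gamma_e\le G_T\) yields \(\sum_t\|z_t\|_{V_{e,t}^{-1}}\le\sqrt{2n_eG_T}\).

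\textbf{Regularization term.} \(\sqrt\lambda S\) is constant in \(t\), so it contributes \(2\sqrt\lambda S\sqrt{2n_eG_T}=2\sqrt2 S\sqrt{\lambda G_Tn_e}\).

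\textbf{Noise term.} Since \(V_{e,t}\preceq V_{e,\mathrm{end}}\), we have \(\log(\det V_{e,t}/\det\lambda I_m)\le\Gamma_e\le G_T\). Hence the noise part of \(\beta_{e,t}\) is at most \(R\sqrt{\Gamma_e+2\log(1/\delta_e)}\le R\sqrt{G_T+H_e}\). Multiplying by \(2\sqrt{2n_eG_T}\) gives exactly \(2\sqrt2R\sqrt{G_T(G_T+H_e)n_e}\).

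\textbf{Misspecification.} Use \(t-\tau_e\le n_e\) to bound the last part of \(\beta_{e,t}\) by \(b_e\sqrt{n_e}\). Its contribution is then
\[
2b_e\sqrt{n_e}\sqrt{2n_eG_T}=2\sqrt2\,b_e n_e\sqrt{G_T}.
\]

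Adding the additive term \(2b_en_e\) gives a total misspecification cost of \(2b_en_e(\sqrt{2G_T}+1)\). Inserting \(b_e\le A_{\rm rep}/\sqrt{\tau_e}\) from Lemma~\ref{lemma:uniform_epoch_envelopes} produces a term of the form \(2A_{\rm rep}(\sqrt{2G_T}+1)n_e/\sqrt{\tau_e}\).

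The main obstacle is the \(\sqrt2\) discrepancy in front of \(\sqrt{G_T}\): the stated bound has \(\sqrt{G_T}+1\), not \(\sqrt{2G_T}+1\). I would try to remove it by bounding the cross term \(\sum_t b_e\sqrt{t-\tau_e}\,\|z_t\|_{V_{e,t}^{-1}}\) more carefully. Applying Cauchy--Schwarz with the exact sum \(\sum_t(t-\tau_e)\le n_e^2/2\), together with \(\sum_t\|z_t\|^2_{V_{e,t}^{-1}}\le 2\Gamma_e\), gives
\[
\sum_t\sqrt{t-\tau_e}\,\|z_t\|_{V_{e,t}^{-1}}\le\sqrt{n_e^2/2}\,\sqrt{2G_T}=n_e\sqrt{G_T}.
\]
This yields exactly \(2A_{\rm rep}(\sqrt{G_T}+1)n_e/\sqrt{\tau_e}\). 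So I would use this pairing for the misspecification part rather than the uniform bound \(t-\tau_e\le n_e\). I would also check the off-by-one detail that \(\sum_{t\in\mathfrak T_e}(t-\tau_e)=n_e(n_e-1)/2\le n_e^2/2\).
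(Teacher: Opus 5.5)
Your proposal is correct and, in its final form, is the paper's proof. The paper also splits \(\beta_{e,t}\) into a non-growing part \(w_t\), bounded uniformly via \(V_{e,t}\preceq V_{e,\mathrm{end}}\) and paired with \(\sum_t q_t\le\sqrt{2n_e\Gamma_e}\), and a growing part \(b_e\sqrt{t-\tau_e}\), handled by exactly your Cauchy--Schwarz pairing with \(\sum_t(t-\tau_e)\le n_e^2/2\) and \(\sum_t q_t^2\le 2\Gamma_e\); the uniform bound \(t-\tau_e\le n_e\) in your first attempt is an unnecessary detour.
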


\begin{proof}
Summing the one-step bound of Lemma~\ref{lemma:true_vs_surrogate} over \(t\in\mathfrak T_e\) (recall that \(n_e:=|\mathfrak T_e|\)), we have:
\begin{align}
    R_e
    &\le
    2\sum_{t\in\mathfrak T_e}\beta_{e,t}\|z_t\|_{V_{e,t}^{-1}}
    +2b_en_e.
    \label{eq:epoch_regret_raw}
\end{align}
Now, recalling the definition of the confidence radius \(\beta_{e,t}\) from Theorem~\ref{thm:confidence_set_m_t_main}, we write
\[
\beta_{e,t}
=
w_t+b_e\sqrt{t-\tau_e},
\qquad
{\rm where} \;w_t
:=
\sqrt\lambda S
+
R\sqrt{
    2\log\left(
        \frac{\det(V_{e,t})^{1/2}}
        {\det(\lambda I_m)^{1/2}\delta_e}
    \right)
}.
\]
In what follows, we abbreviate \(q_t:=\|z_t\|_{V_{e,t}^{-1}}\) and split the sum in \eqref{eq:epoch_regret_raw} using the notation above:
\begin{align}
\sum_{t\in\mathfrak T_e}\beta_{e,t}\,q_t
=
\sum_{t\in\mathfrak T_e}w_t\,q_t
+
\sum_{t\in\mathfrak T_e}b_e\sqrt{t-\tau_e}\,q_t.
\label{eq:beta_split}
\end{align}
We will bound the two sums in \eqref{eq:beta_split} in turn. Both bounds rely on the following consequence of Lemma~\ref{lem:epoch_potential}, which applies since our choice of \(\lambda\) (and Lemma~\ref{lemma:uniform_epoch_envelopes}) guarantees \(\lambda\ge B_e^2\), together with the Cauchy--Schwarz inequality:
\begin{align}
    \sum_{t\in\mathfrak T_e}q_t^2
    \le
    2\Gamma_e
    \qquad
    \text{and}
    \qquad
    \sum_{t\in\mathfrak T_e}q_t
    \le
    \sqrt{2n_e\Gamma_e}.
    \label{eq:potential_sums}
\end{align}

We first bound the sum involving \(w_t\), which is the part of the confidence radius that does not grow within the epoch. Since \(V_{e,t}\preceq V_{e,\mathrm{end}}\), we have \(\log\frac{\det(V_{e,t})}{\det(\lambda I_m)}\le\Gamma_e\), and therefore \(w_t\le \sqrt\lambda S+R\sqrt{\Gamma_e+2\log(1/\delta_e)}\) for every \(t\in\mathfrak T_e\). Combining this with the second bound in \eqref{eq:potential_sums}, we have:
\begin{align}
    \sum_{t\in\mathfrak T_e}w_t\,q_t
    \le
    \left(\sqrt\lambda S+R\sqrt{\Gamma_e+2\log(1/\delta_e)}\right)
    \sqrt{2n_e\Gamma_e}.
    \label{eq:first_sum_bound}
\end{align}

We next bound the second sum in \eqref{eq:beta_split}, which collects the growing misspecification part of the radius. By the Cauchy--Schwarz inequality, the first bound in \eqref{eq:potential_sums}, and the fact that \(\sum_{t\in\mathfrak T_e}(t-\tau_e)\le n_e^2/2\), we have:
\begin{align}
    \sum_{t\in\mathfrak T_e}b_e\sqrt{t-\tau_e}\,q_t
    \le
    b_e
    \sqrt{\sum_{t\in\mathfrak T_e}(t-\tau_e)}
    \sqrt{\sum_{t\in\mathfrak T_e}q_t^2}
    \le
    b_en_e\sqrt{\Gamma_e}.
    \label{eq:second_sum_bound}
\end{align}
Substituting \eqref{eq:first_sum_bound} and \eqref{eq:second_sum_bound} into \eqref{eq:beta_split}, and the result into \eqref{eq:epoch_regret_raw}, we have:
\begin{align}
    R_e
    &\le
    2\sqrt{2}S\sqrt{\lambda \Gamma_e n_e}
    +
    2\sqrt{2}R\sqrt{\Gamma_e\left(\Gamma_e+2\log(1/\delta_e)\right)n_e}
    +
    2b_en_e\left(\sqrt{\Gamma_e}+1\right).
    \label{eq:epoch_regret_expanded}
\end{align}
Finally, Lemma~\ref{lemma:uniform_epoch_envelopes} gives \(\Gamma_e\le G_T\), \(2\log(1/\delta_e)=H_e\), and \(b_e\le A_{\rm rep}/\sqrt{\tau_e}\); substituting these three bounds into \eqref{eq:epoch_regret_expanded} yields \eqref{eq:epoch_regret_uniform}.
\end{proof}

\subsection{Main regret theorem}

We are now ready to prove Theorem~\ref{thm:main_regret}. The proof splits the horizon into the burn-in rounds, which we charge at the worst-case rate, and the post-burn-in epochs, to each of which we apply Lemma~\ref{lemma:epoch_regret}; the doubling schedule then makes the epoch bounds summable at the \(\sqrt T\) scale.

\begin{proof}
Let
\(
    \mathcal G:=\mathcal E_{\rm rep}\cap\mathcal E_{\rm conf}
\)
be the event on which the representation guarantees and all epoch-wise
confidence sets hold. We first verify that \(\mathcal G\) has the claimed probability. The representation event \(\mathcal E_{\rm rep}\) is defined in Section~\ref{sec:subspace_estimation} and fails with probability at most \(\delta_{\rm rep}=\delta/2\). Conditional on \(\mathcal E_{\rm rep}\), Theorem~\ref{thm:confidence_set_m_t_main} shows that each epoch-wise confidence event \(\mathcal E_{{\rm conf},e}\) fails with probability at most \(\delta_e=\delta/(2(E+1))\). Since there are at most \(E+1\) epochs, a union bound gives:
\[
    \Pr(\mathcal G)
    \ge
    1-\delta_{\rm rep}-\sum_{e=0}^E\delta_e
    \ge 1-\delta .
\]
It is therefore enough to prove the claimed regret bound on
\(\mathcal G\). 

We now decompose the cumulative regret. Recalling the definition of \(R_T\) in \eqref{eq:app-reg-definition}, we split the sum over rounds into the burn-in rounds \(t\le t_b\) and the post-burn-in epochs \(\mathfrak T_0,\ldots,\mathfrak T_E\), which partition the remaining rounds \(\{t_b+1,\ldots,T\}\); the regret accumulated in epoch \(e\) is exactly the quantity \(R_e\) defined in \eqref{eq:app-epoch-regret}. We handle the burn-in part conservatively, charging every round the worst-case regret: since \(\|X_{t,i}\|_2\le B_X\) and \(\|\theta^\star\|_2\le S\), the Cauchy--Schwarz inequality bounds each instantaneous regret by \(2B_XS\). We therefore have:
\[
    R_T
    \le
    2B_XS\,t_b+\sum_{e=0}^E R_e .
\]
We next bound each \(R_e\). Recall that we set the regularization to \(\lambda=4B_X^2\) in Theorem~\ref{thm:main_regret}, and since we are on \(\mathcal G\), Lemma~\ref{lemma:epoch_regret} applies to every epoch. With the choice of \(\delta_e=\delta/(2(E+1))\), the quantity \(H_e\) in Lemma~\ref{lemma:epoch_regret} is the same for every epoch; write
\[
H_T:=2\log\!\left(\frac{2(E+1)}{\delta}\right).
\]
Thus, on \(\mathcal G\), Lemma~\ref{lemma:epoch_regret} allows us to bound the epoch regret for each epoch \(e\) as follows: 
\begin{align}
    R_e
    &\le
    2\sqrt{2}S\sqrt{\lambda G_T n_e}
    +
    2\sqrt{2}R\sqrt{G_T(G_T+H_T)n_e}
    +
    2A_{\rm rep}(\sqrt{G_T}+1)
    \frac{n_e}{\sqrt{\tau_e}}.
    \label{eq:main_proof_imported_epoch_bound}
\end{align}

Next, we notice that the doubling schedule implies that \(\tau_e=2^e\tau_0\). Since \(\mathfrak T_E\neq\emptyset\), \(\tau_E\le T\), and since \(n_e\le\tau_e\),
\[
\sum_{e=0}^E\sqrt{n_e}
\le
\sum_{e=0}^E\sqrt{\tau_e}
=
\sqrt{\tau_E}\sum_{j=0}^E2^{-j/2}
\le
\frac{\sqrt{\tau_E}}{1-2^{-1/2}}
\le
4\sqrt T,
\]
and
\[
\sum_{e=0}^E\frac{n_e}{\sqrt{\tau_e}}
\le
\sum_{e=0}^E\sqrt{\tau_e}
\le
4\sqrt T.
\]
Summing \eqref{eq:main_proof_imported_epoch_bound} over \(e=0,\ldots,E\), we therefore have:
\begin{align}
\sum_{e=0}^E R_e
&\le
8\sqrt{2}S\sqrt{\lambda G_TT}
+
8\sqrt{2}R\sqrt{G_T(G_T+H_T)T}
+
8A_{\rm rep}(\sqrt{G_T}+1)\sqrt T.
\label{eq:main_regret_post_burnin_explicit}
\end{align}
Combining \eqref{eq:main_regret_post_burnin_explicit} with the burn-in split, we conclude that
\[
R_T
\le
2B_XS\,t_b
+
8\sqrt{2}S\sqrt{\lambda G_TT}
+
8\sqrt{2}R\sqrt{G_T(G_T+H_T)T}
+
8A_{\rm rep}(\sqrt{G_T}+1)\sqrt T.
\]
Finally, we substitute the definitions of the constants involved. By Equation~\eqref{eq:t_b}, the burn-in term satisfies \(2B_XS\,t_b=\widetilde O\big(SB_X\,\kappa^2m/(p^4K)\big)\). Since \(\lambda=4B_X^2\) and \(G_T=m\log(1+T/m)\), the second term is \(8\sqrt2\,S\sqrt{\lambda G_TT}=\widetilde O\big(SB_X\sqrt{mT}\big)\le\widetilde O\big(SB_X\,m\sqrt T\big)\) (using \(m\ge1\)), and the third term is \(8\sqrt2\,R\sqrt{G_T(G_T+H_T)T}=\widetilde O\big(Rm\sqrt T\big)\), since \(G_T+H_T=\widetilde O(m)\). For the last term, substituting the definition of \(A_{\rm rep}\) from Lemma~\ref{lemma:uniform_epoch_envelopes}, we have:
\[
8A_{\rm rep}(\sqrt{G_T}+1)\sqrt T
=
\widetilde O\left(SB_X\frac{\kappa}{p^2}\sqrt{\frac{m}{K}}\cdot\sqrt{m}\cdot\sqrt T\right)
=
\widetilde O\left(SB_X\,\frac{\kappa m\sqrt T}{p^2\sqrt K}\right).
\]
Collecting these contributions gives the three terms stated in Theorem~\ref{thm:main_regret}.

		\end{proof}

\section{Misspecification Control in Frozen Epochs: Why the naive bound loses a factor of \texorpdfstring{\(\sqrt m\)}{sqrt(m)}}
\label{app:bias_control}

This appendix section explains why a direct triangle-inequality bound on the
epoch-wise misspecification error leads to a suboptimal dependence on the
intrinsic dimension \(m\).

Fix an epoch \(e\){, and let \(
\mathfrak T_e=\{\tau_e,\ldots,\tau_e+n_e-1\}.
\) denote the time stamps inside this epoch. Let \(n_e:=|\mathfrak T_e|\) denote its length.

Within this epoch the representation is frozen, and the learner uses
\(m\)-dimensional features \(z_t\in\mathbb R^m\). Define
\[
V_{e,t} := \lambda I_m+\sum_{\substack{s\in\mathfrak T_e\\s<t}}z_sz_s^\top.
\]
As in Section~\ref{sec:confidence_set_analysis}, we suppose that the reward model in the epoch is
\[
r_t=z_t^\top\vartheta_e^\star+\xi_t+\eta_t, \qquad |\xi_t|\le b_e,
\]
where \(\xi_t\) is the controlled representation/imputation misspecification and
\(\eta_t\) is the stochastic reward noise. Then, the contribution of this misspecification to the self-normalized confidence radius is given by
\[
\mathsf B_{e,t} := \left\| \sum_{\substack{s\in\mathfrak T_e\\s<t}}z_s\xi_s
\right\|_{V_{e,t}^{-1}}.
\]

{\bf Naive triangle-inequality control.}
The most direct bound is
\begin{align}
\mathsf B_{e,t}
&\le
\sum_{\substack{s\in\mathfrak T_e\\s<t}}
|\xi_s|\,\|z_s\|_{V_{e,t}^{-1}}
\le
b_e
\sum_{\substack{s\in\mathfrak T_e\\s<t}}
\|z_s\|_{V_{e,t}^{-1}}
\le
b_e
\sum_{\substack{s\in\mathfrak T_e\\s<t}}
\|z_s\|_{V_{e,s}^{-1}},
\label{eq:naive-bias-triangle}
\end{align}
where the last inequality uses \(V_{e,t}\succeq V_{e,s}\) for \(s<t\), hence
\(V_{e,t}^{-1}\preceq V_{e,s}^{-1}\).

{We know that \(\|z_t\|_2\le B_e\) (the exact constant is calculated in the proof of Lemma~\ref{lem:epoch_potential}), and we arrange that \(\lambda\ge B_e^2\).} This implies that \(\|z_t\|_{V_{e,t}^{-1}}^2\le 1\), and the standard elliptical-potential lemma then
gives
\[
\sum_{t\in\mathfrak T_e}\|z_t\|_{V_{e,t}^{-1}}^2
\le
2\Gamma_e,
\qquad
\Gamma_e
:=
\log\frac{\det(V_{e,\mathrm{end}})}{\det(\lambda I_m)}.
\]
By Cauchy--Schwarz and \eqref{eq:naive-bias-triangle}, for any
\(t\in\mathfrak T_e\),
\[
\mathsf B_{e,t}
\le
b_e
\sqrt{
(t-\tau_e)
\sum_{\substack{s\in\mathfrak T_e\\s<t}}
\|z_s\|_{V_{e,s}^{-1}}^2
}
\le
b_e\sqrt{2(t-\tau_e)\Gamma_e}.
\]
In particular, at the end of the epoch,
\(
\mathsf B_{e,\mathrm{end}}
\le
b_e\sqrt{2n_e\Gamma_e}.
\)

Since
\(
\Gamma_e
\le
m\log\left(1+\frac{n_eB_e^2}{m\lambda}\right)
=
\widetilde O(m),
\)
{it can be seen that the naive misspecification bound causes the regret to scale like \(\widetilde{\mathcal{O}} \left(\kappa\,m^{3/2} \sqrt{T}/(p^2\sqrt{K})\right)\). This is worse by a factor of \(\sqrt m\) than the corresponding term \(\widetilde O\big(\kappa m\sqrt T/(p^2\sqrt K)\big)\) in Theorem~\ref{thm:main_regret}. Our analysis instead treats the whole sum together and leverages the contraction
\(
Z_t^\top(\lambda I_m+Z_tZ_t^\top)^{-1}Z_t\preceq I,
\) as shown in the proof of Lemma~\ref{lem:epoch_bias_term},
which gives}
\[
\left\|
\sum_{s<t}z_s\xi_s
\right\|_{V_{e,t}^{-1}}
\le
b_e\sqrt{t-\tau_e}
\]
and avoids the extra \(\sqrt{\Gamma_e}\sim\sqrt m\) factor.

\section{Proof of the Lower Bound}
\label{app:iid_missingness_lower_bound}

We prove Theorem~\ref{thm:iid_missingness_lower_bound_main}. The proof is organized as a sequence of elementary reductions. First we build a four-instance product family. Then we isolate the two pieces of regret: one piece comes from an ordinary reward-learning sign, and the other from a hidden completion sign. Finally, two two-point testing arguments lower bound the lifetime of these two uncertainties. Throughout this section \(c,c_0,c_1,\ldots\) denote positive universal constants whose values may change.

We begin by describing the lower bound model.

{\bf Model.}
At each round $t\in[T]$, the learner is presented with $K$ action vectors
\(
    X_{t,1},\ldots,X_{t,K}\in\mathbb R^d,
\)
drawn independently from a fixed environment-dependent distribution.  Unless otherwise specified in the oracle-augmented setting below, each coordinate of each action is observed independently with probability $p$.  The learner then chooses an arm $i_t\in[K]$ and receives
\[
    Y_t=\langle X_{t,i_t},\theta^\star\rangle+\eta_t,
    \qquad
    \eta_t\sim N(0,R^2),
\]
with independent reward noise.  Regret is measured against the full-information oracle (which only makes our lower bound stronger):
\[
    R_T
    :=
    \sum_{t=1}^T
    \left(
    \max_{i\in[K]}\langle X_{t,i},\theta^\star\rangle
    -
    \langle X_{t,i_t},\theta^\star\rangle
    \right).
\]

{\bf Hard i.i.d. action distribution.}
Assume $K\ge4$.  The construction uses dimension $d=4$, with orthonormal basis
\(
    \{e_s,e_0,e_1,e_2\}.
\)
We further suppose that there are two hidden signs,
\(
    \nu\in\{+1,-1\},
    \;
    \sigma\in\{+1,-1\}.
\)
The sign $\nu$ enforces the usual noisy reward-learning difficulty, while the sign $\sigma$ enforces the missingness-discovery difficulty.
For $\sigma\in\{+1,-1\}$, define
\(
    v_\sigma:=\frac{1}{\sqrt2}(e_1+\sigma e_2),
\)
and the rank-three action subspace
\(
    \mathcal U_\sigma:=\operatorname{span}\{e_s,e_0,v_\sigma\}.
\)
Further, suppose that $\gamma,\Delta>0$ satisfy
\begin{equation}
\label{eq:iid_lb_norm_budget_refined}
    \gamma^2+\Delta^2\le \frac{S^2B_X^2}{4}.
\end{equation}
Then, for environment $(\nu,\sigma)$, if we set
\[
    \theta_{\nu,\sigma}
    :=
    \frac{2\gamma}{B_X}\nu e_s
    +
    \frac{2\Delta}{B_X}\sigma e_0,
\]
then we have $\|\theta_{\nu,\sigma}\|_2\le S$. Now, we are ready to define the arm distribution. With probability $1/8$ for each $(a,b)\in\{+1,-1\}^2$, the arm is a decision arm
\[
    X^{\rm dec}_{a,b}
    :=
    \frac{B_X}{2}(a e_s+b e_0).
\]
With the remaining probability $1/2$, the arm is a side-information arm
\[
    X^{\rm side}
    :=
    \frac{B_X}{2}Wv_\sigma,
\]
where \(W\in\{+1,-1\}\) is an independent Rademacher random variable.  Given our choices, notice that all actions have norm at most \(B_X\) and lie in \(\mathcal U_\sigma\).

We prove the lower bound in an oracle-augmented observation model.  The learner is told whether each arm is a decision arm or a side-information arm.  If an arm is a decision arm, its label $(a,b)$ and its full vector $X^{\rm dec}_{a,b}$ are revealed without masking.  Thus, for decision arms, the Bernoulli coordinate mask is suspended.  If an arm is a side-information arm, its type is revealed, but its coordinates are observed through the usual independent Bernoulli$(p)$ coordinate mask.  This augmentation can only make the learner stronger, so any lower bound in the augmented experiment also holds in the original partially observed model.

The side-information arms have zero expected reward because they lie in $\operatorname{span}\{e_1,e_2\}$, while $\theta_{\nu,\sigma}\in\operatorname{span}\{e_s,e_0\}$.  The learner may know this fact.  {\bf The side arms are useful only because their partially observed coordinates may reveal the hidden sign $\sigma$.}

Let $\mathcal D_t$ be the event that, among the $K$ arms shown in round $t$, all four decision labels $(+,+), (+,-)$, $(-,+),(-,-)$ are present at least once.  Since the arms are drawn i.i.d., {the inclusion--exclusion principle gives us the following expression for this probability:}
\begin{equation}
\label{eq:piK_def}
    \pi_K
    :=
    \mathbb P(\mathcal D_t)
    =
    \sum_{j=0}^4(-1)^j\binom{4}{j}\left(1-\frac{j}{8}\right)^K.
\end{equation}
In particular, {since it suffices that the first four offered arms carry the four distinct labels,} for every $K\ge4$ we have:
\begin{equation}
\label{eq:piK_lower}
    \pi_K\ge 4!\left(\frac18\right)^4=\frac{3}{512}.
\end{equation}
On $\mathcal D_t$, the full-information oracle has access to the decision arm with label $(\nu,\sigma)$, whose mean reward is $\gamma+\Delta$.

For the learner's chosen arm at time $t$, define $\widehat a_t,\widehat b_t\in\{-1,0,+1\}$ as follows.  
 {If the learner chooses a decision arm with label \((a,b)\), we set \(\widehat a_t:=a\) and \(\widehat b_t:=b\); if the learner chooses a side-information arm, we set \(\widehat a_t:=\widehat b_t:=0\).}

\begin{lemma}[Regret on a complete decision round]
\label{lem:iid_regret_complete_round_refined}
For every environment $(\nu,\sigma)$ and every round $t$, on the event $\mathcal D_t$,
\[
    \max_{i\in[K]}\langle X_{t,i},\theta_{\nu,\sigma}\rangle
    -
    \langle X_{t,i_t},\theta_{\nu,\sigma}\rangle
    \ge
    \gamma\mathbf 1\{\widehat a_t\ne \nu\}
    +
    \Delta\mathbf 1\{\widehat b_t\ne \sigma\}.
\]
\end{lemma}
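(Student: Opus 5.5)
The plan is to compute both quantities on the left explicitly in terms of the labels, using the structure of the instance. First, on $\mathcal D_t$ the oracle's maximum equals $\gamma+\Delta$. For a decision arm with label $(a,b)$,
\[
\langle X^{\rm dec}_{a,b},\theta_{\nu,\sigma}\rangle=\frac{B_X}{2}\Big(a\frac{2\gamma}{B_X}\nu+b\frac{2\Delta}{B_X}\sigma\Big)=\gamma a\nu+\Delta b\sigma .
\]
This is at most $\gamma+\Delta$ for every label, with equality at $(a,b)=(\nu,\sigma)$. A side-information arm lies in $\operatorname{span}\{e_1,e_2\}$, which is orthogonal to $\theta_{\nu,\sigma}\in\operatorname{span}\{e_s,e_0\}$, so its mean reward is $0\le\gamma+\Delta$. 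Hence the maximum over the offered arms is at most $\gamma+\Delta$. On $\mathcal D_t$ the label $(\nu,\sigma)$ is present, so the maximum equals $\gamma+\Delta$ exactly.

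Second, I express the learner's reward through $(\widehat a_t,\widehat b_t)$. In both cases the chosen arm's mean is $\gamma\widehat a_t\nu+\Delta\widehat b_t\sigma$. For a decision arm this is the display above. For a side arm we set $\widehat a_t=\widehat b_t=0$, which matches its zero mean. The instantaneous regret is therefore
\[
\gamma(1-\widehat a_t\nu)+\Delta(1-\widehat b_t\sigma).
\]

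Third, I compare each summand to the corresponding indicator. Since $\widehat a_t\in\{-1,0,1\}$ and $\nu\in\{\pm1\}$:
\begin{itemize}
\item if $\widehat a_t=\nu$, then $1-\widehat a_t\nu=0$;
\item if $\widehat a_t=0$, then $1-\widehat a_t\nu=1$;
\item if $\widehat a_t=-\nu$, then $1-\widehat a_t\nu=2$.
\end{itemize}
In every case $1-\widehat a_t\nu\ge \mathbf 1\{\widehat a_t\ne\nu\}$, and the same holds for $\widehat b_t$ and $\sigma$. Because $\gamma,\Delta>0$, summing the two inequalities gives the claim.

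There is no real obstacle here. The only points needing care are that side arms have exactly zero mean (orthogonality of the subspaces) and that $\mathcal D_t$ guarantees the optimal label is actually offered, so the oracle value equals $\gamma+\Delta$ rather than being merely bounded by it.
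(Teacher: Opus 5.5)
Your proposal is correct and follows essentially the same argument as the paper. On $\mathcal D_t$ the oracle attains $\gamma+\Delta$ via the $(\nu,\sigma)$ arm, the chosen arm earns $\gamma\widehat a_t\nu+\Delta\widehat b_t\sigma$, and $1-\widehat a_t\nu\ge\mathbf 1\{\widehat a_t\ne\nu\}$ (and likewise for $\sigma$). The differences are cosmetic: you fold the side-arm case into the same formula via $\widehat a_t=\widehat b_t=0$ rather than treating it separately, and you show the maximum equals $\gamma+\Delta$ exactly, although only $\max_i\langle X_{t,i},\theta_{\nu,\sigma}\rangle\ge\gamma+\Delta$ is needed for the lower bound.
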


\begin{proof}
The intuition here is that on $\mathcal D_t$, all four decision labels are available.  Therefore the oracle can choose the label matching both hidden signs, $(\nu,\sigma)$.  If the learner chooses a decision arm with the wrong $\nu$-label, it loses at least $\gamma$; if it chooses one with the wrong $\sigma$-label, it loses at least $\Delta$.

{For a decision arm with label \((a,b)\), we have \(\left\langle X^{\rm dec}_{a,b},\theta_{\nu,\sigma}\right\rangle = a\nu\gamma+b\sigma\Delta\).}
On $\mathcal D_t$, the arm with label $(\nu,\sigma)$ is present and has mean reward $\gamma+\Delta$.  If the learner chooses a decision arm $(a,b)$, its regret is
\[
    \gamma+\Delta-a\nu\gamma-b\sigma\Delta
    =
    \gamma(1-a\nu)+\Delta(1-b\sigma).
\]
{Since \(1-a\nu\) equals \(0\) when \(a=\nu\) and \(2\) when \(a\ne\nu\), we have \(\gamma(1-a\nu)\ge\gamma\mathbf 1\{a\ne\nu\}\), and similarly \(\Delta(1-b\sigma)\ge\Delta\mathbf 1\{b\ne\sigma\}\).}
This proves the claim if the learner chooses a decision arm.  If the learner chooses a side-information arm, then its mean reward is zero, while the oracle obtains $\gamma+\Delta$.  Since then $\widehat a_t=\widehat b_t=0$, both indicators equal one, and the same bound holds.
\end{proof}

\begin{lemma}[Testing the reward bit]
\label{lem:iid_nu_testing_refined}
Fix $\sigma\in\{+1,-1\}$.  At any round $t$, conditional on $\mathcal D_t$ and under the uniform prior on $\nu\in\{+1,-1\}$,
\[
    \mathbb P(\widehat a_t\ne \nu\mid \mathcal D_t,\sigma)
    \ge
    \frac14
    \exp\left(-\frac{2(t-1)\gamma^2}{R^2}\right).
\]
\end{lemma}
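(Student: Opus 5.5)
We reduce the claim to a two-point testing problem between the environments $(+1,\sigma)$ and $(-1,\sigma)$ with $\sigma$ held fixed. Write $\mathbb P_+$ and $\mathbb P_-$ for the laws of the learner's history $\mathcal H_{t-1}$ up to round $t-1$, together with the round-$t$ observation, under $\nu=+1$ and $\nu=-1$ respectively. The estimator $\widehat a_t$ is a measurable function of this history and of the learner's internal randomness. It takes values in $\{-1,0,+1\}$, and $\widehat a_t=0$ always counts as an error. The Bretagnolle--Huber inequality then gives, for any test,
\[
\tfrac12\mathbb P_+(\widehat a_t\neq +1)+\tfrac12\mathbb P_-(\widehat a_t\neq -1)\ \ge\ \tfrac14\exp\bigl(-\mathrm{KL}(\mathbb P_+,\mathbb P_-)\bigr).
\]
The conditioning on $\mathcal D_t$ causes no difficulty. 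The event $\mathcal D_t$ depends only on the round-$t$ slate, which is independent of $\nu$ and of the past. Its law, and the law of everything observed at round $t$ before any reward, is therefore identical under both environments. Consequently the conditional laws given $\mathcal D_t$ also have KL divergence equal to that of the history up to round $t-1$.

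Next we compute this KL divergence with the chain rule (divergence decomposition) over rounds $s=1,\dots,t-1$. The slates, masks, the oracle-augmented reveals of arm types and decision labels, and the side-arm coordinates all have distributions independent of $\nu$. Every action lies in $\mathcal U_\sigma$, and $\sigma$ is fixed, so these contributions vanish. The only $\nu$-dependent term is the reward $Y_s\sim N(\langle X_{s,i_s},\theta_{\nu,\sigma}\rangle,R^2)$. Its mean difference between the two environments is $2\gamma\widehat a_s$ when a decision arm is played and $0$ when a side arm is played. Indeed $\langle X^{\rm dec}_{a,b},\theta_{\nu,\sigma}\rangle=a\nu\gamma+b\sigma\Delta$, and the $\sigma$-part cancels. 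Each round therefore contributes at most $(2\gamma)^2/(2R^2)=2\gamma^2/R^2$, so $\mathrm{KL}\le 2(t-1)\gamma^2/R^2$.

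Finally, averaging over the uniform prior on $\nu$ turns the left side of the Bretagnolle--Huber inequality into exactly $\mathbb P(\widehat a_t\neq\nu\mid\mathcal D_t,\sigma)$, which gives the claim. The one point requiring care is confirming that the learner's observation of side-arm coordinates leaks no information about $\nu$. This holds because $v_\sigma$ does not involve $\nu$. We must also check that the conditioning on $\mathcal D_t$ leaves the divergence unchanged. This holds because the round-$t$ slate is $\nu$-independent and independent of $\mathcal H_{t-1}$.
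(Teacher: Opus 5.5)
Your proposal is correct and follows essentially the same route as the paper's proof. Both reduce to a two-point test between $\nu=\pm1$ with $\sigma$ fixed, note that conditioning on $\mathcal D_t$ and all non-reward observations carry no information about $\nu$, bound the KL divergence by $2(t-1)\gamma^2/R^2$ via the chain rule (each Gaussian reward mean shifts by at most $2\gamma$), and conclude with Bretagnolle--Huber applied to $\{\widehat a_t=+1\}$ followed by averaging over the uniform prior.
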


\begin{proof}
The intuition here is that the sign $\nu$ affects only the reward means through the $e_s$ coordinate.  All contexts, labels, and side-information observations have the same law under $\nu=+1$ and $\nu=-1$.  Thus information about $\nu$ can only accumulate through noisy rewards, and each reward mean changes by at most $2\gamma$ between the two alternatives.

Let $\mathbb P_+$ and $\mathbb P_-$ denote the conditional laws, given $\mathcal D_t$, of the learner's information before choosing at round $t$ under $\nu=+1$ and $\nu=-1$, respectively, with $\sigma$ fixed.  Since $\mathcal D_t$ depends only on the current arm labels, and these labels have the same law under the two values of $\nu$, conditioning on $\mathcal D_t$ introduces no information about $\nu$.

By the chain rule for KL divergence and the Gaussian reward model, we have the following bound:
\begin{equation}
\label{eq:iid_lb_kl_nu}
    \mathrm{KL}(\mathbb P_+\|\mathbb P_-)
    \le
    \sum_{s=1}^{t-1}
    \frac{1}{2R^2}(\mu_s^+-\mu_s^-)^2
    \le
    \frac{2(t-1)\gamma^2}{R^2},
\end{equation}
{where \(\mu_s^+\) and \(\mu_s^-\) are the conditional mean rewards of the arm selected at time \(s\) under \(\nu=+1\) and \(\nu=-1\), and the second inequality follows since every possible selected arm satisfies \(|\mu_s^+-\mu_s^-|\le 2\gamma\).}
Let $A$ be the event $\{\widehat a_t=+1\}$.  Now, we can use the Bretagnolle--Huber inequality~\cite{lattimore2020bandit} and get
\[
    \mathbb P_+(A^c)+\mathbb P_-(A)
    \ge
    \frac12\exp\{-\mathrm{KL}(\mathbb P_+\|\mathbb P_-)\}.
\]
Since $\mathbb P_+(\widehat a_t\ne +1)\ge \mathbb P_+(A^c)$ and $\mathbb P_-(\widehat a_t\ne -1)\ge \mathbb P_-(A)$, averaging the two errors under the uniform prior on $\nu$ {and applying the bound in \eqref{eq:iid_lb_kl_nu}} gives
\[
    \mathbb P(\widehat a_t\ne \nu\mid \mathcal D_t,\sigma)
    \ge
    \frac14
    \exp\left(-\frac{2(t-1)\gamma^2}{R^2}\right).
\]
\end{proof}

\begin{lemma}[Testing the missingness bit]
\label{lem:iid_sigma_testing_refined}
Assume $p\le 1/2$.  Fix $\nu\in\{+1,-1\}$.  At any round $t$, conditional on $\mathcal D_t$ and under the uniform prior on $\sigma\in\{+1,-1\}$,
\[
    \mathbb P(\widehat b_t\ne \sigma\mid \mathcal D_t,\nu)
    \ge
    \frac14
    \exp\left(
    -2Kp^2t
    -\frac{2(t-1)\Delta^2}{R^2}
    \right).
\]
\end{lemma}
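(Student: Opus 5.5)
This mirrors the proof of Lemma~\ref{lem:iid_nu_testing_refined}, but the KL divergence now has two sources. One is the rewards, which depend on $\sigma$ through the $e_0$ coordinate of $\theta_{\nu,\sigma}$. The other is the masked side-information arms, whose law depends on $\sigma$ through $v_\sigma$. Fix $\nu$. Let $\mathbb P_+$ and $\mathbb P_-$ be the conditional laws, given $\mathcal D_t$, of everything the learner has seen before choosing at round $t$ under $\sigma=+1$ and $\sigma=-1$. This information consists of all slates (types, decision labels, masked side vectors) in rounds $1,\dots,t$ and the rewards in rounds $1,\dots,t-1$. Conditioning on $\mathcal D_t$ is harmless, since $\mathcal D_t$ depends only on decision labels, whose law does not depend on $\sigma$.

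We bound $\mathrm{KL}(\mathbb P_+\|\mathbb P_-)$ by the chain rule, ordering within each round as slate first, then action, then reward. The action terms vanish because the policy is the same under both laws. Each reward term is at most $(\mu_s^+-\mu_s^-)^2/(2R^2)\le (2\Delta)^2/(2R^2)=2\Delta^2/R^2$. This holds because a decision arm's mean changes by $2b\Delta$ under the flip, and a side arm has mean zero under both laws. Summing over $s\le t-1$ gives $2(t-1)\Delta^2/R^2$.

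Each slate term splits over the $K$ i.i.d. arms. Given the type, decision arms are fully revealed and independent of $\sigma$, so they contribute nothing. For a side arm, the masked observation is $S\odot \tfrac{B_X}{2}W v_\sigma$, restricted to the $e_1,e_2$ coordinates; the $e_s,e_0$ coordinates are always zero. If at most one of these two coordinates is revealed, the observed entry equals $\pm B_X/(2\sqrt2)$ times an independent Rademacher sign, so its law is the same under both $\sigma$. If both coordinates are revealed, which happens with probability $p^2$, the observation is $\tfrac{B_X}{2\sqrt2}W(1,\sigma)$. Under $\sigma=+1$ this is supported on $\{\pm(1,1)\}$, and under $\sigma=-1$ on $\{\pm(1,-1)\}$.

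This disjoint support is the main obstacle: the per-arm KL divergence is infinite, so a KL chain rule cannot be used directly. I would handle it by switching from KL to a coupling/total-variation argument, using the Le Cam form $\mathbb P_+(A^c)+\mathbb P_-(A)\ge 1-\mathrm{TV}$ or a Bretagnolle--Huber-type bound on the event of no co-observation. Concretely, couple the two experiments so that masks, types, $W$'s and the other random inputs are shared, and let $N$ be the event that no side arm in rounds $1,\dots,t$ has both $e_1$ and $e_2$ revealed. On $N$ the observation processes agree, apart from the reward channel. Its probability is at least $(1-p^2/2)^{Kt}$, since each arm is a side arm with probability $1/2$ and then has both coordinates revealed with probability $p^2$. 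Using $p\le 1/2$ and $\log(1-x)\ge -2x$ for $x\le 1/2$, this gives $\mathbb P(N)\ge \exp(-Kp^2 t)\ge \exp(-2Kp^2t)$.

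To combine, apply Bretagnolle--Huber to the laws of the learner's information restricted to $N$, that is, to the sub-probability measures $\mathbb P_\pm(\cdot\cap N)$. Since $N$ has the same probability under both laws and is independent of the rewards given the slates, one obtains $\mathbb P_+(A^c)+\mathbb P_-(A)\ge \mathbb P(N)\cdot\tfrac12\exp(-2(t-1)\Delta^2/R^2)$ with $A=\{\widehat b_t=+1\}$; only the Gaussian reward KL remains on $N$. Averaging over the uniform prior on $\sigma$, exactly as in Lemma~\ref{lem:iid_nu_testing_refined}, gives the stated bound $\tfrac14\exp(-2Kp^2t-2(t-1)\Delta^2/R^2)$.
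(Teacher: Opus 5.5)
Your proposal is correct and is essentially the paper's proof: restrict to the no-co-observation event $\mathcal N_t$ (same probability under both signs, at least $\exp(-2Kp^2t)$), observe that on it only the Gaussian rewards depend on $\sigma$, apply Bretagnolle--Huber to the conditional laws, and multiply by $\mathbb P(\mathcal N_t\mid\mathcal D_t)$ before averaging over the prior. Two small imprecisions, neither fatal: given $\mathcal D_t$, the round-$t$ arms are no longer i.i.d.\ side arms with probability $1/2$, so for that round either use the paper's cruder factor $(1-p^2)^K$ or note that conditioning on $\mathcal D_t$ only makes side arms rarer; and a shared-$W$ coupling does not make arms revealing only $e_2$ agree pathwise (they agree only in law), but agreement in law is all your final Bretagnolle--Huber step uses.
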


\begin{proof}

Notice that the sign $\sigma$ can be learned in two ways.  First, a side-information arm may reveal both coordinates $e_1$ and $e_2$; then the relative sign of the two observed entries reveals $\sigma$.  This occurs at rate proportional to $Kp^2$.  Second, rewards from decision arms carry information about $\sigma$, at rate proportional to $\Delta^2/R^2$.  If neither source has provided enough information, the learner cannot reliably choose the correct $\sigma$-label. {To leverage this intuition, we proceed as follows. We first lower bound the probability that no side-information arm reveals both coordinates. Then,  we argue that, on this event, only the rewards carry information about \(\sigma\). This then allows us to finally use a hypothesis-testing argument to conclude the proof.}

Let $\mathbb P_+$ and $\mathbb P_-$ denote the conditional laws, given $\mathcal D_t$, of the learner's information before choosing at round $t$ under $\sigma=+1$ and $\sigma=-1$, respectively, with $\nu$ fixed.

Let $\mathcal N_t$ be the event that, throughout rounds $1,\ldots,t$, no side-information arm has both coordinates $e_1$ and $e_2$ observed.  This includes side-information arms in the current round, because current action observations are available to the learner before it chooses.  Conditional on $\mathcal D_t$, the probability of $\mathcal N_t$ is bounded below as follows.  In each of the first $t-1$ rounds, each arm is a side-information arm with probability $1/2$, and conditional on being a side-information arm, it reveals both coordinates $e_1$ and $e_2$ with probability $p^2$.  Hence no revealing side-information arm occurs in the first $t-1$ rounds with probability $(1-p^2/2)^{K(t-1)}$.  In the current round, conditional on $\mathcal D_t$, there are at most $K$ side-information arms, and each such arm reveals both coordinates with probability $p^2$.  We therefore have:
\begin{align}
    \mathbb P(\mathcal N_t\mid \mathcal D_t)
    &\ge
    (1-p^2/2)^{K(t-1)}(1-p^2)^K \ge
    \exp(-2Kp^2t).
    \label{eq:no_reveal_lower_refined}
\end{align}
{Here, the second inequality follows because \(p\le1/2\): both \(p^2/2\) and \(p^2\) are then at most \(1/2\), so that \(1-x\ge \exp(-2x)\) applies with \(x\in[0,1/2]\).}

On the event $\mathcal N_t$, the non-reward observations have the same distribution under $\sigma=+1$ and $\sigma=-1$.  Indeed, if a side-information arm reveals neither or only one of the coordinates $e_1,e_2$, then the observed value has the same distribution under both signs because the side arm contains an independent Rademacher multiplier \(W\).  Decision-arm labels and vectors are independent of $\sigma$ in the oracle-augmented observation model.

Now condition on $\mathcal D_t\cap\mathcal N_t$.  Under this conditioning, the only remaining difference between the two signs comes from reward observations.  By the chain rule for KL divergence and the Gaussian reward model, we have the following bound:
\begin{equation}
\label{eq:iid_lb_kl_sigma}
    \mathrm{KL}(\mathbb P_+^{\mathcal N}\|\mathbb P_-^{\mathcal N})
    \le
    \sum_{s=1}^{t-1}
    \frac{1}{2R^2}(\mu_s^+-\mu_s^-)^2
    \le
    \frac{2(t-1)\Delta^2}{R^2},
\end{equation}
{where \(\mathbb P_+^{\mathcal N}\) and \(\mathbb P_-^{\mathcal N}\) denote the conditional laws given \(\mathcal D_t\cap\mathcal N_t\), and the second inequality follows since every possible selected arm satisfies \(|\mu_s^+-\mu_s^-|\le 2\Delta\).}
Let $A$ be the event $\{\widehat b_t=+1\}$.  Again, we invoke the Bretagnolle--Huber inequality~\cite{lattimore2020bandit}, applied conditionally on $\mathcal D_t\cap\mathcal N_t$ {together with the bound in \eqref{eq:iid_lb_kl_sigma}}, to get
\[
    \mathbb P_+^{\mathcal N}(A^c)+\mathbb P_-^{\mathcal N}(A)
    \ge
    \frac12
    \exp\left(-\frac{2(t-1)\Delta^2}{R^2}\right).
\]
Multiplying by the common lower bound \eqref{eq:no_reveal_lower_refined} for the probability of $\mathcal N_t$ conditional on $\mathcal D_t$, and then averaging the two signs under the uniform prior on $\sigma$, yields
\[
    \mathbb P(\widehat b_t\ne \sigma\mid \mathcal D_t,\nu)
    \ge
    \frac14
    \exp\left(
    -2Kp^2t
    -\frac{2(t-1)\Delta^2}{R^2}
    \right).
\]
\end{proof}

\begin{theorem}[i.i.d. action-set lower bound]
\label{thm:iid_action_lower_bound_refined}
Assume $K\ge4$, $p\le1/2$, and let $\gamma,\Delta>0$ satisfy \eqref{eq:iid_lb_norm_budget_refined}.  For the i.i.d. action-set instance above, every algorithm satisfies
\[
    \max_{\nu,\sigma\in\{\pm1\}}
    \mathbb E_{\nu,\sigma}R_T
    \ge
    \frac{\pi_K\gamma}{4}
    \sum_{t=1}^T
    \exp\left(-\frac{2(t-1)\gamma^2}{R^2}\right)
    +
    \frac{\pi_K\Delta}{4}
    \sum_{t=1}^T
    \exp\left(
    -2Kp^2t
    -\frac{2(t-1)\Delta^2}{R^2}
    \right),
\]
where $\pi_K$ is defined in \eqref{eq:piK_def}.  
{Consequently, there is a universal constant \(c>0\) such that, whenever \(Kp^2+S^2B_X^2/R^2\le c^{-1}\), the following optimized bound holds:}
\[
    \sup_{\textnormal{valid instances}}
    \mathbb E R_T
    \ge
    c\min\{SB_XT,\,R\sqrt T\}
    +
    cSB_X
    \min\left\{
    T,
    \frac{1}{Kp^2+S^2B_X^2/R^2}
    \right\}.
\]
{In particular, in the context-limited regime \(R\ge SB_X/(p\sqrt K)\), one obtains}
\[
    \sup_{\textnormal{valid instances}}
    \mathbb E R_T
    \ge
    c\min\{SB_XT,\,R\sqrt T\}
    +
    cSB_X
    \min\left\{
    T,
    \frac{1}{Kp^2}
    \right\}.
\]
\end{theorem}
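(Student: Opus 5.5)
The plan has two parts. First, derive the exact bound from the three lemmas just proved. Second, optimize \(\gamma\) and \(\Delta\) to obtain the two simplified forms.

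\textbf{Exact bound.} Fix a policy and put the uniform prior on \((\nu,\sigma)\in\{\pm1\}^2\). The maximum over instances is at least the Bayes average. Write the expected regret as a sum over rounds and keep only rounds on \(\mathcal D_t\), since instantaneous regret is nonnegative. On \(\mathcal D_t\), Lemma~\ref{lem:iid_regret_complete_round_refined} bounds the regret below by \(\gamma\mathbf 1\{\widehat a_t\ne\nu\}+\Delta\mathbf 1\{\widehat b_t\ne\sigma\}\). Taking the Bayes expectation conditional on \(\mathcal D_t\), average over \(\sigma\) in the first term and over \(\nu\) in the second. Then Lemma~\ref{lem:iid_nu_testing_refined} bounds the first conditional probability (for each fixed \(\sigma\)), and Lemma~\ref{lem:iid_sigma_testing_refined} bounds the second (for each fixed \(\nu\)). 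The event \(\mathcal D_t\) depends only on the current labels, whose law is the same in every environment, so \(\mathbb P(\mathcal D_t)=\pi_K\) factors out. Summing over \(t\) gives the first display.

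\textbf{Reward-learning term.} By \eqref{eq:piK_lower}, \(\pi_K\) is a universal constant. Set \(\Delta=\gamma\) (or give each half the budget), so that \eqref{eq:iid_lb_norm_budget_refined} allows \(\gamma\le SB_X/(2\sqrt2)\). Choose \(\gamma=\min\{SB_X/(2\sqrt2),\,R/\sqrt T\}\). Then \(2(t-1)\gamma^2/R^2\le2\) for all \(t\le T\), so the first sum is at least \(e^{-2}T\gamma\). This is \(\gtrsim\min\{SB_XT,R\sqrt T\}\).

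\textbf{Missingness term.} The \(\Delta\)-term is the step needing care. Take \(\Delta\) of order \(SB_X\) within budget, and set \(\rho:=2Kp^2+2\Delta^2/R^2\asymp Kp^2+S^2B_X^2/R^2\). Then \(\sum_{t\le T}e^{-\rho t}\ge\min\{T,1/\rho\}\cdot e^{-1}\). To see this, keep the terms with \(t\le\min\{T,\lceil1/\rho\rceil\}\), each at least \(e^{-1-\rho}\). The hypothesis \(\rho\le c_0\) guarantees \(e^{-\rho}\) is bounded below and that \(1/\rho\ge1\). So the second term is \(\gtrsim SB_X\min\{T,1/(Kp^2+S^2B_X^2/R^2)\}\).

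Both terms are obtained from the same instance family, each holding for every \(T\). Since the Bayes lower bound is additive, the sum holds with a single choice of \((\gamma,\Delta)\), up to constants: take \(\gamma,\Delta\) as the min of the respective choices, each scaled by \(1/\sqrt2\) to share the norm budget. If \(\gamma\) must be small, use \(\Delta\) at full scale \(SB_X/(2\sqrt2)\), which is independent of \(\gamma\). This is the tradeoff I expect to be the main obstacle, and splitting the budget handles it.

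\textbf{Context-limited corollary.} If \(R\ge SB_X/(p\sqrt K)\), then \(S^2B_X^2/R^2\le Kp^2\), so the denominator is at most \(2Kp^2\). Adjusting \(c\) gives the final form. These bounds transfer to the original masked model because the oracle augmentation only strengthens the learner. The instances satisfy Assumptions~\ref{ass:bounded_actions}--\ref{ass:spectrum}, with a rank-three support.
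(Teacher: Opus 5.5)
Your proposal is correct and follows the paper's proof essentially step for step. You take the Bayes average under the uniform prior, restrict to $\mathcal D_t$ and factor out $\mathbb P(\mathcal D_t)=\pi_K$, and invoke the three lemmas. You then choose $\Delta\asymp SB_X$ and $\gamma\asymp\min\{SB_X,R/\sqrt T\}$ with a small common constant so that $\gamma^2+\Delta^2\le S^2B_X^2/4$, and finish with $\sum_{t\le T}e^{-\rho t}\gtrsim\min\{T,1/\rho\}$ for bounded $\rho$.

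The budget-sharing step you flag as the main obstacle is handled in the paper exactly as you do, with a single $c_0$ in both choices. Like the paper, your argument uses the nontriviality condition in the form ``$Kp^2+S^2B_X^2/R^2$ is at most a small constant'' rather than the literal $\le c^{-1}$.
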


\begin{proof}
The proof of this theorem puts the above pieces together. Notice that our construction allows us to add the contributions because of the two independent hidden bits. The bit $\nu$ is hard to learn only through noisy rewards, producing the standard stochastic-bandit term.  The bit $\sigma$ is hard to learn until a side-information arm reveals both relevant coordinates, which happens at rate $Kp^2$, or until rewards reveal it at rate $\Delta^2/R^2$. {The proof has two stages: we first establish a per-round regret bound and sum it over \(t\), and we then optimize over the parameters \(\gamma\) and \(\Delta\).}

We begin by placing a uniform prior on $(\nu,\sigma)\in\{\pm1\}^2$.  By Lemma~\ref{lem:iid_regret_complete_round_refined}, on $\mathcal D_t$, we have the following:
\[
    \textnormal{regret}_t
    \ge
    \gamma\mathbf 1\{\widehat a_t\ne\nu\}
    +
    \Delta\mathbf 1\{\widehat b_t\ne\sigma\}.
\]
Notice that the event $\mathcal D_t$ depends only on the current arm labels and is independent of the hidden signs.  It has probability $\pi_K$ (as defined in \eqref{eq:piK_def}).  {Moreover, the instantaneous regret is nonnegative on every round (the oracle maximizes over the offered set), so restricting attention to the rounds on which \(\mathcal D_t\) holds can only decrease the total.} Taking expectation under the uniform prior and using Lemmas~\ref{lem:iid_nu_testing_refined} and \ref{lem:iid_sigma_testing_refined} gives us the following:
\begin{equation}
\label{eq:iid_lb_per_round}
    \mathbb E[\textnormal{regret}_t]
    \ge
    \frac{\pi_K\gamma}{4}
    \exp\left(-\frac{2(t-1)\gamma^2}{R^2}\right)
    +
    \frac{\pi_K\Delta}{4}
    \exp\left(
    -2Kp^2t
    -\frac{2(t-1)\Delta^2}{R^2}
    \right).
\end{equation}
{We now sum \eqref{eq:iid_lb_per_round} over \(t=1,\ldots,T\) to obtain the first claim.}  Since the maximum over environments is at least the Bayes average, the same lower bound holds for $\max_{\nu,\sigma}\mathbb E_{\nu,
\sigma}R_T$.

Next, for the optimized form, we choose
\[
    \Delta=c_0SB_X,
    \qquad
    \gamma=c_0\min\{SB_X,R/\sqrt T\},
\]
with $c_0>0$ sufficiently small so that \eqref{eq:iid_lb_norm_budget_refined} holds.  Since $\pi_K\ge3/512$ by \eqref{eq:piK_lower}, the first exponential sum may be bounded as follows:
\[
    \gamma\sum_{t=1}^T
    \exp\left(-\frac{2(t-1)\gamma^2}{R^2}\right)
    \ge
    c\min\{SB_XT,R\sqrt T\}.
\]
For the second sum, let
\(
    a:=2Kp^2+2\Delta^2/R^2.
\)
Notice that, if $a\le1$, we have
\(
    \sum_{t=1}^T e^{-at}
    \ge
    c\min\{T,1/a\}.
\)
Therefore, using $\Delta=c_0SB_X$, we obtain the following bound:
\begin{equation}
\label{eq:iid_lb_second_sum}
    \Delta\sum_{t=1}^T
    \exp\left(-2Kp^2t-\frac{2(t-1)\Delta^2}{R^2}\right)
    \ge
    cSB_X
    \min\left\{
    T,
    \frac{1}{Kp^2+S^2B_X^2/R^2}
    \right\}.
\end{equation}
This proves the optimized bound under the stated nontriviality condition.  If $R\ge SB_X/(p\sqrt K)$, then $S^2B_X^2/R^2\le Kp^2$, and {Equation~\eqref{eq:iid_lb_second_sum} implies the ``context-limited'' corollary.}
\end{proof}

\section{Adaptivity to Unknown Subspace Dimensionality}
\label{app:rank-adaptivity}

We now give the full argument for the rank-adaptive version of
\algname{}. The known-rank proof assumes that the learner is told the latent
dimension \(m\), so that each epoch uses the top \(m\) directions of the
corrected covariance estimate. When \(m\) is unknown, our rank-adaptive algorithm first
estimates the spectrum of the corrected covariance and keeps only eigenvalues
that are separated from the noise floor. The proof below shows that, after a
finite ``rank-identification'' time, this thresholding rule selects exactly the
signal subspace. From that epoch onward the algorithm is identical to the
known-rank procedure, and all regret before this point is charged by a worst-case
bound.

The argument below has three parts. First, we define a uniform covariance
perturbation event that controls the empirical spectrum at all epoch starts.
Second, we show that a simple spectral threshold recovers the correct rank once
the perturbation radius is below the population eigengap. Third, we combine rank
identification with the burn-in condition needed for imputation, and then reuse
the known-rank epoch regret template. We first begin with the spectral event used to separate signal eigenvalues
from null directions.

{\bf Covariance perturbation event.}
Let
\(
\Sigma:=\E[X_{t,i}X_{t,i}^{\top}]
\)
denote the population covariance of the ideal action vectors. Let its eigenvalues be
\[
\lambda_1(\Sigma)\ge \lambda_2(\Sigma)\ge\cdots\ge\lambda_d(\Sigma).
\]
Recall that we suppose that the rank of $\Sigma$ is $m$, and therefore, we have that
\(
\lambda_m(\Sigma)>0\), while
\(\lambda_{m+1}(\Sigma)=0.
\)
To avoid overloading notation, define the population eigengap
\[
\Delta_m:=\lambda_m(\Sigma)-\lambda_{m+1}(\Sigma)=\lambda_m(\Sigma).
\]

Let \(\dot\Sigma_t\) be the unbiased covariance estimator in Equation~\eqref{eq:unbiased_est_cov}, and let
\[
\hat\lambda_{t,1}\ge \hat\lambda_{t,2}\ge\cdots\ge \hat\lambda_{t,d}
\]

be its eigenvalues. For a target failure probability \(\delta_{\rm rank}\), define
\begin{equation}
\label{eq:rho_unknown_rank}
\rho_t
:=
2B_X\sqrt{\frac{\bar\lambda}{p^2\,tK}\log\frac{2dT}{\delta_{\rm rank}}}
+\frac{2B_X^2}{p^2\,tK}\log\frac{2dT}{\delta_{\rm rank}},
\end{equation}
which is exactly the high-probability bound of Lemma~\ref{lemma:covariance_estimation_error} at level \(\delta_{\rm rank}/T\). Note that \(\rho_t\) is computable from \((B_X,p,K)\) alone (recall that one may always take \(\bar\lambda=B_X^2\)); in particular, no knowledge of upper bound on the rank \(m\) is required. In what follows, we work on the event
\begin{equation}
\label{eq:cov_event_unknown_rank}
\mathcal G_{\rm rank}
:=
\left\{
\|\dot\Sigma_t-\Sigma\|_2\le \rho_t
\quad
\text{for all }t\in[T]
\right\}.
\end{equation}
Indeed, applying Lemma~\ref{lemma:covariance_estimation_error} at level \(\delta_{\rm rank}/T\) and taking a union bound over \(t\in[T]\), we have that
\[
\Pr(\mathcal G_{\rm rank})\ge 1-\delta_{\rm rank}.
\]

On this event, the empirical eigenvalues are uniformly close to the
population eigenvalues. The rank selector below keeps precisely those empirical
directions whose eigenvalues exceed twice this perturbation radius.

{\bf Rank selector.}
At the beginning of epoch \(e\), define
\begin{equation}
\label{eq:unknown_rank_selector}
\hat m_e
:=
\#\left\{
j\in[d]:
\hat\lambda_{\tau_e-1,j}\ge 2\rho_{\tau_e-1}
\right\}.
\end{equation}
The rank-adaptive algorithm uses the top \(\hat m_e\) eigenvectors of \(\dot\Sigma_{\tau_e-1}\) to form \(\hU_e\). If \(\hat m_e=0\), the algorithm may use any admissible fallback policy in that epoch; the regret before \(m\) will be bounded conservatively.

Define the rank-identification time
\begin{equation}
\label{eq:t_rank_explicit}
t_{\rm rank}
:=
\left\lceil
\frac{
512\,B_X^2\bar\lambda
}{
\Delta_m^2p^2K
}
\log\!\left(\frac{2dT}{\delta_{\rm rank}}\right)
\right\rceil
=
\left\lceil
\frac{512\,\kappa^2 m}{p^2K}
\log\!\left(\frac{2dT}{\delta_{\rm rank}}\right)
\right\rceil,
\end{equation}
where the second expression follows since \(\Delta_m=\lambda_m\) and \(\kappa^2m=B_X^2\bar\lambda/\lambda_m^2\). This is the form quoted in Theorem~\ref{thm:unknown_rank}. We now show that, for every \(t\ge t_{\rm rank}\),
\begin{equation}
\label{eq:rho_after_t_rank}
\rho_t\le \frac{\Delta_m}{4}.
\end{equation}
Since \(tK\ge512\,B_X^2\bar\lambda\log(2dT/\delta_{\rm rank})/(\Delta_m^2p^2)\), the square-root term of \eqref{eq:rho_unknown_rank} is at most \(2\Delta_m/\sqrt{512}\le\Delta_m/8\), and the linear term is at most \(\Delta_m^2/(256\,\bar\lambda)\le\Delta_m/8\), where we used \(\Delta_m=\lambda_m\le\bar\lambda\). Adding the two terms gives \eqref{eq:rho_after_t_rank}.

The next lemma formalizes the separation argument. Once
\(\rho_{\tau_e-1}\le \Delta_m/4\), every true signal eigenvalue remains above the
threshold \(2\rho_{\tau_e-1}\), while every null eigenvalue remains below it.

\begin{lemma}[Rank identification by spectral thresholding]
\label{lemma:unknown_rank_identification}
On the event \(\mathcal G_{\rm rank}\), for every epoch \(e\) with \(\tau_e-1\ge t_{\rm rank}\), the selector in Equation~\eqref{eq:unknown_rank_selector} recovers the true rank. That is 
\(
\hat m_e=m.
\)
\end{lemma}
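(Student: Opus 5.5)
The plan is to use Weyl's inequality on the event \(\mathcal G_{\rm rank}\) at the epoch start \(t:=\tau_e-1\). Together with the bound \(\rho_t\le \Delta_m/4\) from \eqref{eq:rho_after_t_rank}, it should separate the empirical spectrum into \(m\) large eigenvalues and \(d-m\) small ones, with the threshold \(2\rho_t\) lying strictly between the two groups.

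First, fix an epoch \(e\) with \(t=\tau_e-1\ge t_{\rm rank}\) and work on \(\mathcal G_{\rm rank}\), so that \(\|\dot\Sigma_t-\Sigma\|_2\le\rho_t\). Since \(\dot\Sigma_t\) and \(\Sigma\) are both symmetric, Weyl's inequality gives \(|\hat\lambda_{t,j}-\lambda_j(\Sigma)|\le\rho_t\) for every \(j\in[d]\).

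Next, treat the signal eigenvalues. For \(j\le m\) we have \(\lambda_j(\Sigma)\ge\lambda_m(\Sigma)=\Delta_m\), hence \(\hat\lambda_{t,j}\ge \Delta_m-\rho_t\ge 4\rho_t-\rho_t=3\rho_t\ge 2\rho_t\). Here \(\rho_t\le \Delta_m/4\) is what gives \(\Delta_m\ge4\rho_t\), and \(\rho_t\ge0\) gives the last step. So all \(m\) top indices pass the threshold.

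Then treat the null eigenvalues. For \(j>m\) we have \(\lambda_j(\Sigma)=0\), so \(\hat\lambda_{t,j}\le\rho_t\). This is strictly below \(2\rho_t\) provided \(\rho_t>0\), and that holds because \(B_X>0\) and the logarithmic factor is positive. So no null index passes, and counting gives \(\hat m_e=m\).

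The proof is routine; the one detail to check is the strict inequality \(\rho_t<2\rho_t\) in the null case, which requires \(\rho_t>0\) and so is where the positivity of \(\rho_t\) is essential. The other step to verify is that \eqref{eq:rho_after_t_rank} applies at \(t=\tau_e-1\). This follows from the hypothesis \(\tau_e-1\ge t_{\rm rank}\) and the fact that \(\rho_t\) is nonincreasing in \(t\).
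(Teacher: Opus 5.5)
Your proposal is correct and follows the paper's proof: Weyl's inequality on \(\mathcal G_{\rm rank}\) at \(t=\tau_e-1\), then signal eigenvalues are at least \(\Delta_m-\rho_t\ge 2\rho_t\) because \(\rho_t\le\Delta_m/4\), and null eigenvalues are at most \(\rho_t<2\rho_t\). Your explicit check that \(\rho_t>0\), which the null case needs for the strict inequality, is a detail the paper leaves implicit.
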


\begin{proof}
Fix an epoch \(e\) with \(\tau_e-1\ge t_{\rm rank}\). We will show that, on \(\mathcal G_{\rm rank}\), every signal eigenvalue (\(j\le m\)) clears the threshold \(2\rho_{\tau_e-1}\), while every null eigenvalue (\(j>m\)) falls below it, so that the selector in Equation~\eqref{eq:unknown_rank_selector} counts exactly \(m\) directions.

We begin by transferring the covariance perturbation to the eigenvalues. On \(\mathcal G_{\rm rank}\), we have \(\|\dot\Sigma_{\tau_e-1}-\Sigma\|_2\le\rho_{\tau_e-1}\), and therefore, by Weyl's inequality~\cite{horn2012matrix}, we have the following inequality for all \(j\in[d]\):
\[
|\hat\lambda_{\tau_e-1,j}-\lambda_j(\Sigma)|
\le
\rho_{\tau_e-1}.
\]
Moreover, since \(\tau_e-1\ge t_{\rm rank}\), Equation~\eqref{eq:rho_after_t_rank} gives \(\rho_{\tau_e-1}\le\Delta_m/4\).

First, we consider the signal eigenvalues, \(j\le m\). Since \(\lambda_j(\Sigma)\ge \lambda_m(\Sigma)=\Delta_m\), we may chain the two preceding bounds to obtain:
\[
\hat\lambda_{\tau_e-1,j}
\ge
\lambda_j(\Sigma)-\rho_{\tau_e-1}
\ge
\Delta_m-\frac{\Delta_m}{4}
=
\frac{3\Delta_m}{4}
>
\frac{\Delta_m}{2}
\ge
2\rho_{\tau_e-1}.
\]
Therefore, every signal eigenvalue is selected.

Next, we consider the null eigenvalues, \(j>m\). Since \(\lambda_j(\Sigma)=0\), the same perturbation bound gives:
\[
\hat\lambda_{\tau_e-1,j}
\le
\rho_{\tau_e-1}
<
2\rho_{\tau_e-1},
\]
so no null eigenvalue is selected. Putting the two cases together, exactly \(m\) empirical eigenvalues exceed the threshold \(2\rho_{\tau_e-1}\), and hence \(\hat m_e=m\).
\end{proof}

Once Lemma~\ref{lemma:unknown_rank_identification} has identified the rank, the
remaining representation and imputation guarantees exactly follow the known-rank
ones (conditional on the event identified above). The only bookkeeping is to wait until both prerequisites hold: the rank
must be identified, and the burn-in condition for stable imputation must have
passed. This is why we introduce the synchronization time
\(t_{\rm id}:=\max\{t_b,t_{\rm rank}\}\).

\begin{lemma}[Representation event after rank identification]
\label{lemma:unknown_rank_representation}
Let
\(
t_{\rm id}:=\max\{t_b,t_{\rm rank}\},
\)
and let \(\mathcal G_{\rm rep}\) denote the subspace and imputation good event of Lemma~\ref{lemma:sub_space_lemma} and Lemma~\ref{lemma: imputation_error}. On the event
\(
\mathcal G_{\rm rank}\cap \mathcal G_{\rm rep},
\)
every epoch \(e\) with \(\tau_e-1\ge t_{\rm id}\) satisfies \(\hat m_e=m\). Consequently, the adaptive basis \(\hU_e\) coincides with the known-rank epoch basis, and the subspace and imputation guarantees of Lemma~\ref{lemma:sub_space_lemma} and Lemma~\ref{lemma: imputation_error} hold for epoch \(e\) as stated.
\end{lemma}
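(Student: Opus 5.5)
The claim reduces to two earlier facts, Lemma~\ref{lemma:unknown_rank_identification} and the known-rank guarantees. The only real work is bookkeeping about which estimator and which time index each guarantee refers to.

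\emph{Step 1 (rank).} Fix an epoch \(e\) with \(\tau_e-1\ge t_{\rm id}\). Since \(t_{\rm id}\ge t_{\rm rank}\), we have \(\tau_e-1\ge t_{\rm rank}\). Lemma~\ref{lemma:unknown_rank_identification} then applies on \(\mathcal G_{\rm rank}\) and gives \(\hat m_e=m\).

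\emph{Step 2 (identical basis).} The rank-adaptive basis is the top-\(\hat m_e\) eigenvector matrix of \(\dot\Sigma_{\tau_e-1}\). The known-rank Algorithm~\ref{alg:bandits} uses the top-\(m\) eigenvectors of the same matrix. Both are computed from the same pre-epoch masked decision sets, which are i.i.d.\ and independent of the learner's policy. So once \(\hat m_e=m\), the two bases coincide.

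Two subtleties need a sentence each. First, if \(\lambda_m(\dot\Sigma_{\tau_e-1})=\lambda_{m+1}(\dot\Sigma_{\tau_e-1})\), the top-\(m\) eigenvectors would be ambiguous. This cannot happen here: the proof of Lemma~\ref{lemma:unknown_rank_identification} shows \(\hat\lambda_{\tau_e-1,m}\ge 3\Delta_m/4>\rho_{\tau_e-1}\ge\hat\lambda_{\tau_e-1,m+1}\). Hence the projector \(\hP_e\) is uniquely determined. Any residual rotation freedom inside the top-\(m\) eigenspace does not matter, because every downstream quantity (the projector, the imputed vector, and \(\lambda_{\min}\) of the observed Gram matrix) is invariant under right rotations, as noted in the proof of Lemma~\ref{lemma: imputation_error}. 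Second, \(\dot\Sigma_t\) does not depend on past actions, so the estimator is literally the same random matrix in both procedures, not merely equal in distribution.

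\emph{Step 3 (transfer of guarantees).} On \(\mathcal G_{\rm rep}\), Lemma~\ref{lemma:sub_space_lemma} gives \(\|\hP_t-\Pb\|_2\le\epsilon_t\) for all \(t\), uniformly. This is a statement about the top-\(m\) eigenprojector of \(\dot\Sigma_t\), so it holds at \(t=\tau_e-1\) for the adaptive basis. Since \(\tau_e-1\ge t_{\rm id}\ge t_b\), we get \(\epsilon_e\le\epsilon_{t_b}\le p/32\). Therefore Lemma~\ref{lemma:min_eigenvalue_uu} applies, given the true-Gram conditioning event of Lemma~\ref{lemma:true_observed_gram_conditioning} that is contained in \(\mathcal G_{\rm rep}\). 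The argument of Lemma~\ref{lemma: imputation_error} then yields the imputation bound for all \(t\in\mathfrak T_e\) and all arms.

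The main subtlety is that Lemma~\ref{lemma: imputation_error} is stated for epochs of the known-rank algorithm, whose first epoch starts at \(t_b+1\). In the adaptive algorithm, epoch starts may differ if the schedule is anchored differently. I would handle this by observing that the proof of Lemma~\ref{lemma: imputation_error} uses only two things: the uniform-in-\(t\) subspace event, and the condition \(\tau_e-1\ge t_b\). Both hold for any epoch with \(\tau_e-1\ge t_{\rm id}\), so the argument applies verbatim.
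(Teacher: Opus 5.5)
Your proposal is correct and follows the same route as the paper: use \(t_{\rm id}\ge t_{\rm rank}\) to invoke Lemma~\ref{lemma:unknown_rank_identification} and get \(\hat m_e=m\), identify the adaptive basis with the known-rank top-\(m\) eigenbasis of \(\dot\Sigma_{\tau_e-1}\), and use \(\tau_e-1\ge t_{\rm id}\ge t_b\) to apply Lemmas~\ref{lemma:sub_space_lemma} and~\ref{lemma: imputation_error} verbatim. Your extra remarks are sound refinements of points the paper leaves implicit: the strict gap \(\hat\lambda_{\tau_e-1,m}>\hat\lambda_{\tau_e-1,m+1}\) ruling out eigenvalue ties, rotation invariance of the projector and the imputation, the policy-independence of \(\dot\Sigma_t\), and the observation that the imputation argument needs only the uniform subspace event plus \(\tau_e-1\ge t_b\).
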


\begin{proof}
Fix an epoch \(e\) with \(\tau_e-1\ge t_{\rm id}\). Since
\(t_{\rm id}\ge t_{\rm rank}\), Lemma~\ref{lemma:unknown_rank_identification}
gives
\(
\hat m_e=m,
\)
so the adaptive basis \(\hU_e\) is the top-\(m\) eigenspace of \(\dot\Sigma_{\tau_e-1}\); this is exactly the basis that the known-rank algorithm would use at the start of the epoch. Since
\(t_{\rm id}\ge t_b\), we also have \(\tau_e-1\ge t_b\), and therefore, on
\(\mathcal G_{\rm rep}\), the conclusions of Lemma~\ref{lemma:sub_space_lemma} and of Lemma~\ref{lemma: imputation_error} apply verbatim to \(\hU_e\).
\end{proof}

It remains to account for the few epochs before both conditions hold.
We isolate the first epoch whose start time is beyond \(t_{\rm id}\); all
earlier rounds will be charged directly, and all later epochs can use the
known-rank analysis.

{\bf First correctly ranked epoch.}
Let
\[
e_\star
:=
\min\{e:\tau_e-1\ge t_{\rm id}\},
\qquad
\tau_\star:=\tau_{e_\star}.
\]
Since the epoch starts are on a doubling schedule, we have that
\begin{equation}
\label{eq:tau_star_bound}
\tau_\star\le 2(t_{\rm id}+1).
\end{equation}
Indeed, if \(\tau_\star\) is the first epoch start at least \(t_{\rm id}\), then the previous epoch start, if it exists, is smaller than \(t_{\rm id}\), and the next epoch start is twice the previous one. The additive \(1\) covers the case where \(t_{\rm id}\) is below the first epoch start. We can now state the explicit regret theorem. Let
\[
E_T:=\lceil \log_2 T\rceil+1,
\qquad
\delta_e:=\frac{\delta_{\rm oful}}{E_T}.
\]
As in the known-rank analysis, we set the regularization to \(\lambda:=4B_X^2\) and define
\[
G_T
:=
m\log\!\left(1+\frac{T}{m}\right),
\qquad
H_T
:=
2\log\!\left(\frac{E_T}{\delta_{\rm oful}}\right),
\qquad
A_{\rm rep}
:=
4\sqrt{2}\,C_{\rm sub}SB_X
\frac{\kappa}{p^2}
\sqrt{
\frac{m}{K}
\log\!\left(\frac{8dT}{\delta_{\rm rep}}\right)
},
\]
matching Lemma~\ref{lemma:uniform_epoch_envelopes}.
By Lemma~\ref{lemma:sub_space_lemma},
for every correctly ranked epoch \(e\),
\begin{equation}
\label{eq:be_unknown_rank_bound}
b_e
:=
SB_X\left(2+\frac{2}{p}\right)\epsilon_{\tau_e-1}
\le
\frac{A_{\rm rep}}{\sqrt{\tau_e}}.
\end{equation}
{The inequality uses \(p\le 1\), so \(2+2/p\le 4/p\), together with \(\tau_e-1\ge\tau_e/2\), exactly as in the proof of Lemma~\ref{lemma:uniform_epoch_envelopes}.}

We can now state the regret bound. The theorem is the same
known-rank epoch summation, with one additional cost for the rounds before the
first correctly ranked and stably imputable epoch.

\begin{theorem}[Restatement of Theorem~\ref{thm:unknown_rank}, with explicit constants]
Assume the hypotheses of Theorem~\ref{thm:main_regret}, except that \(m\) is not known to the algorithm. Let the rank-adaptive algorithm use the selector in Equation~\eqref{eq:unknown_rank_selector}. Suppose
\[
\Pr(\mathcal G_{\rm rank})\ge 1-\delta_{\rm rank},
\qquad
\Pr(\mathcal G_{\rm rep})\ge 1-\delta_{\rm rep},
\]
and allocate the OFUL confidence probabilities as above. Then, with probability at least
\(
1-\delta_{\rm rank}-\delta_{\rm rep}-\delta_{\rm oful},
\)
the regret of the rank-adaptive epoch-wise algorithm satisfies
\begin{align}
R_T
&\le
4B_XS(t_{\rm id}+1)
+
8\sqrt{2}S\sqrt{\lambda G_TT}
+
8\sqrt{2}R\sqrt{G_T(G_T+H_T)T}
+
8A_{\rm rep}(\sqrt{G_T}+1)\sqrt T.
\label{eq:unknown_rank_explicit_regret}
\end{align}
Consequently, up to logarithmic factors,
\[
R_T
\le
O(B_XS\,t_{\rm id})
+
\widetilde O\!\left((R+SB_X)\,m\sqrt T\right)
+
\widetilde O\!\left(
SB_X
\frac{\kappa m\sqrt T}{p^2\sqrt K}
\right),
\]
where \(t_{\rm id}=\max\{t_b,\,t_{\rm rank}\}\), with \(t_{\rm rank}\) as in Equation~\eqref{eq:t_rank_explicit}.
\end{theorem}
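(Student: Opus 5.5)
We work on the intersection \(\mathcal G:=\mathcal G_{\rm rank}\cap\mathcal G_{\rm rep}\cap\mathcal E_{\rm conf}\), where \(\mathcal E_{\rm conf}\) is the intersection of the epoch-wise confidence events of Theorem~\ref{thm:confidence_set_m_t_main} over the correctly ranked epochs \(e\ge e_\star\). Each such epoch is invoked at level \(\delta_e=\delta_{\rm oful}/E_T\), and there are at most \(E_T\) epochs. A union bound then gives \(\Pr(\mathcal G)\ge1-\delta_{\rm rank}-\delta_{\rm rep}-\delta_{\rm oful}\). Here Theorem~\ref{thm:confidence_set_m_t_main} is applied conditionally on the representation event, exactly as in the known-rank proof. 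Its hypotheses hold for \(e\ge e_\star\) because Lemma~\ref{lemma:unknown_rank_representation} shows that \(\hU_e\) is the known-rank basis there.

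Next we split the regret at \(\tau_\star\). Every round \(t<\tau_\star\) (burn-in, plus any epochs with \(\hat m_e\ne m\) or fallback play) is charged the worst-case instantaneous regret \(2B_XS\), which follows from Cauchy--Schwarz. By \eqref{eq:tau_star_bound} this contributes at most \(2B_XS\,\tau_\star\le4B_XS(t_{\rm id}+1)\), giving the first term of \eqref{eq:unknown_rank_explicit_regret}. For each epoch \(e\ge e_\star\), Lemma~\ref{lemma:unknown_rank_representation} shows that everything the algorithm computes (basis, imputations, features, \(V_{e,t}\), \(\beta_{e,t}\)) coincides with the known-rank algorithm's. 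Since \(\tau_e-1\ge t_b\), the envelope bounds of Lemma~\ref{lemma:uniform_epoch_envelopes} also hold: \(B_e\le2B_X\), so \(\lambda=4B_X^2\ge B_e^2\); \(\Gamma_e\le G_T\); and \(b_e\le A_{\rm rep}/\sqrt{\tau_e}\) by \eqref{eq:be_unknown_rank_bound}. Lemma~\ref{lemma:epoch_regret} therefore applies verbatim with \(H_e=H_T\).

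Summing over \(e\ge e_\star\), we reuse the doubling-schedule bounds \(\sum_e\sqrt{n_e}\le\sum_e\sqrt{\tau_e}\le4\sqrt T\) and \(\sum_e n_e/\sqrt{\tau_e}\le4\sqrt T\). Restricting the sum to \(e\ge e_\star\) only decreases it, and the geometric-series argument is unchanged because \(\tau_e=2^{e-e_\star}\tau_\star\) still doubles. This yields the remaining three terms of \eqref{eq:unknown_rank_explicit_regret}. The \(\widetilde O\) form follows exactly as at the end of the proof of Theorem~\ref{thm:main_regret}, using \(\lambda=4B_X^2\), \(G_T+H_T=\widetilde O(m)\) and the definition of \(A_{\rm rep}\).

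The main obstacle is the measurability of the confidence events for the adaptive epochs. The index \(e_\star\) is deterministic, since \(t_{\rm id}\) is. However, the self-normalized bound must be applied conditionally on the epoch-start \(\sigma\)-field, under which \(\hU_e\) and hence \(z_s\) are predictable. This holds regardless of \(\hat m_e\), because the selector uses only pre-epoch decision sets. I would state explicitly that the confidence event for each \(e\ge e_\star\) is defined for the algorithm's actual features of dimension \(\hat m_e\) and holds with probability at least \(1-\delta_e\) whatever \(\hat m_e\) is. The coincidence with the known-rank quantities is then needed only on \(\mathcal G_{\rm rank}\cap\mathcal G_{\rm rep}\). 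This avoids conditioning on the event \(\{\hat m_e=m\}\), which could distort the noise distribution.
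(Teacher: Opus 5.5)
Your proposal is correct and follows the paper's proof essentially step for step. It uses the same good event \(\mathcal G_{\rm rank}\cap\mathcal G_{\rm rep}\cap\mathcal G_{\rm oful}\) with the same union bound, charges all rounds before \(\tau_\star\) at the worst-case rate via \eqref{eq:tau_star_bound}, and reuses Lemma~\ref{lemma:epoch_regret} and the doubling-schedule sums verbatim over the epochs \(e\ge e_\star\). Your closing remark is a sound tightening of a point the paper leaves implicit when it applies Theorem~\ref{thm:confidence_set_m_t_main} ``conditional on'' the representation event: you take the self-normalized noise event to hold for the algorithm's actual \(\hat m_e\)-dimensional predictable features whatever \(\hat m_e\) is, and use the rank and representation events only deterministically.
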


\begin{proof}
The proof follows that of Theorem~\ref{thm:main_regret} in Appendix~\ref{app:regret-analysis-full}; we describe the two modifications. First, the good event additionally includes the rank event: we work on
\(
\mathcal G_{\rm rank}\cap \mathcal G_{\rm rep}\cap \mathcal G_{\rm oful},
\)
where \(\mathcal G_{\rm oful}\) is the intersection of all epoch-wise OFUL confidence events. Since
\(
\sum_{e=0}^{E_T-1}\delta_e
=
\delta_{\rm oful},
\)
the same union bound as before gives
\[
\Pr(\mathcal G_{\rm rank}\cap \mathcal G_{\rm rep}\cap \mathcal G_{\rm oful})
\ge
1-\delta_{\rm rank}-\delta_{\rm rep}-\delta_{\rm oful}.
\]

Second, the worst-case portion of the horizon is potentially longer: rather than only the burn-in rounds, we charge every round before the first correctly ranked and stably imputable epoch at the worst-case rate. Let
\[
e_\star:=\min\{e:\tau_e\ge t_{\rm id}\},
\qquad
\tau_\star:=\tau_{e_\star}.
\]
Before \(\tau_\star\), the rank may be wrong or imputation may not yet
be stable. Since each one-step regret is
at most \(2B_XS\) (by the Cauchy--Schwarz inequality, as before), Equation~\eqref{eq:tau_star_bound} gives
\[
\sum_{t<\tau_\star}(\mu_{t,i_t^\star}-\mu_{t,i_t})
\le
2B_XS\,\tau_\star
\le
4B_XS(t_{\rm id}+1).
\]

From \(\tau_\star\) onward, the analysis is identical to the known-rank case. For every epoch \(e\ge e_\star\),
Lemma~\ref{lemma:unknown_rank_representation} shows that the adaptive algorithm
uses the correct \(m\)-dimensional representation with the same subspace
and imputation guarantees as in the known-rank proof, so
Lemma~\ref{lemma:epoch_regret} applies with \(H_e=H_T\) and yields exactly the epoch bound \eqref{eq:main_proof_imported_epoch_bound} from the proof of Theorem~\ref{thm:main_regret}. The summation over epochs is also unchanged: the doubling-schedule bounds \(\sum_{e\ge e_\star}\sqrt{n_e}\le 4\sqrt T\) and \(\sum_{e\ge e_\star}n_e/\sqrt{\tau_e}\le 4\sqrt T\), established in that proof, hold verbatim here since the sums run over a subset of the epochs. We therefore have:
\begin{align*}
\sum_{e\ge e_\star}R_e
&\le
8\sqrt{2}S\sqrt{\lambda G_TT}
+
8\sqrt{2}R\sqrt{G_T(G_T+H_T)T}
+
8A_{\rm rep}(\sqrt{G_T}+1)\sqrt T.
\end{align*}
Adding the pre-identification regret \(4B_XS(t_{\rm id}+1)\) establishes
Equation~\eqref{eq:unknown_rank_explicit_regret}.
\end{proof}

{\bf Remark. }
The theorem makes the additional cost of unknown rank explicit. Rank identification requires
\[
\rho_t\le \Delta_m/4,
\]
which yields the time \(t_{\rm rank}\) in Equation~\eqref{eq:t_rank_explicit}. After the first epoch beginning after \(t_{\rm id}=\max\{t_b,t_{\rm rank}\}\), the algorithm is identical to the known-rank epoch-wise method. Moreover, comparing Equations~\eqref{eq:t_rank_explicit} and~\eqref{eq:t_b}, we have \(t_{\rm rank}=\widetilde O\big(\kappa^2m/(p^2K)\big)\) while \(t_b=\widetilde O\big(\kappa^2m/(p^4K)\big)\), so \(t_{\rm rank}\le t_b\) up to constants whenever \(\delta_{\rm rank}\) and \(\delta\) are of the same order. Hence \(t_{\rm id}=t_b\) up to constants, and the identification cost is absorbed by the imputation burn-in: stable imputation requires a stronger representation condition (accuracy at scale \(p\)) than merely separating the nonzero and zero eigenvalues of the covariance matrix.

\section{Additional Experimental Details}
\label{app:experimental_details}

\subsection{Additional Experimental Diagnostics}
This appendix collects additional experiments supporting Section~\ref{sec:simulations}: the full-history synthetic counterpart to the main synthetic experiment, real-feature synthetic tasks using optical digit covariates~\citep{alpaydin1998optical}, rank recovery and rank-misspecification diagnostics, warm-start comparisons, MNIST product-context diagnostics~\citep{lecun1998gradient}, and a text product-context experiment using 20 Newsgroups~\citep{lang1995newsweeder} with an approximately low-rank nuisance tail.

The code and scripts for reproducing these results are available at: \\ \url{https://github.com/gautamdasarathy/tofu-pov-arxiv}.

Figure~\ref{fig:app_synthetic_fh} repeats the controlled synthetic experiment from Figure~\ref{fig:synthetic_main_experiments} using the full-history replay variants. These are the practical variants used in the real-feature experiments in the main text. As expected, replaying past rewards after each frozen representation update lowers regret relative to the restart version while preserving the same missingness trend.

\begin{figure*}[!htb]
    \centering
    \begin{subfigure}[b]{0.89\textwidth}
        \includegraphics[width=\textwidth]{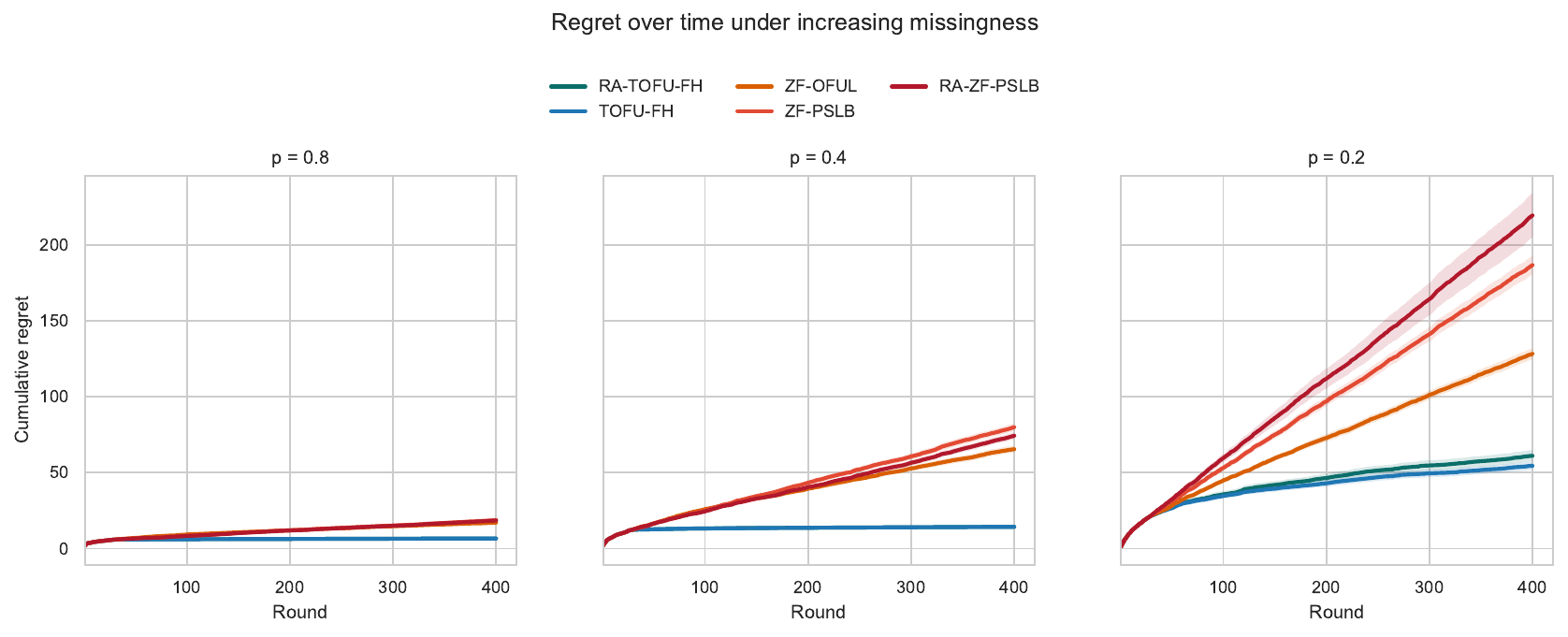}
        \caption{Regret over time.}
        \label{fig:app_synthetic_fh_regret}
    \end{subfigure}
    \hfill
    \begin{subfigure}[b]{0.58\textwidth}
        \includegraphics[width=\textwidth]{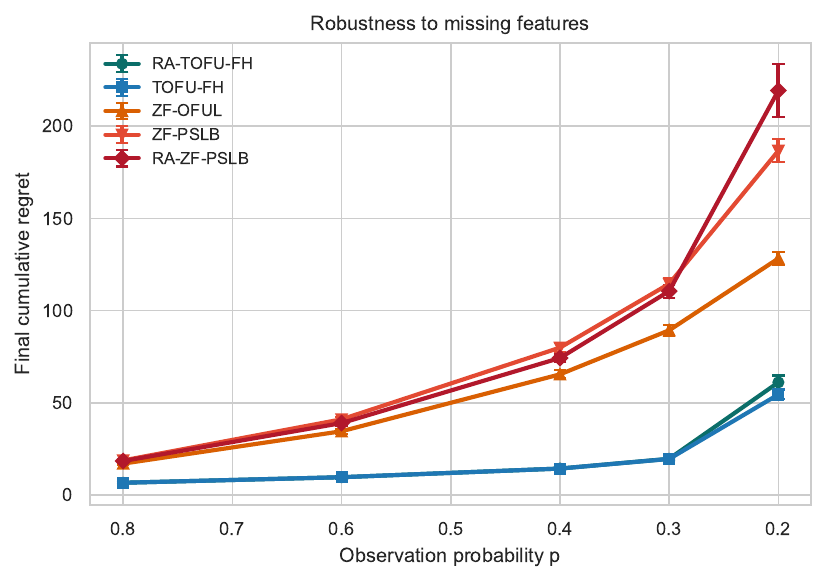}
        \caption{Final regret vs. \(p\).}
        \label{fig:app_synthetic_fh_missingness}
    \end{subfigure}
    \caption{Full-history synthetic counterpart. TOFU-FH and RA-TOFU-FH use the same subspace-estimation mechanism as TOFU and RA-TOFU, but replay previously observed rewards after each representation update.}
    \label{fig:app_synthetic_fh}
\end{figure*}

Figure~\ref{fig:app_real_feature_synthetic} reports a ``quasi-synthetic'' digit experiment. The raw covariates are optical digit features~\citep{alpaydin1998optical}, so the candidate arms are no longer drawn from the Gaussian latent model used in the main synthetic study. At the same time, we keep the reward geometry controlled: the arm latents are constructed from these real covariates, embedded into a rank-\(m^\star\) subspace, and then masked coordinatewise using the same Bernoulli observation model as in the theory. This lets us test whether the corrected low-rank mechanism remains useful when the feature distribution is less idealized, while still retaining a known reward-relevant subspace. The qualitative pattern matches the fully synthetic experiment. When \(p\) is large, zero-imputed OFUL is competitive; as \(p\) decreases, the ambient zero-filled representation becomes increasingly distorted, and TOFU separates from the baselines by exploiting the recovered low-rank structure.

\begin{figure}[!htb]
    \centering
    \begin{subfigure}[b]{0.48\textwidth}
        \includegraphics[width=\textwidth]{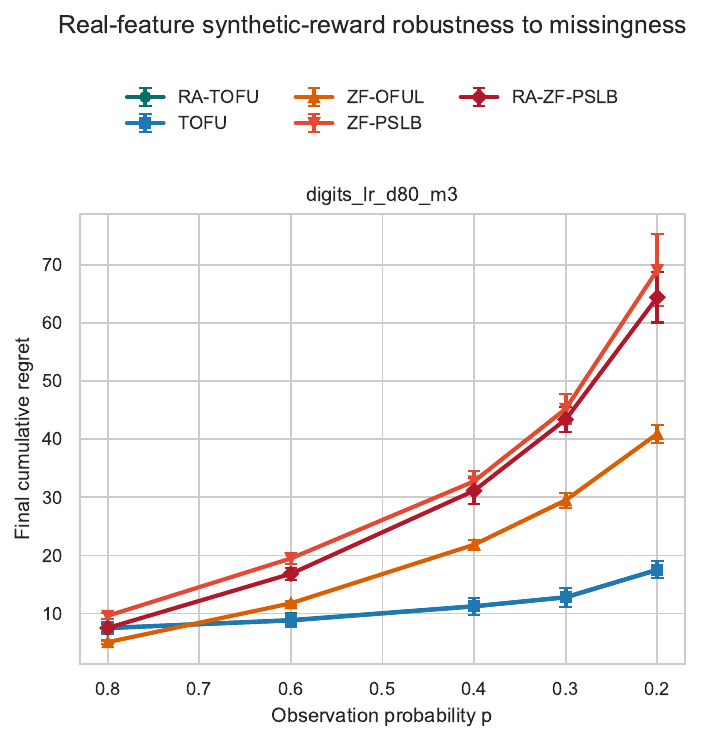}
        \caption{Final regret vs. \(p\).}
        \label{fig:app_real_feature_missingness}
    \end{subfigure}
    \hfill
    \begin{subfigure}[b]{0.9\textwidth}
        \includegraphics[width=\textwidth]{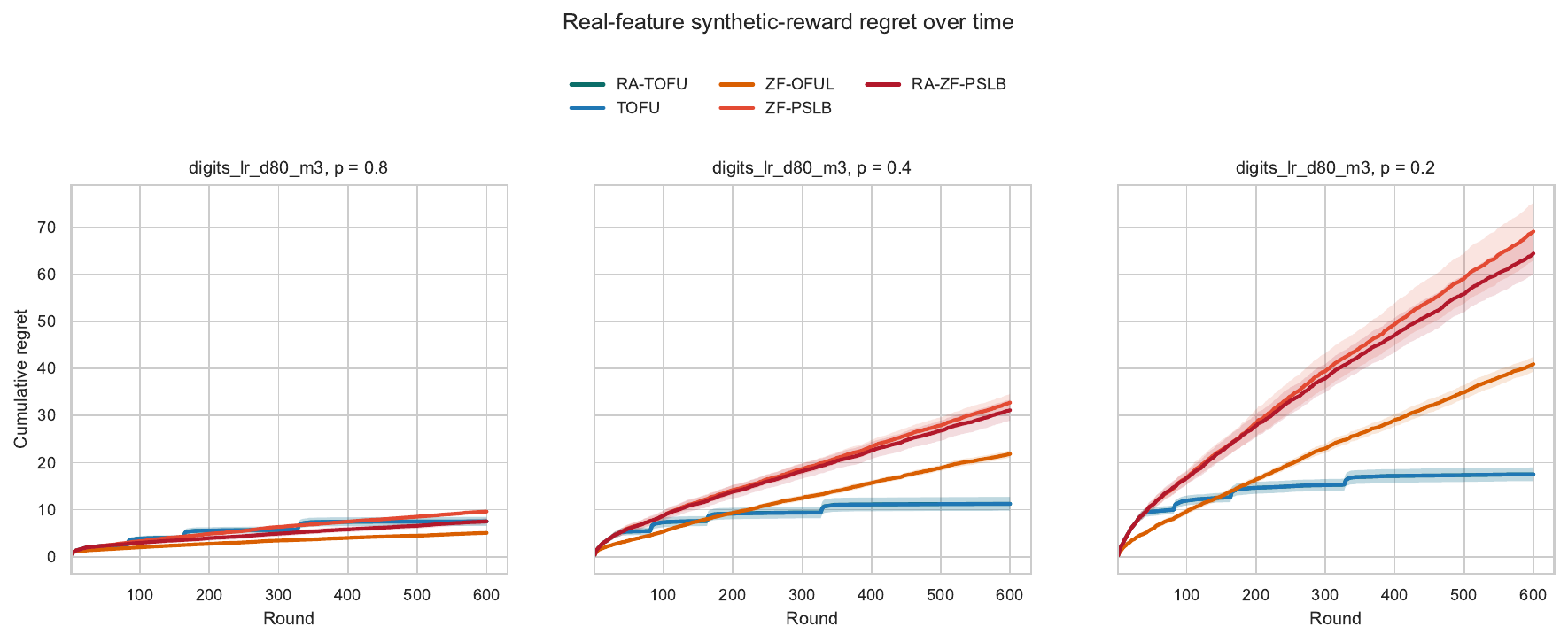}
        \caption{Regret trajectories.}
        \label{fig:app_real_feature_regret}
    \end{subfigure}
    \caption{Real-feature synthetic experiment. Real covariates replace Gaussian arms, but the reward-relevant geometry remains low-rank; TOFU benefits most when missingness is substantial.}
    \label{fig:app_real_feature_synthetic}
\end{figure}

Figures~\ref{fig:app_adaptive_rank_summary} and
\ref{fig:app_rank_and_warm_start} probe the two implementation choices that are
suppressed in the main synthetic figure: how the rank is selected, and how much
benefit comes from reusing past reward data. Figure~\ref{fig:app_adaptive_rank_summary}
shows the adaptive-rank full-history method together with the final selected
ranks, making visible whether the thresholding rule is stabilizing near the
intended dimension. Figure~\ref{fig:app_rank_and_warm_start} then separates the
diagnostics. The fixed-rank misspecification panel shows the cost of choosing a
rank below or above the true value; the rank-recovery panel checks that the
corrected-covariance spectrum contains a usable eigengap; and the warm-start
panel isolates the finite-sample gain from re-imputing previously selected arms
and replaying their rewards after the first learned subspace is formed.

\begin{figure}[!htb]
    \centering
    \includegraphics[width=0.72\textwidth]{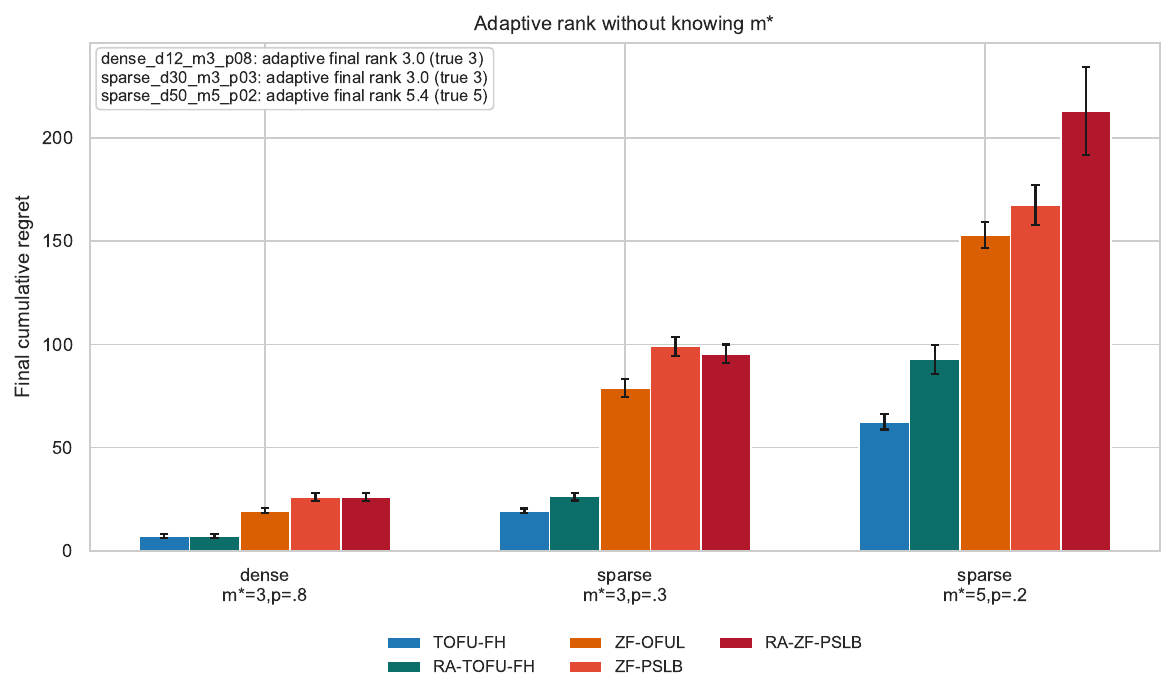}
    \caption{Synthetic adaptive-rank summary. The final selected ranks are shown here rather than in the main paper; RA-TOFU-FH tracks the known-rank TOFU-FH method closely when the eigenspectrum separates the signal directions from the null directions.}
    \label{fig:app_adaptive_rank_summary}
\end{figure}

\begin{figure*}[!htb]
    \centering
    \begin{subfigure}[b]{0.49\textwidth}
        \includegraphics[width=\textwidth]{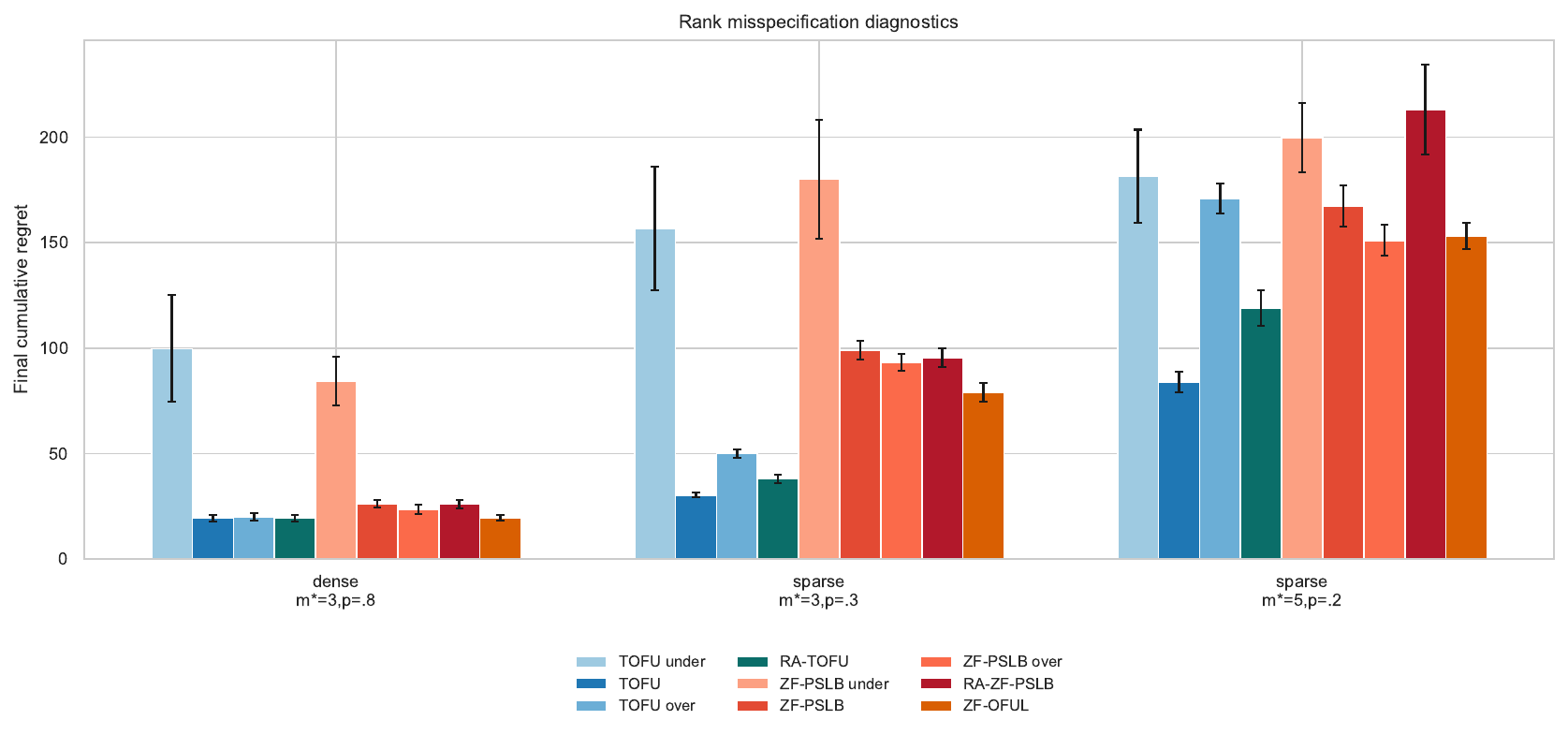}
        \caption{Fixed-rank misspecification.}
        \label{fig:app_rank_misspecification}
    \end{subfigure}
    \hfill
    \begin{subfigure}[b]{0.49\textwidth}
        \includegraphics[width=\textwidth]{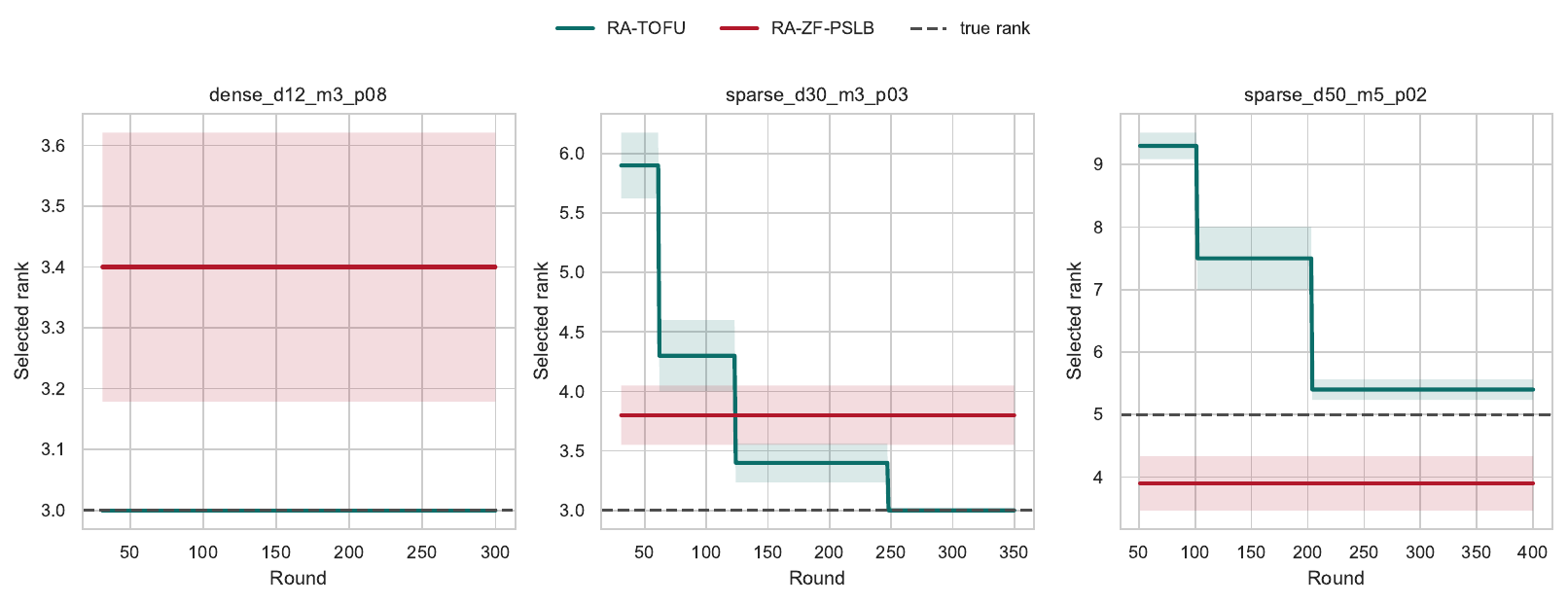}
        \caption{Rank recovery.}
        \label{fig:app_rank_recovery}
    \end{subfigure}
    \hfill
    \begin{subfigure}[b]{0.5\textwidth}
        \includegraphics[width=\textwidth]{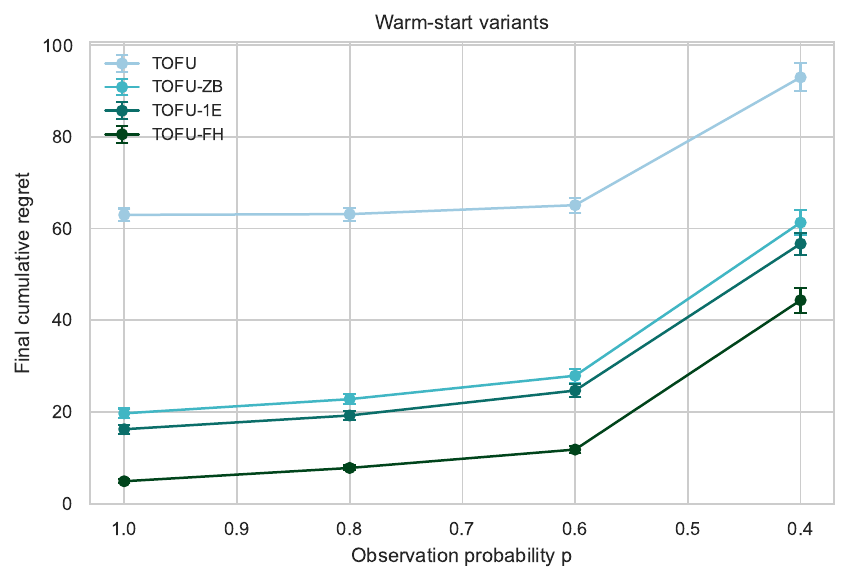}
        \caption{Warm-start variants.}
        \label{fig:app_warm_start}
    \end{subfigure}
    \caption{Synthetic diagnostics. Adaptive TOFU reduces sensitivity to rank choice, the corrected-covariance spectrum identifies the relevant rank, and warm-starting improves finite-sample performance.}
    \label{fig:app_rank_and_warm_start}
\end{figure*}

Figure~\ref{fig:app_cnn_diagnostics} gives additional diagnostics for the MNIST product-context experiment in Figure~\ref{fig:mnist_cnn_product_context}. This experiment starts from a supervised image model rather than a synthetic latent distribution. We train a small CNN on MNIST~\citep{lecun1998gradient}, freeze it, and use its \(m=4\)-dimensional penultimate representation \(h(x)\) together with the final classification head. If \(w_k\in\mathbb R^4\) is the class-\(k\) weight vector, the bandit arm for label \(k\) is the product context \(h(x)\odot w_k\), so the linear bandit reward preserves the classifier score through \(\langle h(x)\odot w_k,\mathbf 1\rangle=w_k^\top h(x)\). These ten class arms are then lifted into \(\mathbb R^{100}\) by a fixed orthonormal embedding and masked coordinatewise. Thus the experiment uses real image-derived representations, but the low-rank bandit geometry is known by construction. The diagnostics check that this construction is behaving as intended: the adaptive-rank estimates concentrate near the construction rank \(m=4\), and the fixed-rank validation sweep shows that the fixed-rank baselines used in the main comparison were chosen on held-out validation seeds rather than tuned on the reporting seeds.

\begin{figure}[!htb]
    \centering
    \begin{subfigure}[b]{0.48\textwidth}
        \includegraphics[width=\textwidth]{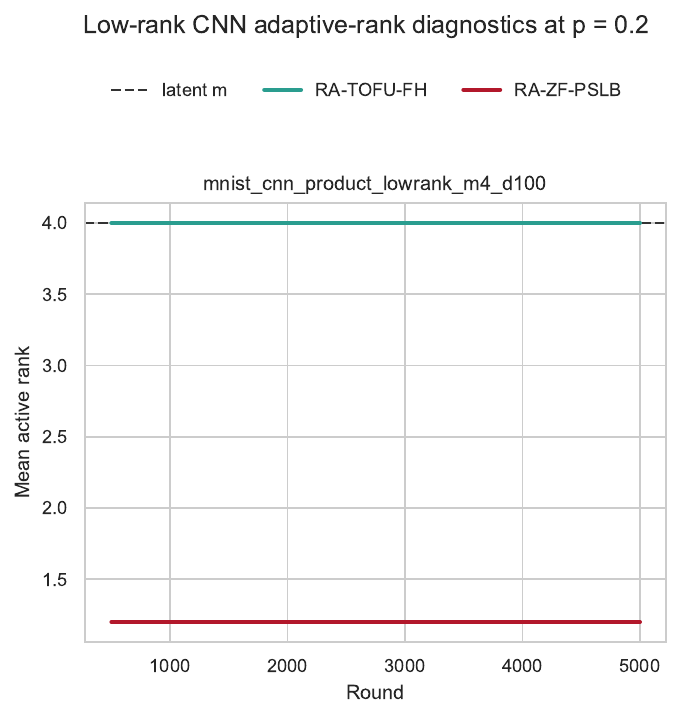}
        \caption{Adaptive-rank diagnostics.}
        \label{fig:app_cnn_rank_diagnostics}
    \end{subfigure}
    \hfill
    \begin{subfigure}[b]{0.48\textwidth}
        \includegraphics[width=\textwidth]{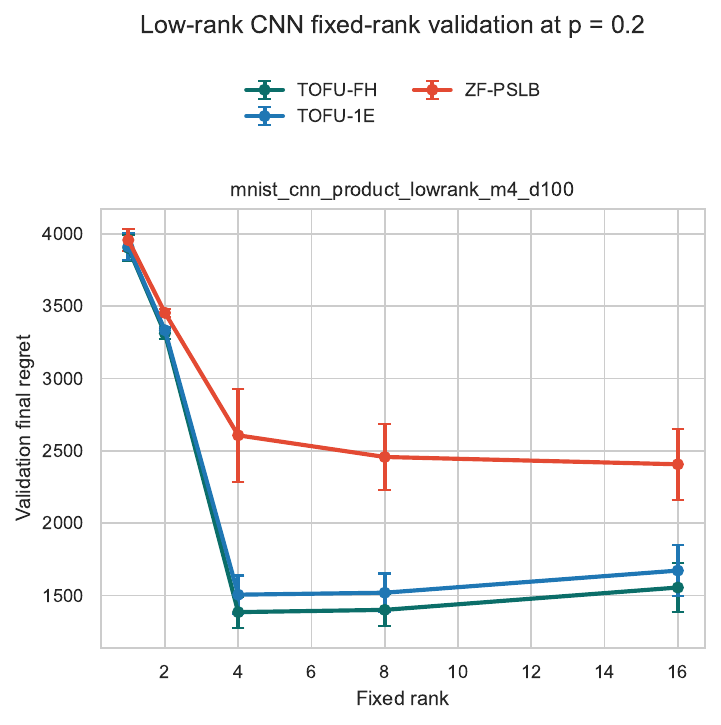}
        \caption{Fixed-rank validation.}
        \label{fig:app_cnn_fixed_rank_validation}
    \end{subfigure}
    \caption{MNIST product-context diagnostics. The rank behavior is consistent with the constructed low-rank feature map, and fixed-rank comparisons use validation choices disjoint from the reporting seeds.}
    \label{fig:app_cnn_diagnostics}
\end{figure}

\subsection{Text Product-Context Experiment}
\label{app:text_product_context}

As a second real-data problem, we construct a text product-context bandit from a four-class 20 Newsgroups classification task~\citep{lang1995newsweeder}. TF-IDF features are compressed by TruncatedSVD and fit with a no-intercept multinomial logistic-regression classifier. For document \(x\) and class \(k\), with document embedding \(h(x)\in\mathbb R^m\) and class weight \(w_k\in\mathbb R^m\), the bandit arm is the coordinatewise product context \(X_k(x)=h(x)\odot w_k\). This preserves the classifier score since \(\langle X_k(x),\mathbf 1_m\rangle=w_k^\top h(x)\). The low-dimensional product-context arms are lifted into ambient dimension \(d=1000\), and we add a reward-irrelevant orthogonal nuisance tail whose top empirical eigenvalue is \(0.25\) times the smallest retained signal eigenvalue. Thus the instance is approximately low-rank: the reward-relevant subspace is recoverable, but ambient methods must learn through many irrelevant masked coordinates.

The experiment uses \(m=20\), \(K=4\), horizon \(T=8000\), five reporting seeds, and observation probabilities \(p\in\{0.4,0.3,0.2\}\). The underlying text classifier has held-out accuracy about \(0.860\). Figure~\ref{fig:app_text_product_context} and Table~\ref{tab:text_product_context} show the same qualitative behavior as the image product-context experiment. Fixed-rank full-history \algname{} has the lowest final regret at all tested missingness levels, adaptive \algname{} remains close, and masked PSLB is worse, especially at \(p=0.2\). The gains over zero-imputed OFUL are smaller than in MNIST but consistent across the sweep, giving a second real-data modality in which the corrected low-rank representation helps under coordinate missingness.

\begin{figure*}[!htb]
    \centering
    \begin{subfigure}[b]{0.49\textwidth}
        \includegraphics[width=\textwidth]{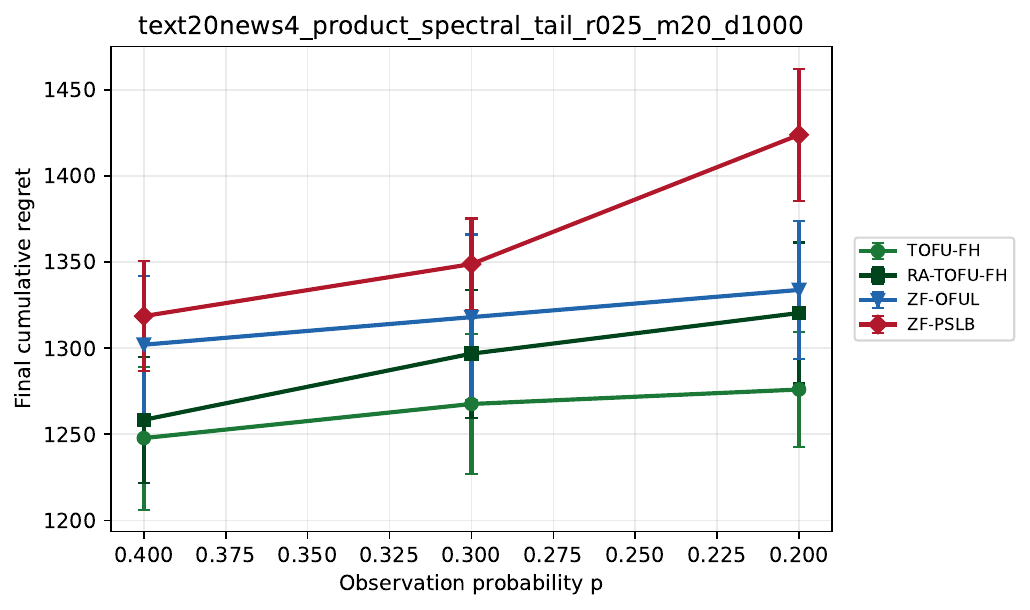}
        \caption{Final regret vs. \(p\).}
        \label{fig:app_text_product_missingness}
    \end{subfigure}
    \hfill
    \begin{subfigure}[b]{0.49\textwidth}
        \includegraphics[width=\textwidth]{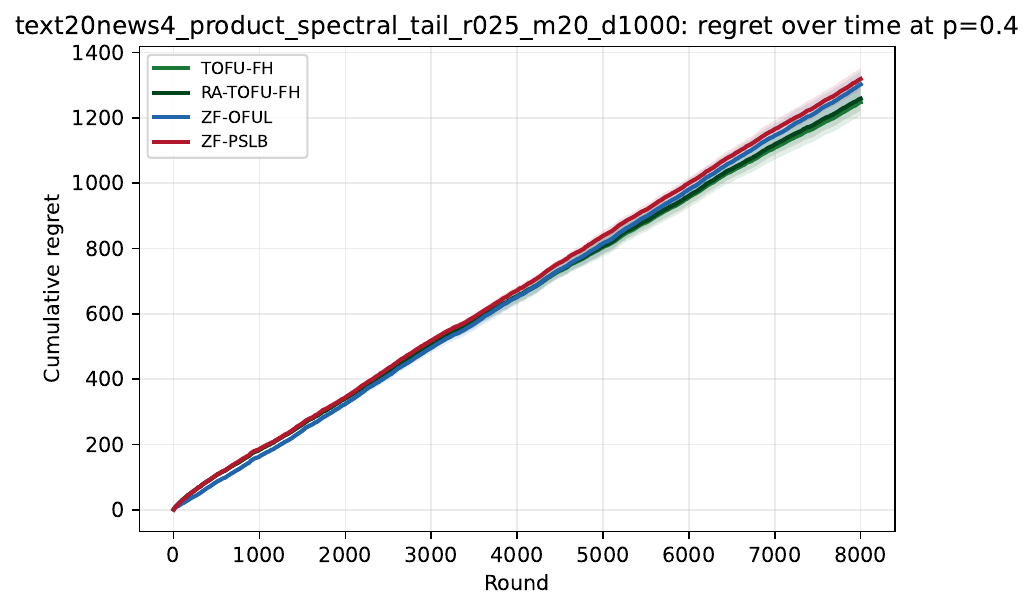}
        \caption{Regret trajectories at \(p=0.4\).}
        \label{fig:app_text_product_regret_p04}
    \end{subfigure}
    \hfill
    \begin{subfigure}[b]{0.52\textwidth}
        \includegraphics[width=\textwidth]{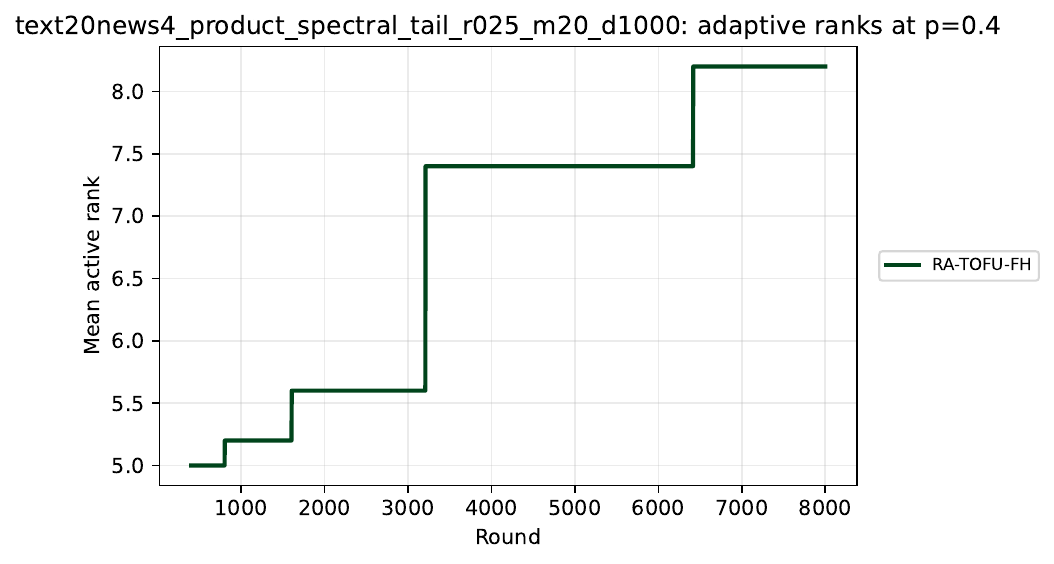}
        \caption{Adaptive-rank diagnostics.}
        \label{fig:app_text_product_rank}
    \end{subfigure}
    \caption{Text product-context experiment from four-class 20 Newsgroups. Fixed-rank full-history \algname{} is consistently best, while adaptive \algname{} remains competitive and masked PSLB is worse at heavier missingness.}
    \label{fig:app_text_product_context}
\end{figure*}

\begin{table}[t]
\centering
\small
\caption{Final cumulative regret in the text product-context experiment. Entries are mean \(\pm\) standard error over five reporting seeds.}
\label{tab:text_product_context}
\begin{tabular}{c c c c c}
\toprule
\(p\) &
Fixed-rank \algname{} &
Adaptive \algname{} &
Zero-imputed OFUL &
Masked PSLB \\
\midrule
\(0.4\) & \(1247.8\pm41.5\) & \(1258.4\pm36.6\) & \(1302.0\pm40.0\) & \(1318.6\pm31.9\) \\
\(0.3\) & \(1267.6\pm40.5\) & \(1296.8\pm37.2\) & \(1318.0\pm48.0\) & \(1348.8\pm26.4\) \\
\(0.2\) & \(1276.0\pm33.1\) & \(1320.4\pm40.9\) & \(1333.8\pm40.1\) & \(1423.8\pm38.5\) \\
\bottomrule
\end{tabular}
\end{table}

The rank diagnostic explains the small gap between fixed-rank and adaptive \algname{}. The fixed-rank methods use the construction rank \(m=20\), while adaptive \algname{} selects mean final ranks \(8.2,7.4,\) and \(5.6\) for \(p=0.4,0.3,\) and \(0.2\), respectively. Thus the adaptive selector is conservative on this approximately low-rank text instance, especially when missingness is heavier. Even with this lower selected rank, adaptive \algname{} remains close to the fixed-rank method and improves over the zero-imputed and masked-PSLB baselines, which is the main point of the diagnostic.

\subsection{Computational Resources}
The reported experiments were run on an Exxact TensorEX 2U rackmount machine with two AMD EPYC Rome 7542 processors (32 cores and 64 threads each), 1 TB DDR4 ECC memory, four NVIDIA A100 SXM4 GPUs with 40 GB memory each, a 2 TB NVMe OS drive, a 15.36 TB NVMe data drive, and Ubuntu 18.04. Some development and pilot runs were also executed on a 2021 Apple M1 Macbook with 64GB RAM, but the server configuration above is the conservative compute environment for reproducing the reported results. We do not report exact runtimes, since the experiments are small-scale validations of the theory rather than exhaustive compute benchmarks.